\documentclass{article}

\usepackage{microtype}
\usepackage{graphicx}
\usepackage{subcaption}
\usepackage{booktabs} % for professional tables
\usepackage{tabularx} % in preamble

\usepackage{longtable}
\usepackage{array}
\usepackage{caption} % for \captionof
\usepackage{placeins} % for float barrier

\newcolumntype{P}[1]{>{\raggedright\arraybackslash}p{#1}}

\usepackage{hyperref}

\usepackage[accepted]{icml2026}

\usepackage{amsmath}
\usepackage{amssymb}
\usepackage{mathtools}
\usepackage{amsthm}
\usepackage{dsfont} % For mathds symbol

\usepackage[capitalize,noabbrev]{cleveref}

\theoremstyle{plain}
\newtheorem{theorem}{Theorem}[section]

\theoremstyle{definition}

\theoremstyle{remark}

\definecolor{atompurple}{RGB}{117, 112, 179}
\definecolor{linegreen}{RGB}{27, 158, 119}

\begin{document}

\twocolumn[
  \icmltitle{Evaluating LLM Uncertainty in Long-Form Generation Using Deterministic Ground Truth}

  \icmlsetsymbol{equal}{*}

  \begin{icmlauthorlist}
    \icmlauthor{Ido Amit}{equal,tech}
    \icmlauthor{Ido Galil}{equal,nvidia}
    \icmlauthor{Ran El-Yaniv}{tech,nvidia}
  \end{icmlauthorlist}

  \icmlaffiliation{tech}{Technion}
  \icmlaffiliation{nvidia}{Nvidia}

  \icmlcorrespondingauthor{Ido Amit}{dodoamit198@gmail.com}
  \icmlcorrespondingauthor{Ido Galil}{idogalil.ig@gmail.com, igalil@nvidia.com}
  \icmlcorrespondingauthor{Ran El-Yaniv}{rani@cs.technion.ac.il, relyaniv@nvidia.com}

  \icmlkeywords{Uncertainty Estimation, Benchmark, LLMs}

  \vskip 0.3in
]

\printAffiliationsAndNotice{\icmlEqualContribution} 

\begin{abstract}
As LLMs generate increasingly long outputs, effective uncertainty estimation must identify errors at fine-grained levels rather than discard entire responses. While such methods exist, evaluating uncertainty at any resolution (token to an entire generation) is challenging and highly sensitive to label imperfections, making zero-noise benchmarks essential; yet, long-form generation benchmarks tend to rely on fallible labels rather than deterministic ground truth.
We introduce Single-answer Atomic Long-form Target (SALT), a benchmark of six procedurally generated tasks with single deterministic long textual ground truths, enabling unit-level evaluation of correctness, calibration, and ranking without external judges.
Equipped with SALT, our analysis of 50+ LLMs reveals key insights: We identify which confidence functions dominate each uncertainty aspect
and show that confidence ranking largely breaks at atomic resolution, even when clearer separability emerges at coarser line-level units.
SALT further enables controlled atom-level interventions throughout generation, revealing two separable drivers of future errors: propagation from corrupted prefixes, dominated by global context correctness, and bounded degradation from increasing answer-context length.
Finally, we demonstrate that reasoning, via Chain-of-Thought prompting or internalized through training, introduces a trade-off, improving accuracy while degrading confidence ranking.
These findings directly impact risk-critical applications requiring reliable error identification and mitigation. We release the Code at \href{https://github.com/IdoAmit198/SALT}{github.com/IdoAmit198/SALT}.
\end{abstract}

\begin{figure}[ht]
\begin{center}
\centerline{\includegraphics[width=\columnwidth]{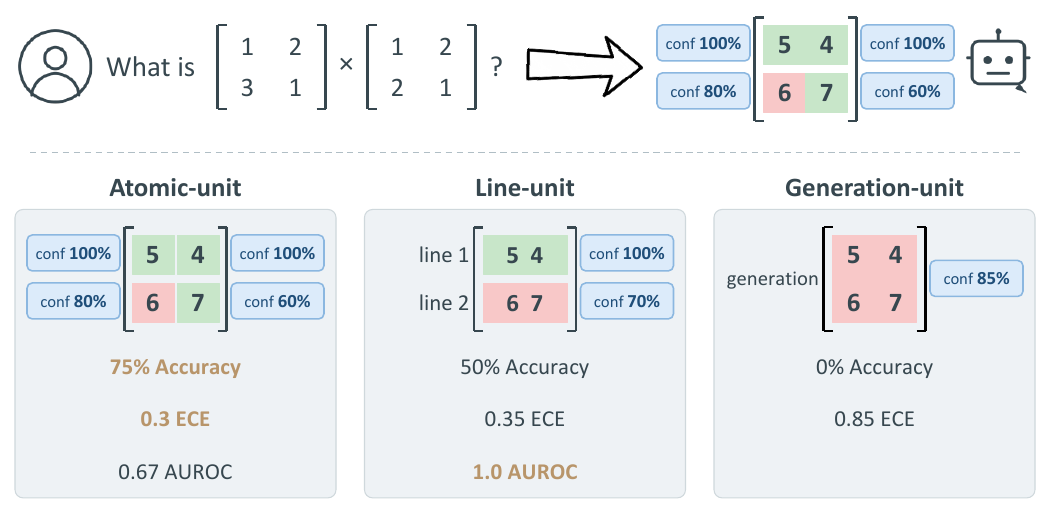}}
\caption{Granular uncertainty evaluation with SALT. Decomposing long-form generation into high-resolution units isolates localized errors, revealing precise calibration (ECE) and ranking (AUROC) signals obscured by coarse, generation-level evaluation.}
\label{fig: figure1}
\end{center}
\end{figure}

\section{Introduction}
\label{sec: introduction}

Large Language Models (LLMs) have demonstrated remarkable capabilities across a diverse array of Natural Language Processing (NLP) tasks, increasingly driving their adoption in real-world applications. However, deployment in high-stakes environments is fraught with risk due to their propensity for generating factually incorrect content. In domains such as healthcare, where LLMs are explored for initial patient assessment and decision-making aid, mistakes can have severe consequences \citep{DBLP:journals/corr/abs-2409-19492,DBLP:journals/corr/abs-2503-05777}.
To mitigate these risks, robust uncertainty estimation is a critical component for trustworthy integration, enabling mechanisms like \emph{selective generation} \citep{zablotskaia-etal-2023-uncertainty, NEURIPS2024_5a681512}, where the model signals when to abstain, and \emph{reliable decision support} \citep{Valente2021PersonalizedAR, Lindenmeyer2024InadequacyOC}, where calibrated probabilities inform human experts of the likelihood of error.

As modern LLMs increasingly generate long-form outputs, such as software repositories spanning hundreds of thousands of tokens, their generations naturally comprise many distinct semantic components. Within a single output, some components may be correct while others contain localized errors or hallucinations.
In such settings, evaluating uncertainty for the entire generation is insufficient. For instance, a diagnostic AI system providing multiple possible conditions must be calibrated at the component level to help clinicians prioritize testing. Furthermore, it is more efficient to identify and revise specific faulty units than to discard a long, largely correct output \citet{goren2026llmsspecificselectiveabstraction}. Reliable deployment thus requires fine-grained uncertainty estimation, covering both ranking and calibration, to assess individual components.

However, evaluating fine-grained uncertainty estimation in long-form generation poses inherent challenges. First, establishing correctness itself is non-trivial: many claims have multiple valid realizations, while others may be unverifiable without external knowledge. Second, fine-grained evaluation requires decomposing a free-form generation into smaller units, a process that is typically ambiguous and highly sensitive to modeling choices. Together, these issues make high-resolution uncertainty evaluation especially vulnerable to label noise and subjective judgments.

Prior work often simplifies evaluation by focusing on short-form generation or multiple-choice Q\&A, where correctness is unambiguous \citep{DBLP:conf/nips/WangMZNCGRAHJLK24, DBLP:conf/acl/ZellersHBFC19, DBLP:conf/aaai/BiskZLGC20, sakaguchi2021winogrande, DBLP:journals/corr/abs-1803-05457}. However, these settings assess confidence over predefined choices rather than the generated content itself, or short outputs (few tokens, or a word) rather than the nuanced content characteristic of real-world generation \citep{DBLP:conf/acl/BakmanYKZBAK25}.
Recent long-form benchmarks decompose outputs into ``atomic facts'' verified by LLM-based judges or entailment models \citep{DBLP:conf/emnlp/ManakulLG23, DBLP:conf/emnlp/MinKLLYKIZH23, DBLP:journals/corr/abs-2409-03021, Zhang2024LUQLU, DBLP:journals/tacl/VashurinFVRVTPXSGPBNPS25}. This paradigm introduces noise during decomposition and verification, which can severely skew uncertainty metrics, as analyzed in Appendix~\ref{appendix: auroc_robustness}. Moreover, correctness-label errors that share biases with uncertainty scores can systematically distort AUROC rankings \citep{santilli-etal-2025-revisiting}.
Furthermore, extracting claims out of context decouples confidence signals from the original text.

To address these limitations, we introduce Single-answer Atomic Long-form Target (SALT), a benchmark for zero-noise uncertainty evaluation in long-form generation. SALT comprises six procedurally generated tasks spanning logic, mathematics, code, translation, and multiple-needles-in-a-haystack (Multi-Needle). Each task is constructed such that the complete long-form output has a single deterministic ground truth, known a priori. This enables precise unit-level labeling without human or model-based judges. Modular and extensible, SALT supports effectively unbounded instance generation with controllable difficulty, ensuring contamination-free evaluation as future models evolve.

We summarize our main contributions as follows:

\noindent(1) We introduce \textbf{SALT}, a fine-grained zero-noise benchmark for long-form generations.

\noindent(2) \textbf{Fine-Grained Evaluation of Uncertainty Throughout Generation}: SALT supports evaluation at multiple granularities, allowing uncertainty to be assessed continuously throughout the generation process rather than only at the level of complete outputs.

\noindent(3) \textbf{Large-Scale Empirical Study Across 50+ LLMs} - Using SALT, we conduct a comprehensive evaluation of over 50 contemporary LLMs, including GPT-5.1, Gemini-3-Pro, Qwen-3, GLM-4.7, and MiniMax M2.5, providing the first large-scale, contamination-free analysis of uncertainty estimation behavior in long-form generation.

\noindent(4) \textbf{Key Insights into Uncertainty and Error Dynamics}: 
SALT enables a high-resolution analysis throughout long-form generation, revealing that
(i) reasoning mechanisms (CoT/Reasoning models) introduce a "reasoning trade-off," boosting precision while systematically damaging confidence ranking;
(ii) confidence ranking exhibits a high-resolution failure mode: raw confidence signals often fail to separate correct from incorrect atoms, even when coarser levels ranking remains informative; and
(iii) future correctness has two separable drivers: propagation from corrupted prefixes, dominated by global rather than local context correctness, and bounded degradation from increasing answer-context length.

\section{Problem Setup}
\label{sec: problem setup}
Let $\mathcal{X}$ and $\mathcal{Y}$ denote the input and output spaces, respectively, where both consist of token sequences, i.e., $\mathcal{X}, \mathcal{Y} \subseteq \Sigma^*$ for a finite token vocabulary $\Sigma$. Given an input sequence $\mathbf{x} = (x_1, \ldots, x_m)$, a language model $f : \mathcal{X} \to \mathcal{Y}$ produces an output sequence $\hat{\mathbf{y}} = (\hat{y}_1, \ldots, \hat{y}_{T_{\text{gen}}})$, where $T_{\text{gen}}$ denotes the length of the \textbf{generated} sequence.
We restrict $\hat{\mathbf{y}}$ to the model’s \emph{final answer}, using explicit output fencing; intermediate reasoning tokens are not considered.

\subsection{Granularity and Correctness}
\label{subsec: granularity}
A generation can be evaluated at various levels of granularities. While characters and tokens are small units, having little semantic meaning on their own, sentences and documents are coarse, encapsulating a lot of information.
The need for such an intermediate level of granularity can be illustrated by a matrix multiplication task. Suppose an element in the resulting matrix is $135$. Evaluating this output at the character or token level may assign partial credit to a value such as $-130$, despite it being entirely incorrect in the context of the task. However, evaluating correctness only at the level of the full matrix would ignore the fact that many individual elements may have been computed correctly, with the errors arising from a small number of localized mistakes or hallucinations.

We resolve this trade-off by evaluating correctness at an intermediate granularity using \emph{evaluation units}—the smallest meaningful subcomponents that can be assessed independently. This approach enables precise error attribution without over- or under-penalizing the generation. While SALT supports any arbitrary resolution, including character or token levels (see Appendix~\ref{appendix: granularity}), we focus on two task-dependent resolutions: the \emph{atomic-unit} (the smallest standalone meaning, e.g., a matrix element) and the \emph{line-unit} (a larger logical grouping, e.g., a matrix row).

To formalize this intermediate granularity, we decompose the token sequence $\hat{\mathbf{y}}$ into evaluation units, which are contiguous token spans that represent atomic semantic components. 
Formally, let $\hat{\mathbf{u}}=(\hat{u}_1, \ldots, \hat{u}_{U_{\text{gen}}})$ denote the sequence of $U_{\text{gen}}$ generated evaluation units, where each unit $\hat{u}_i$ comprises multiple consecutive tokens.
This decomposition defines a partition of $\mathbf{\hat{y}}$ via boundary indices 
$0 = j_0 < j_1 < \ldots < j_{U_{\text{gen}}} = T_{\text{gen}}$, where the $i$-th unit is the contiguous subsequence $\hat{u}_i = (\hat{y}_{j_{i-1}+1}, \ldots, \hat{y}_{j_i})$. 
For example, if $T_{\text{gen}} = 7$ and $j_1 = 3, j_2 = 7$, then $\hat{u}_1 = (\hat{y}_1, \hat{y}_2, \hat{y}_3)$ and $\hat{u}_2 = (\hat{y}_4, \hat{y}_5, \hat{y}_6, \hat{y}_7)$.

As discussed in Section~\ref{sec: introduction}, relying on proxy signals (such as single-token answers in multiple-choice tasks and the Negative Log-Likelihood of candidate answers) or noisy model-based verification fails to capture uncertainty throughout the generation process. To reliably evaluate the model's output, SALT requires a setting where the ground truth is completely unambiguous, a unique and deterministic sequence.
Therefore, we define the \emph{ground-truth sequence} as $\mathbf{y} = (y_1, \ldots, y_{T_{\text{gt}}})$.
Note that $T_{\text{gt}}$ may differ from the generated length $T_{\text{gen}}$. Similarly, we define $\mathbf{u}=(u_1, \ldots , u_{U_{\text{gt}}})$ as the set of ground-truth evaluation units.

With generated units $\hat{\mathbf{u}}$ and ground-truth units $\mathbf{u}$ established, we define correctness via strict string equality. A unit $\hat{u}_i$ is correct if and only if it exactly matches $u_i$. For instance, in Matrix Multiplication, if the ground truth is $135$, a partial sequence like $13$ or an incorrect value like $-135$ both receive a binary score of 0.

Prior work distinguishes between intrinsic hallucinations (verifiably incorrect information) and extrinsic hallucinations (unverifiable claims) \citep{DBLP:journals/csur/JiLFYSXIBMF23, DBLP:journals/corr/abs-2309-01219, DBLP:journals/tois/HuangYMZFWCPFQL25}. Our deterministic ground-truth setting allows us to identify a restricted but important subtype, which we term \emph{redundant units}: generated units that do not appear in the ground-truth sequence. These constitute a task-specific form of intrinsic hallucination, where the model generates unnecessary content beyond what the task requires. For example, in the matrix multiplication task, a redundant unit would be an extra element that violates the matrix's dimensions, which would be mathematically invalid. This unit-level definition enables precise localization of hallucinations throughout generation.

\subsection{Confidence Functions}
\label{subsec: confidence functions}
To quantify the model's confidence in its generated output, we define a confidence score function $\kappa : \mathcal{X} \times \mathcal{Y} \to \mathbb{R}$ parameterized by the model $f$. We write $\kappa(x, \hat{y}; f)$ to denote the confidence assigned to generation $\hat{y}$ given input $x$ under model $f$. When the model is clear from context, we abbreviate this as $\kappa(x, \hat{y})$.
Unlike classification tasks, where confidence is typically associated with a single predicted label, generation tasks produce output sequences consisting of multiple tokens, and confidence can therefore be defined at different granularities.
In particular, confidence estimates may be computed at the token level (e.g., using log-probabilities from the language model head), at the level of spans or words, or directly at the level of the entire generated sequence.

Given the definition of evaluation units above, we compute the confidence score of a specific unit $\hat{u}_i$ via $\kappa(\mathbf{x}, \hat{u}_i \mid f)$.
Since units typically consist of multiple tokens, token-level signals are often aggregated to obtain unit-level confidence score.
For such units, we aggregate token probabilities using functions like mean probability, perplexity, or entropy (Table~\ref{tab:confidence_functions}, Appendix~\ref{appendix: kappa functions}).
This applies regardless of the evaluation granularity, whether at the atomic, line, or generation level.

In tasks such as ranking or selective prediction \citep{DBLP:journals/tc/Chow57, DBLP:journals/jmlr/El-YanivW10, DBLP:conf/nips/GeifmanE17, CortsCiriano2018DeepCA}, $\kappa$ can take arbitrary real values without loss of interpretability.
However, calibration metrics require $\kappa \in [0, 1]$ with a probabilistic interpretation \citep{b2d25122-884a-3c72-a8a5-06152481379b, DBLP:conf/kdd/ZadroznyE02, DBLP:conf/icml/Niculescu-MizilC05}. To ensure every confidence function can be evaluated on all metrics, we compare several calibration methods (\cref{para:calibration_schemes}) and apply the best-performing post-hoc calibration (details in Appendix ~\ref{appendix: kappa normalization}).

\subsection{Metrics}
\label{subsec: metrics}
Throughout this work, we assess several metrics of LLMs' generations.
To assess accuracy, we measure precision and recall. Precision and recall are defined as
\begin{equation}
\text{Precision}
=
\frac{|\mathbf{u} \cap \hat{\mathbf{u}}|}{U_{\text{gen}}},
\qquad
\text{Recall}
=
\frac{|\mathbf{u} \cap \hat{\mathbf{u}}|}{U_{\text{gt}}}.
\end{equation}

Recall reflects the proportion of ground-truth units recovered but ignores hallucinations, whereas precision considers the presence of redundant units. For example, if a generation contains all ground-truth units but comprises 99\% incorrect content, recall remains 100\% while precision drops to 1\%. This sensitivity makes precision the natural metric for assessing correctness.
Given their very high average Pearson correlation ($0.98$, see Table~\ref{tab: precision recall correlation}) we report only precision as our measure of accuracy (see Appendix~\ref{appendix: benchmark tasks}).

We adopt the expected calibration error (ECE) metric \citep{DBLP:conf/aaai/NaeiniCH15}, which is widely used to assess confidence calibration. For a finite test set of size $N$, ECE is calculated by grouping all instances into $m$ interval bins (such that $m\ll N$), each of size $\frac{1}{m}$ (the confidence interval of bin $B_j$ is $(\frac{j-1}{m}, \frac{j}{m}]$). With acc($B_j$) being the mean accuracy in bin $B_j$ and conf($B_j$) being its mean confidence, ECE is defined as
\begin{equation*}
\label{eq: ECE formula}
\begin{split}
    ECE = \sum_{j=1}^m\frac{|B_j|}{N}\sum_{i \in B_j} \bigg|\frac{\mathds{1}[\hat{y}_f(x_i)=y_i]}{|B_j|} - \frac{\kappa(x,\hat{y}_f(x_i)|f)}{|B_j|}\bigg| \\
    =\sum_{j=1}^m\frac{|B_j|}{N} |\text{acc}(B_j) - \text{conf}(B_j)|
\end{split}
\end{equation*}
We additionally evaluate the adaptive calibration error (ACE) \citep{DBLP:conf/cvpr/NixonDZJT19}, with results reported in the Appendix.
Beyond calibration, a desired confidence function should induce an accurate partial order by assigning higher scores to correct units than to incorrect ones. Specifically, for any two random input context-generated unit pairs where $\hat{u}_i$ is incorrect and $\hat{u}_j$ is correct, the \emph{ranking} performance of $\kappa$ is defined, as a binary-specialized instance of the general ranking risk \citep{DBLP:conf/iclr/GalilDE23a}, as the probability that $\kappa$ assigns a higher confidence score to the correct unit:

\begin{equation}
\label{eq:ranking_generation}
\begin{aligned}
\mathbf{Pr}\Big[
\kappa(\mathbf{x}^{(1)} \hat{\mathbf{u}}^{(1)}_{1:i-1}, \hat{u}^{(1)}_i) < \kappa(\mathbf{x}^{(2)} \hat{\mathbf{u}}^{(2)}_{1:j-1}, \hat{u}^{(2)}_j)
\\
\;\Big|\;
\hat{u}^{(1)}_i \neq u^{(1)}_i \land \hat{u}^{(2)}_j = u^{(2)}_j
\Big],
\end{aligned}
\end{equation}

where $\hat{\mathbf{u}}_{1:i-1}$ denotes the units generated prior to $\hat{u}_i$, and $\mathbf{x}\hat{\mathbf{u}}_{1:i-1}$ denotes the context on which the confidence of $\hat{u}_i$ is conditioned.
The AUROC metric is often used to evaluate confidence ranking. Specifically, under a 0/1 loss, AUROC is equivalent to the probability defined in Equation~\ref{eq:ranking_generation} \citep{DBLP:journals/prl/Tortorella06}, making it a suitable metric for assessing the model's ability to rank generated units by correctness.
While the Prediction Rejection Ratio (PRR) \citep{DBLP:conf/iclr/MalininG21} is also a standard ranking metric, we find it to be nearly perfectly correlated with AUROC (see Appendix~\ref{appendix: prr and auroc correlation}) and thus focus our analysis on the latter.

We follow the terminology of \citet{DBLP:journals/corr/abs-2409-03021} and distinguish between \emph{macro-AUROC} and \emph{micro-AUROC}, which differ in how ranking comparisons are formed.
Macro-AUROC pools evaluation units across all generations, measuring whether correct units receive higher confidence than incorrect ones across the dataset.
micro-AUROC, in contrast, evaluates ranking \emph{within individual generations} by restricting comparisons to units from the same generation by enforcing $\mathbf{x}^{(1)}=\mathbf{x}^{(2)}$ in Equation~\ref{eq:ranking_generation}.
For example, a model may achieve high macro-AUROC by ranking correct units from easy generations above incorrect units from harder ones, while still failing to distinguish correct and incorrect units within any single long-form output, a failure captured by micro-AUROC.
As a result, macro-AUROC reflects global confidence ordering across tasks and prompts, whereas micro-AUROC reflects the model’s ability to rank reliability \emph{within a single long-form response}.

\section{Related Work}
\textbf{LLM uncertainty estimation methods} can be categorized by the signals they utilize.
\emph{Information-based} methods aggregate token-level log-probabilities to derive unit-level scores, such as perplexity or entropy \citep{DBLP:conf/emnlp/MinKLLYKIZH23, DBLP:conf/naacl/GengCWKNG24,DBLP:journals/tse/HuangSWZCJM25}. We prioritize this approach for its theoretical grounding and computational efficiency, while also evaluating \emph{Verbalization} methods \citep{DBLP:journals/corr/abs-2207-05221, DBLP:conf/emnlp/TianMZSRYFM23} as a complementary signal.
Other methods include \emph{Internal-state} and \emph{Sampling} methods, which we discuss further in Appendix~\ref{appendix: more related work}.

\textbf{Uncertainty Estimation Benchmarks for LLMs:}
Existing evaluation frameworks are largely split between constrained deterministic tasks and open-ended generation with proxy evaluation.
Short-form benchmarks provide an unambiguous ground truth for precise metric calculation, e.g., multiple-choice or arithmetic Q\&A \citep{DBLP:conf/nips/0001YPWWY0T24, DBLP:conf/iclr/XiongHLLFHH24, yona-etal-2024-large, DBLP:conf/nips/WangMZNCGRAHJLK24, rein2024gpqa, DBLP:conf/iclr/WhiteDRPF0SJSDS25} but fail to capture the structural complexity of long-form outputs.
Recent efforts have moved toward evaluating free-form text \citep{DBLP:conf/emnlp/FadeevaVTVPFVGP23, DBLP:journals/tacl/VashurinFVRVTPXSGPBNPS25}, often decomposing generations into fine-grained units \citep{DBLP:journals/corr/abs-2409-03021, DBLP:journals/corr/abs-2410-13246, DBLP:conf/emnlp/MinKLLYKIZH23, yang-etal-2025-uncle, DBLP:conf/acl/YonaAG24, DBLP:conf/emnlp/Zhang0BC24}. However, these methods typically trade rigor for complexity: they rely on noisy model-based judges or proxy metrics rather than deterministic ground truth.
Closest to our motivation, \citet{santilli-etal-2025-revisiting} show that the choice of correctness function alters how confidence functions are ranked.
SALT takes a complementary route by removing correctness estimation from the evaluation pipeline, combining structural depth of long-form generation with deterministic unit-level ground truth.

\section{Constructing the SALT Benchmark}
\label{sec:benchmark_construction}
We introduce SALT, a dynamic benchmark comprising six procedurally generated tasks designed with a single deterministic ground-truth, to evaluate uncertainty estimation in open-ended, long-form generation at unit-level resolutions. To address the limitations of existing static evaluations, which often rely on short-form answers or external judges, we construct our tasks to ensure deterministic computability of correctness at the unit level. In this section, we first outline the five core design objectives governing SALT (Section~\ref{subsec: design objectives}). We then detail the task domains (Section~\ref{subsec: benchmark tasks}) and describe our pipeline for response processing and alignment (Section~\ref{subsec: response alignment}).

\subsection{Design Objectives}
\label{subsec: design objectives}
SALT is designed to satisfy five key objectives essential for zero-noise evaluation of uncertainty estimation in long-form generations.

(1) \textbf{Evaluation of Long-Form Generation}.
Tasks require non-trivial, multi-atom outputs rather than single tokens. This elicits the localized failure modes—such as hallucinations or "context rot"—characteristic of extended generations. By providing the necessary length and complexity, this objective establishes the substrate for fine-grained uncertainty analysis across the entire output.

(2) \textbf{Evaluation on Model-Generated Outputs}.
We assess uncertainty on \emph{actual model generations} rather than predefined candidate sets or post-processed representations of an output. Scores are computed from token probabilities and aggregated as defined in Section~\ref{subsec: confidence functions}, ensuring assessment of the exact generated text without auxiliary post-processing.

(3) \textbf{Single Deterministic Ground Truth}.
Instances are algorithmically composed so the complete ground truth is known \emph{a priori}. This eliminates reliance on human or LLM-based judges and provides unambiguous correctness signals at the atom and line levels, which is critical for metrics sensitive to label noise (see Appendix~\ref{appendix: auroc_robustness}).

(4) \textbf{Unit decomposition}.
Generations naturally decompose into units aligned with task semantics (e.g., arithmetic entries).
This structure enables direct evaluation of accuracy, calibration, and ranking at atomic resolution throughout the generation process, bypassing the auxiliary parsing, heuristic matching, or LLM-based alignment common in prior work \citep{DBLP:conf/emnlp/MinKLLYKIZH23, DBLP:journals/tkde/YanWCLLLP25}.

(5) \textbf{Unbounded Instance Generation}.
All tasks are procedurally generated with tunable parameters, enabling the creation of an effectively infinite evaluation set. This prevents test-set memorization and ensures that uncertainty estimates are measured on genuinely unseen long-form generations.

\subsection{Benchmark Tasks}
\label{subsec: benchmark tasks}
SALT comprises eight tasks spanning mathematics, logic, code, scientific translation, Multi-Needle, rule induction, and spatial planning. These tasks are designed to elicit long, structured generations that decompose into task-aligned atomic units. The six tasks with complete model coverage total over 50k units and form the aggregate analysis in the main results, providing high-resolution insights throughout long-form generation. A detailed breakdown of tasks and examples is provided in Appendix~\ref{appendix: benchmark tasks}.
Crucially, the tasks are selected to span a spectrum of semantic dependence structures. This range allows us to study correctness and uncertainty across tasks that differ in the information required to determine each target unit: from \textit{local, output-independent mappings} (e.g., DNA Translation), through \textit{global input-dependent computations} (e.g., Matrix Multiplication), to tasks with stronger ordering or logical dependencies (e.g., Multi-Needle and First-Order Logic). A detailed breakdown of task roles, examples, and a formal analysis of these dependencies is provided in Appendices~\ref{appendix: benchmark tasks} and \ref{appendix: atom dependency}.

\textbf{Math:}
\textbf{Matrix Multiplication} and \textbf{Kronecker Product} test precision across long contexts and high-dimensional dependencies, but represent distinct reasoning challenges. The former requires global aggregation, where each output element depends on an entire row and column, while the latter tests local, repetitive mapping. SALT generates matrices with configurable dimensions, constrained according to the problem size $L$ as summarized in Table~\ref{tab:matrix-dims-general}, but allows for arbitrary scaling to test the limits of context windows. Each numerical entry in the resulting matrix constitutes an individual atom for evaluation.

\textbf{Coding:}
The \textbf{Code} task requires models to predict the output of a Python function for a given input. We use validated solutions to \href{https://leetcode.com/}{LeetCode problems} as the function and devise inputs that adhere to the problem constraints. Ground-truth outputs are obtained by executing the source function with these devised inputs in a deterministic Python environment. This task evaluates the model's ability to trace code logic and simulate execution over complex data structures.

\textbf{Logic:}
\textbf{First-Order Logic} probes deductive and procedural reasoning over extended outputs. In the First-Order Logic task a model is prompted with formulas and known variables' truth values, and is prompted to find all deducible variables and their truth values. 

\textbf{Translation:}
The \textbf{DNA Translation} task evaluates deterministic sequence-to-sequence over long generations. Each nucleotide is mapped to its complement via a fixed rule (A $\leftrightarrow$ T, C $\leftrightarrow$ G), requiring no biological expertise. Despite its simplicity, maintaining precision across long sequences is difficult; models achieve an average precision of only 37.8\%, comparable to the complex First-Order Logic task (39.5\%) and far below Code (68.0\%). This underscores the challenge of sustained attention in long, repetitive mappings.

\textbf{Multi-Needle:}
The \textbf{Words Collection} task requires identifying and ordering all items of a specific category (e.g., animals) from a passage \citep{bird-loper-2004-nltk}. As a structured sequence of individual atoms, it enables precise diagnosis of both omissions (recall errors) and redundant units (precision errors) during long-form generation.

\textbf{ARC-AGI:}
The task tests generalization on highly abstract tasks from a few examples. Given input-output grid examples and a test input, the model must produce the corresponding output grid. SALT treats each cell as an atom.

\textbf{Maze:}
\textbf{Maze} requires models to find the unique shortest path in a grid with obstacles, evaluating spatial planning under global constraints. Each step along the path is an atomic unit, allowing for precise localization of navigation and optimality errors.

ARC-AGI and Maze are evaluated separately in \cref{appendix: maze and arc-agi results} due to incomplete model coverage under compute constraints and are excluded from the main-body analysis.

\subsection{Response Processing and Alignment}
\label{subsec: response alignment}
To transition from open-ended generation to the formal definition of atoms in Section~\ref{sec: problem setup}, we employ a two-stage parsing pipeline. First, to isolate the target generation from conversational fillers or reasoning preambles, we require models to produce a \emph{fenced final answer sequence} (e.g., enclosing the solution in specific delimiters). This allows deterministic extraction via regular expressions, removing formatting variability without altering the generated content.

Second, the extracted sequence is segmented into units according to task-specific delimiters (e.g., commas for matrices, spaces for Multi-Needle). Given the deterministic structure and unit-decomposition properties of SALT, both the ground truth $\mathbf{u}$ and the generated sequence $\hat{\mathbf{u}}$ are ordered sequences of units, aligned strictly by index. This allows for binary correctness labeling of each unit $\hat{u}_i$ via exact string matching, ignoring stylistic whitespace while strictly penalizing content errors.
We additionally explore an alternative evaluation setting using dynamic sequence alignment in Appendix~\ref{appendix: Atoms Alignment Experimental}. We find that alignment impacts certain tasks, and discuss why the non-alignment setting remains the most realistic for evaluating long-form reliability.

\textbf{Consistency with Established Benchmarks.}
We examine how model performance on SALT aligns with well-established benchmarks by comparing the precision-based rankings of over 50 LLMs against \emph{MMLU-Pro}, \emph{GPQA Diamond}, and \emph{LLM-Arena}. We observe strong Spearman correlations: 0.73 with LLM-Arena, 0.84 with MMLU-Pro, and 0.85 with GPQA-Diamond. These correlations indicate that SALT induces model orderings broadly consistent with existing evaluation standards.

\section{Results}
\label{sec: results}
In our figures, model markers use the developer's company logo. Marker size scales with parameter count. Models without a parent family are shown as grey circles. For closed models with undisclosed parameters, we assume a large scale and set marker sizes accordingly.

The results proceed in two parts. First, \cref{subsec: LLMs correctness} studies correctness dynamics in long-form generation, showing how prior errors and answer-context length shape future correctness. Second, \cref{subsec: uncertainty estimation} evaluates uncertainty estimation, separating calibration from ranking and analyzing how this distinction interacts with post-hoc calibration, confidence functions, granularity, transferability, and reasoning.

\subsection{Correctness Dynamics in Long-Form Generation}
\label{subsec: LLMs correctness}
Prior benchmarks often rely on short-form answers or coarse evaluations, lacking the granularity to track how model correctness evolves throughout a long generation. A key advantage of SALT is its ability to decompose outputs into sequential, verifiable units, enabling high-resolution monitoring of correctness as the generation unfolds.

Leveraging this capability, we observe that models are more accurate early in the generation than later (\cref{fig: models mistake more later in generation}). \Cref{fig: prior error effect on current accuracy} further shows a strong association between future correctness and the correctness history of the generated prefix: generations with perfect prior records usually remain correct, while those dominated by prior errors consistently fail. Even a small fraction of prior mistakes correlates with disproportionate degradation; for example, 15\% incorrect prior atoms reduces subsequent correctness to approximately 50\%. Additional observational analyses are provided in Appendix~\ref{appendix: correctness given prior success}.

These observational trends are consistent with error accumulation in long-form generation, connecting to prior observations of hallucination snowballing~\citep{DBLP:conf/icml/ZhangPMLS24} and long-context degradation, often referred to as context rot~\citep{DBLP:conf/icml/ShiCMSDCSZ23, hong2025context}. However, because answer-context length, and prefix correctness naturally co-vary, we next use controlled interventions to disentangle their roles.

\begin{figure}[ht]
\begin{center}
\centerline{\includegraphics[width=0.55\columnwidth]{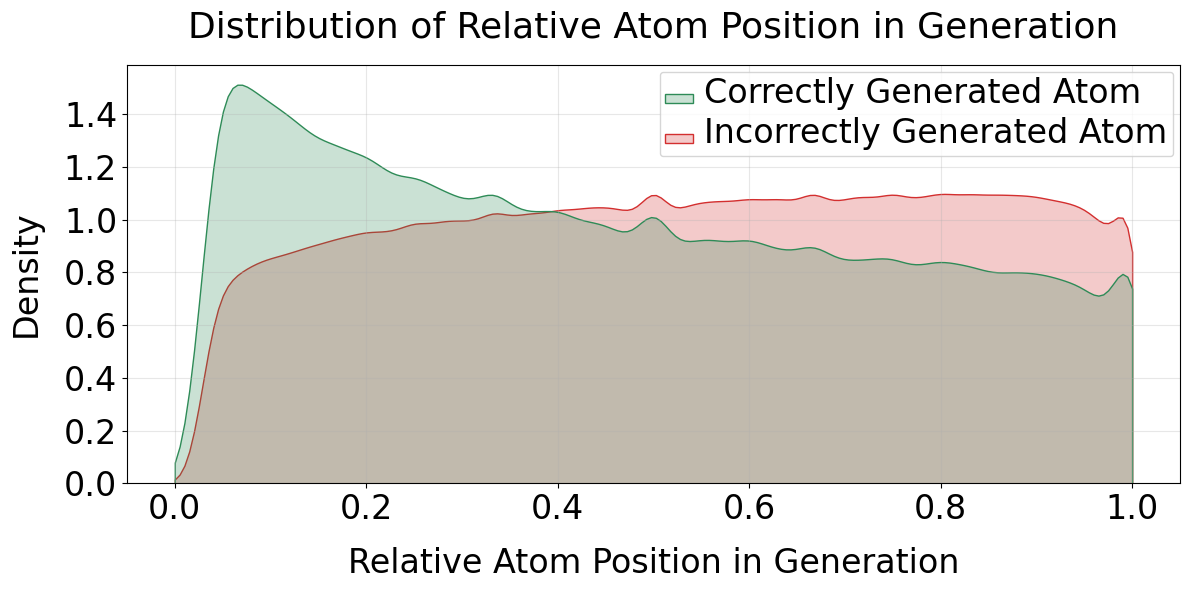}}
\caption{Probability density of atom correctness relative to its position within the generation. The x-axis represents normalized length (e.g., $x=0.6$ is the 60th atom in a 100-atom sequence).}
\label{fig: models mistake more later in generation}
\end{center}
\end{figure}

\begin{figure}[ht]
\centering
\includegraphics[width=0.6\columnwidth]{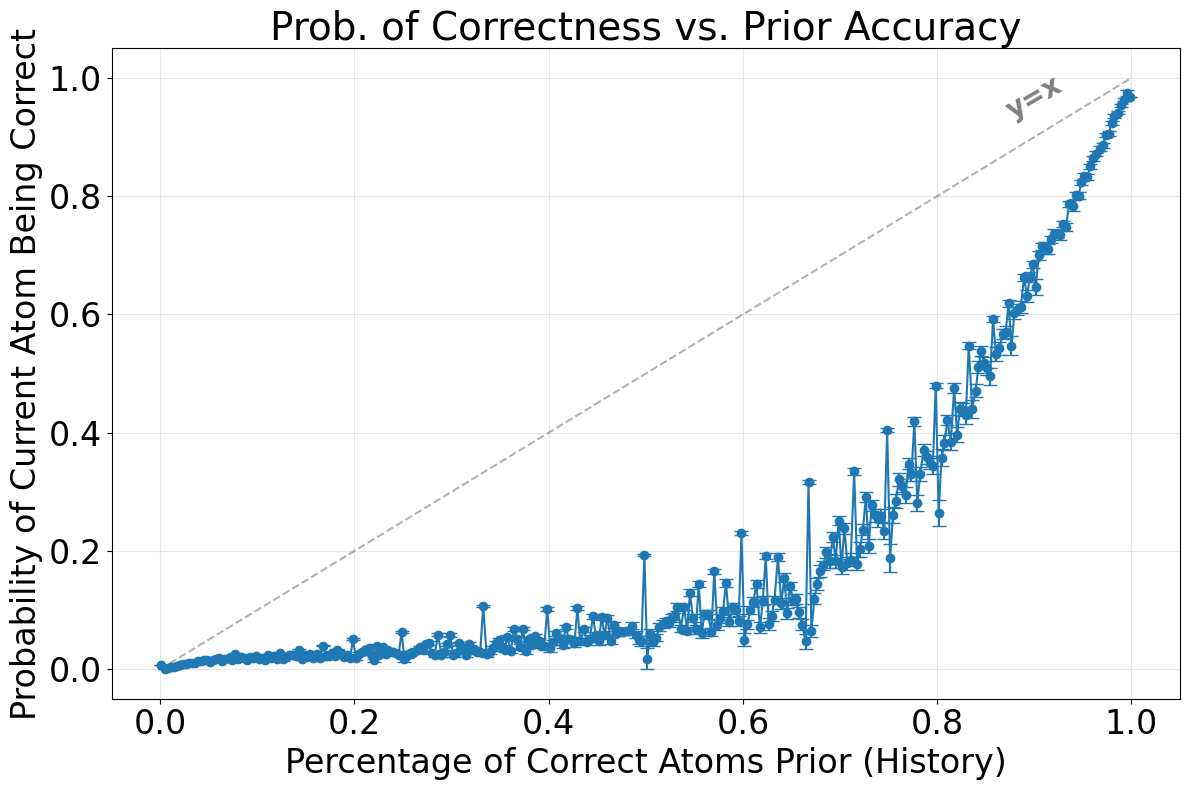}
\caption{Probability of atom correctness (y-axis) vs. proportion of prior correct atoms (x-axis). Points represent bins of atoms grouped by their correctness history; markers show bin means, and error bars indicate standard deviation.}
\label{fig: prior error effect on current accuracy}
\end{figure}

\textbf{How Context Correctness and Answer Length Shape Future Correctness}.
To test whether the correctness of the generated prefix itself affects future correctness, we perform a controlled counterfactual intervention on the model's own answer context.
For a target atom, we keep the original prompt fixed, and modify preceding atoms, either "corrupting" correct ones or "fixing" incorrect ones, and regenerate the target. Each modified prefix is compared against the original-prefix baseline for the same model, sample, and target position.
We analyze two intervention treatments: changes in global prefix correctness and changes in local (last-10 atoms) correctness. Answer-context length is not intervened on directly; instead, we estimate its adjusted association with future correctness by evaluating length-response curves at fixed prefix-correctness levels (\cref{fig:context-length-by-correctness}). We use the DNA task for this experiment due to its low semantic dependence between atoms, which allows surgical edits without altering the ground truth of subsequent units. Effects are estimated via cubic B-spline regressions \citep{DBLP:books/sp/Boor78}, comparing modified prefixes against original baselines while controlling for answer length. Full details are provided in Appendix~\ref{appendix: more on intervention experiment}.

Figure~\ref{fig: global and local correctness effect on p-true - no control} highlights that both global and local prefix correctness affect $P(\text{next correct})$, with global correctness carrying the larger signal. When controlling for mutual adjustment, the gap between the independent signals grow greater. Residual global effect roughly $2\times$ larger than that of local last-10 atoms correctness (\cref{fig: global and local correctness effect on p-true}).
The intervention curves are nonlinear and asymmetric, with corruption is more damaging than correction is beneficial.

Answer-context length also affects future correctness, but it does not behave like the main driver of cascading failure. At fixed prefix-correctness levels, increasing answer-context length produces early degradation across regimes, indicating an independent length-related burden rather than a purely correctness-mediated effect (\cref{fig:context-length-by-correctness}). Yet this burden saturates quickly: the controlled analysis in \cref{fig:context-length-combined} shows that 95\% of the asymptotic drop occurs by $66.6$ atoms, and removing correctness controls changes the magnitude of the drop from $-0.43$ to $-0.50$ without changing the timeline. Together, these results suggest that prefix correctness governs the trajectory of future correctness, while answer length contributes a bounded background degradation.

Overall, these results support a path-dependent view of long-form correctness. The quality of the generated prefix directly shapes future correctness, with global prefix correctness carrying the strongest independent signal. Answer-context length adds a separate burden, but one that is bounded and saturates early. Thus, downstream errors are not merely a byproduct of generating longer answers; they are strongly shaped by the correctness of the answer context the model has already produced.

\begin{figure}[ht]
\begin{center}
\centerline{\includegraphics[width=0.85\columnwidth]{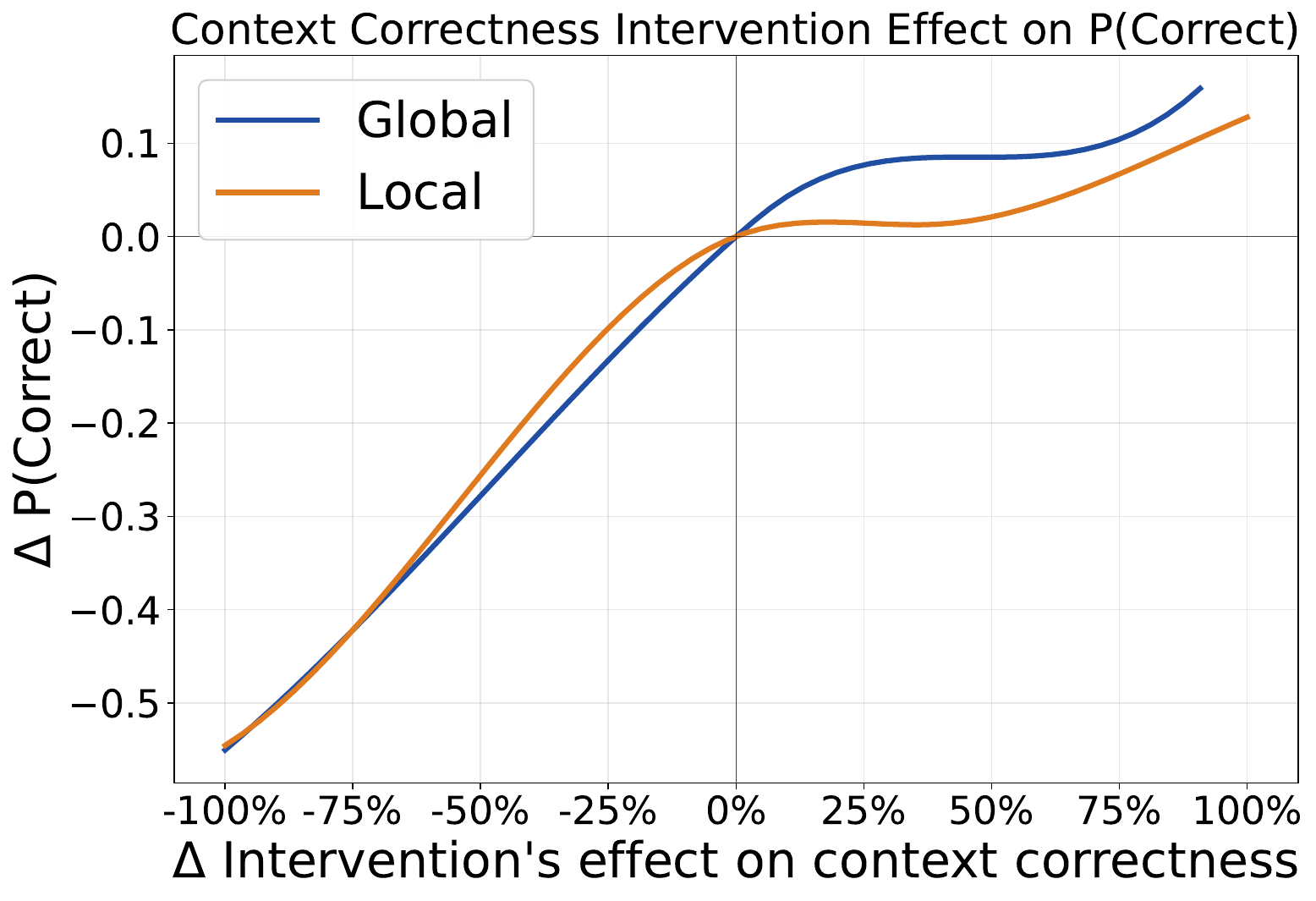}}
\caption{Effect of global and local (last 10 atoms) prefix-correctness perturbations on future correctness.}
\label{fig: global and local correctness effect on p-true - no control}
\end{center}
\vskip -0.2in
\end{figure}

\subsection{Uncertainty Estimation}
\label{subsec: uncertainty estimation}
SALT reveals that uncertainty estimation in long-form generation depends not only on the model, but also on how confidence is measured, calibrated, and evaluated. We therefore begin with the primary empirical effects, examining granularity and reasoning as two settings where confidence ranking changes substantially. We then analyze the confidence functions and calibration methods that underlie these measurements. Finally, we test whether the observed trends transfer beyond SALT to established benchmarks.

1) \textbf{The High-Resolution Ranking Gap.}
We observe a sharp granularity gap in confidence ranking: Macro-AUROC is substantially more effective at the \emph{line-unit} level than at the \emph{atomic-unit} level. While line-level evaluation provides a clear signal, atomic-level ranking often struggles; for instance, Micro-AUROC scores for many models cluster between 0.45 and 0.7, indicating performance barely superior to random guessing (\cref{fig: precision_vs_auroc}c). This trend is statistically significant ($p=7.5 \cdot 10^{-9}$, Figure~\ref{fig: atomic vs line unit - Macro-AUROC}), whereas calibration remains more stable across granularities (\cref{fig: ECE atom vs line}). These findings reveal a high-resolution failure mode: raw confidence signals often fail to distinguish individual correct atoms from incorrect ones, even when they succeed for coarser units. Score-density analyses confirm this, showing substantial overlap between correct- and incorrect-atom distributions (\cref{fig: PDFs}). By exposing this weak atomic-level separability through deterministic labels, SALT motivates the development of confidence functions specifically designed for high-resolution units. We thus treat line-unit Macro-AUROC as the primary ranking measure, as discussed further in Appendix~\ref{appendix: granularity}.

\begin{figure}[ht]
\begin{center}
\centerline{\includegraphics[width=\columnwidth]{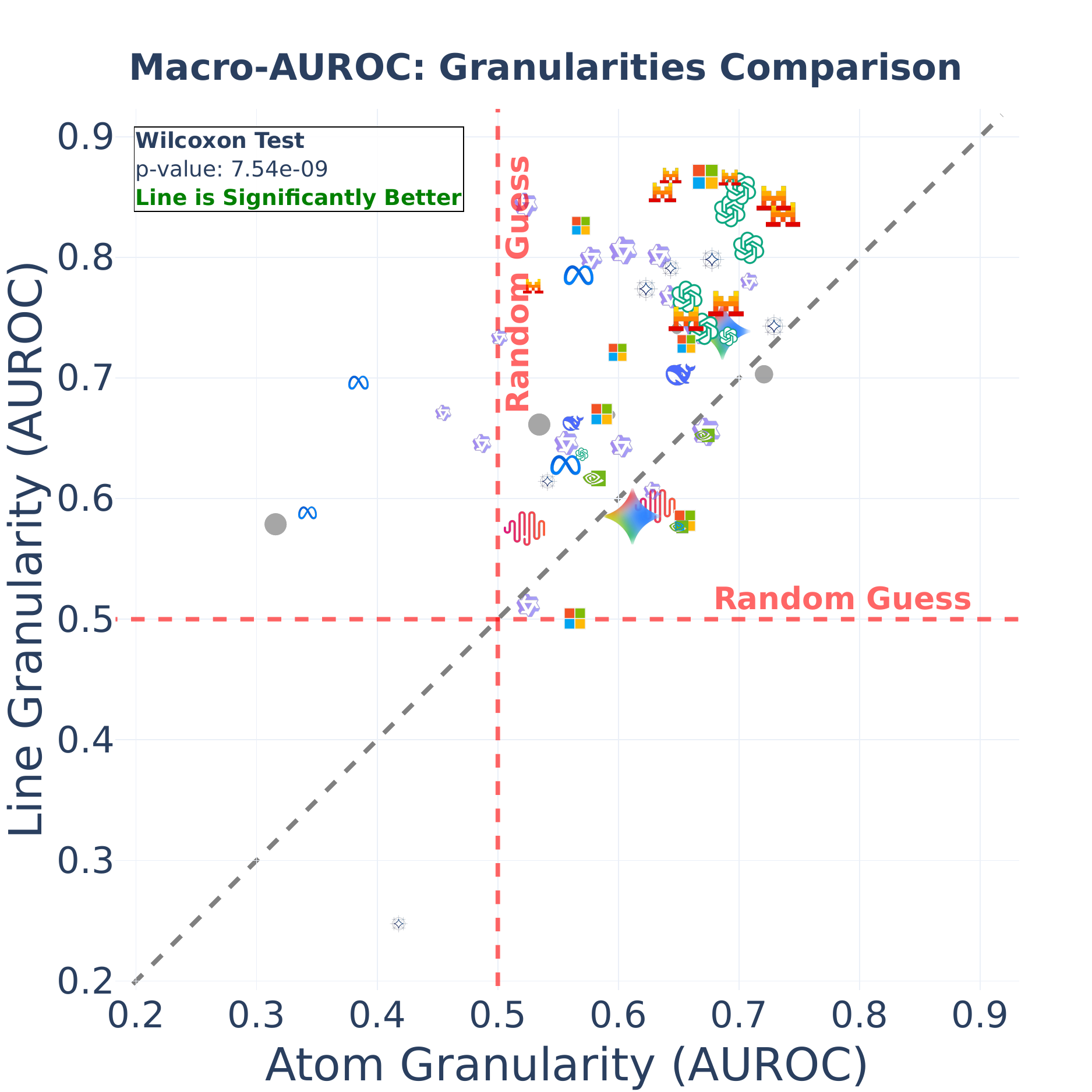}}
\caption{Macro-AUROC at atomic-unit and line-unit resolutions.}
\label{fig: atomic vs line unit - Macro-AUROC}
\end{center}
\end{figure}

2) \textbf{The Reasoning Trade-off}.
Having exposed a granularity-dependent ranking gap, we next investigate the reasoning effect on ranking.
To that end, we examine two approaches that encourage a reasoning process in LLMs: Chain-of-Thought (CoT) prompting \citep{DBLP:conf/nips/Wei0SBIXCLZ22}, an inference-time reasoning, and models trained specifically to reason \citep{DBLP:journals/corr/abs-2412-16720, DBLP:journals/corr/abs-2501-12948}.
To isolate these effects, we compare paired Instruct and Reasoning models of identical architecture and size, and applying CoT only to the Instruct variants. Our results reveal a reasoning trade-off: a systematic inverse relationship between accuracy and ranking ability (Figure~\ref{fig: reasoning effects}).

While reasoning improves Precision (78.2\%) and ECE (60.8\%), consistent with prior work \citep{DBLP:journals/corr/abs-2505-14489}, it simultaneously degrades AUROC (-23.3\%). This effect scales with the reasoning mechanism's intensity: CoT prompting, a soft reasoning inducer, yields modest precision gains and slight ranking reductions, whereas specialized reasoning models exhibit dramatic precision improvements alongside a steeper decline in ranking ability.

Mediation analysis (Appendix~\ref{appendix: precision cause calibration}) clarifies the reasoning-calibration link. While reasoning models appear better calibrated than instruct counterparts, this improvement is statistically explained by increased precision ($p < 0.001$), with no significant residual effect from the reasoning mechanism itself ($p=0.567$). Thus, reasoning enhances accuracy without independently improving uncertainty reliability.

% \textbf{The Reasoning-Ranking Trade-off.}
Consequently, unlike ECE, ranking degradation emerges as a distinct reasoning side effect, independent of precision-driven calibration gains.
We further explore the combined effect of reasoning methods, internalized through training and CoT prompting (Appendix~\ref{appendix: reasoning plus cot}, \cref{fig: reasoning with CoT effect}).

\begin{figure}[ht]
\vskip 0.2in
\begin{center}
\centerline{\includegraphics[width=0.85\columnwidth]{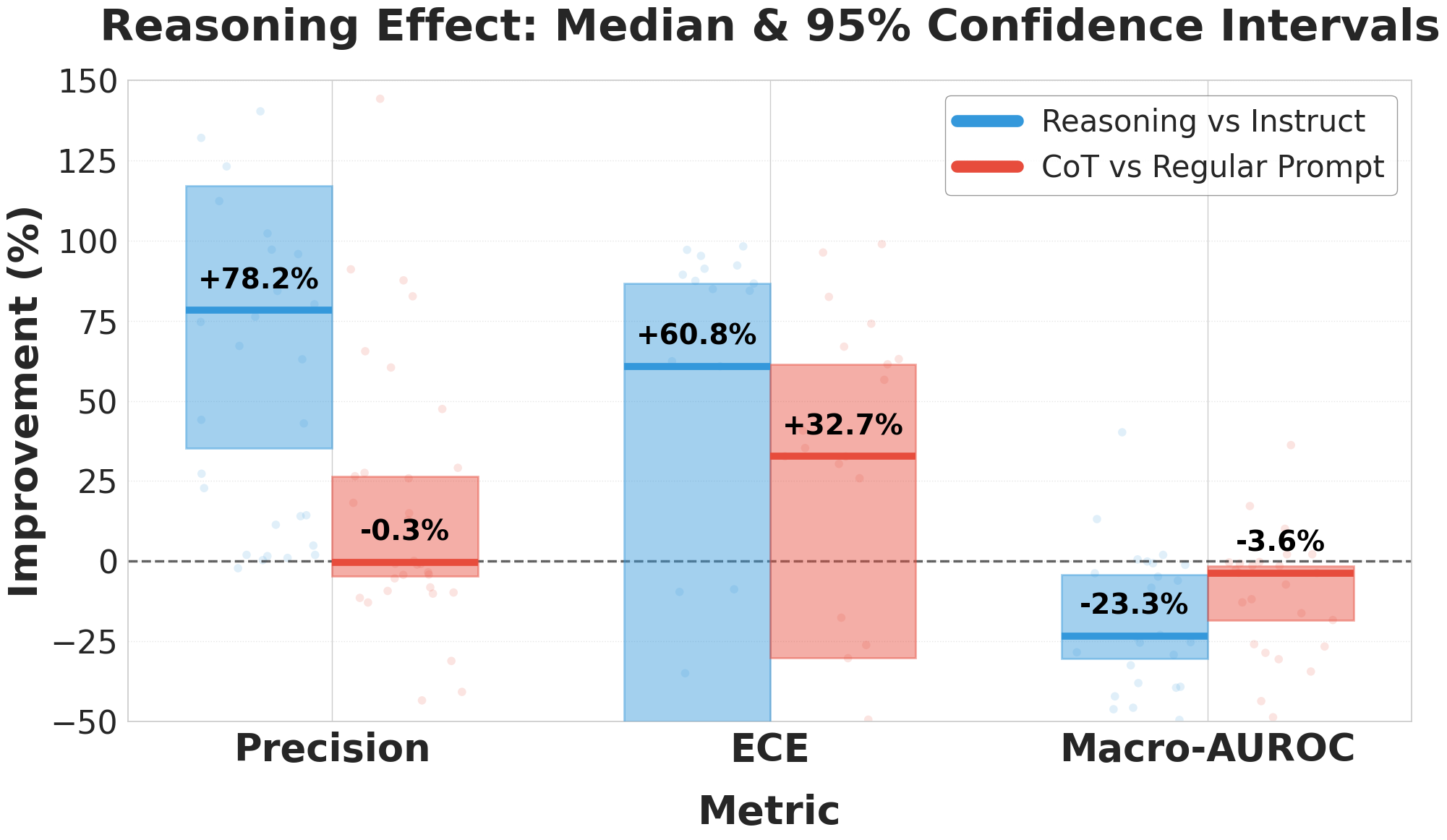}}
\caption{Impact of reasoning methods on Precision, ECE, and AUROC. Each marker represents the relative change between two identical models, differing only by CoT prompting or reasoning-specific training. Markers above the zero line represent an improvement in that metric relative to the base model's performance, while markers below represent a degradation.}
\label{fig: reasoning effects}
\end{center}
\vskip -0.2in
\end{figure}

\label{par: best kappa main body}
3) \textbf{Which Confidence Functions Work Best at Measuring Calibration and Ranking?}
An important question in uncertainty estimation is whether a single confidence function consistently outperforms in measuring different aspects of uncertainty estimation (ranking, calibration, etc.) \citep{DBLP:conf/iclr/GalilDE23a, DBLP:conf/emnlp/FadeevaVTVPFVGP23}. To address this, we compare multiple logits-based \citep{DBLP:journals/tse/HuangSWZCJM25, DBLP:conf/naacl/GengCWKNG24} and \emph{verbalized-based confidence functions} using a paired Wilcoxon signed-rank test across models and tasks \citep{Wilcoxon1945IndividualCB}.
We find that logits-based functions consistently and significantly outperform verbalized methods across both calibration and ranking metrics ($p < 0.05$).
As shown in Figure~\ref{fig: best kappa function}, different confidence functions excel under different evaluation criteria. Several logits-based functions were found to be the most effective choice for calibration ($p < 0.05$), with Max Probability at the top in a tiny margin. In contrast, Logprobs Sum (see Table~\ref{tab:confidence_functions})
excelling in ranking, significantly outperforming other functions in terms of AUROC.
Notably, the identity of the best-suited confidence function for each metric is often invariant to the unit's resolution: the same confidence functions dominate calibration and ranking evaluations at both the atomic and line levels. Full details of the statistical tests and findings are supplemented in Appendix~\ref{appendix: best kappa function}.

\begin{figure}[ht]
\centering
    % --- Subfigure (a) ---
    \begin{subfigure}[b]{\columnwidth}
        \centering
        \includegraphics[width=0.9\columnwidth]{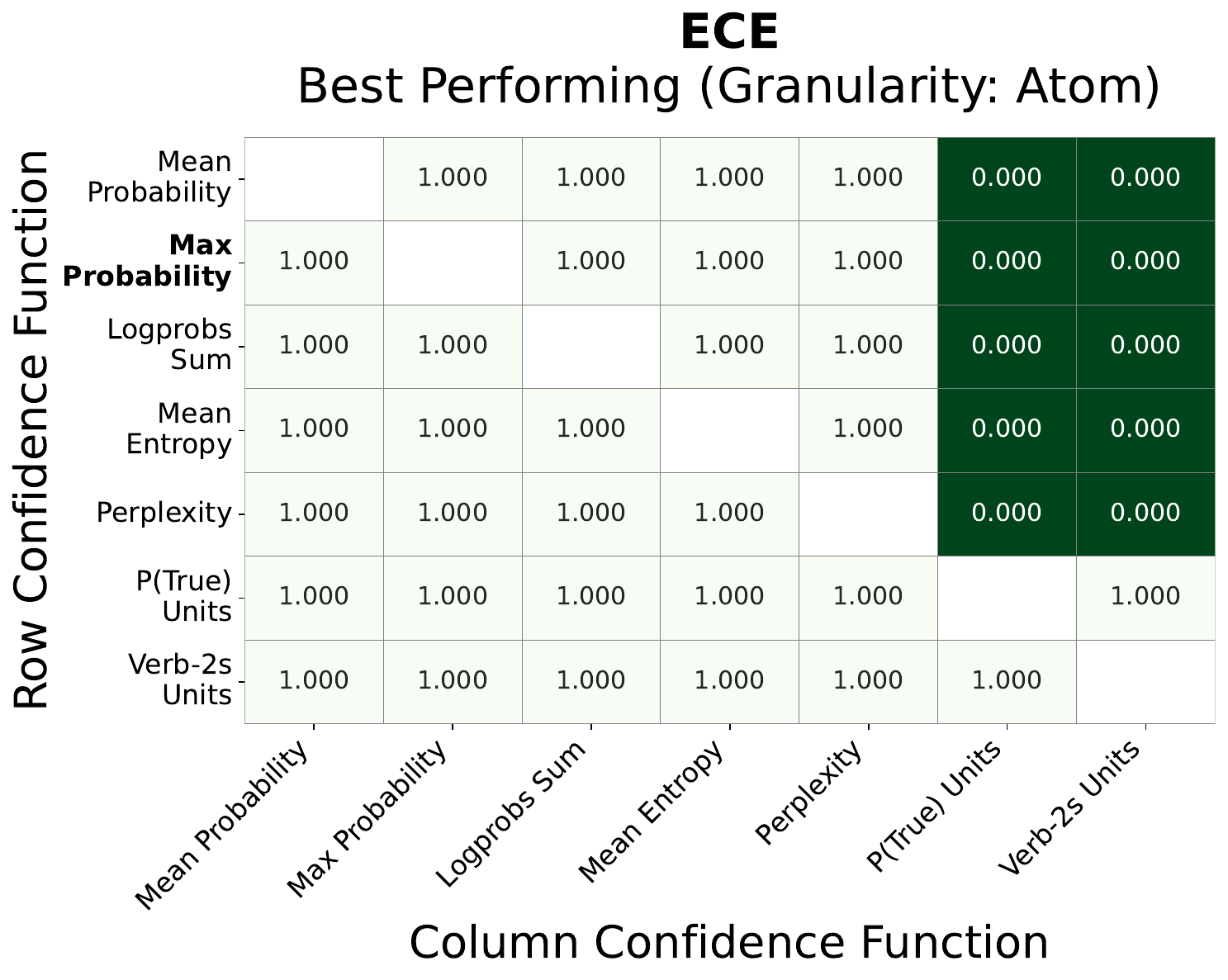}
        \label{fig: best kappa function: ECE - atoms}
    \end{subfigure}

    \par\bigskip

    % --- Subfigure (b) ---
    \begin{subfigure}[b]{\columnwidth}
        \centering
        \includegraphics[width=0.9\columnwidth]{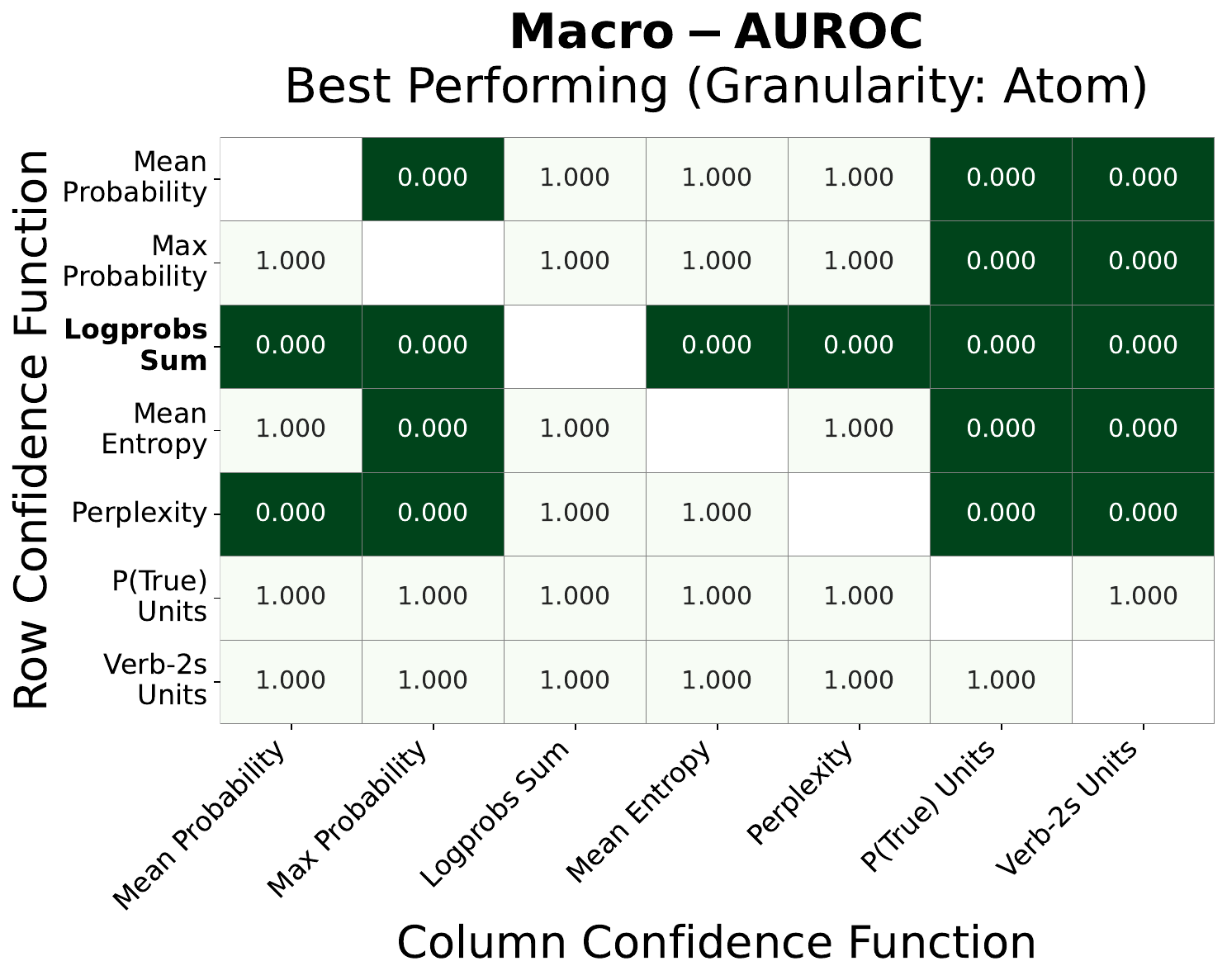}
        \label{fig: best kappa function: Macro-AUROC - atoms}
    \end{subfigure}

\caption{Statistical dominance of confidence functions. Green cells denote that the row function significantly outperforms the column function ($p < 0.05$, one-sided Wilcoxon signed-rank test). Evaluations are shown for the atomic-unit resolution.}
\label{fig: best kappa function}
\end{figure}

\label{para:calibration_schemes}
4) \textbf{Calibration Methods Reveal a Calibration-Ranking Trade-off.}
Logit-based confidence functions often require post-hoc mapping to $[0,1]$ for calibration assessment. We compare several methods, including Z-score normalization followed by a Sigmoid transform, Min-Max normalization, and empirical binned calibration (Appendix~\ref{appendix: kappa normalization}). To ensure a fair comparison, we report the best-performing confidence function for each scheme-metric pair (\cref{par: best kappa main body}).
As shown in Figure~\ref{fig: calibration method comparison - ECE}, binned calibration consistently minimizes ECE across both atomic and line resolutions. Conversely, binned calibration harms confidence ranking (\cref{fig: calibration method comparison - AUROC}). This reveals that calibration methods optimizing for ECE can degrade ranking, while those preserving ranking structure may yield poor calibration.
Accordingly, our reported uncertainty analyses use the best-performing calibration method for each metric, ensuring fair comparisons.

\begin{figure}[ht]
\begin{center}
\centerline{\includegraphics[width=0.85\columnwidth]{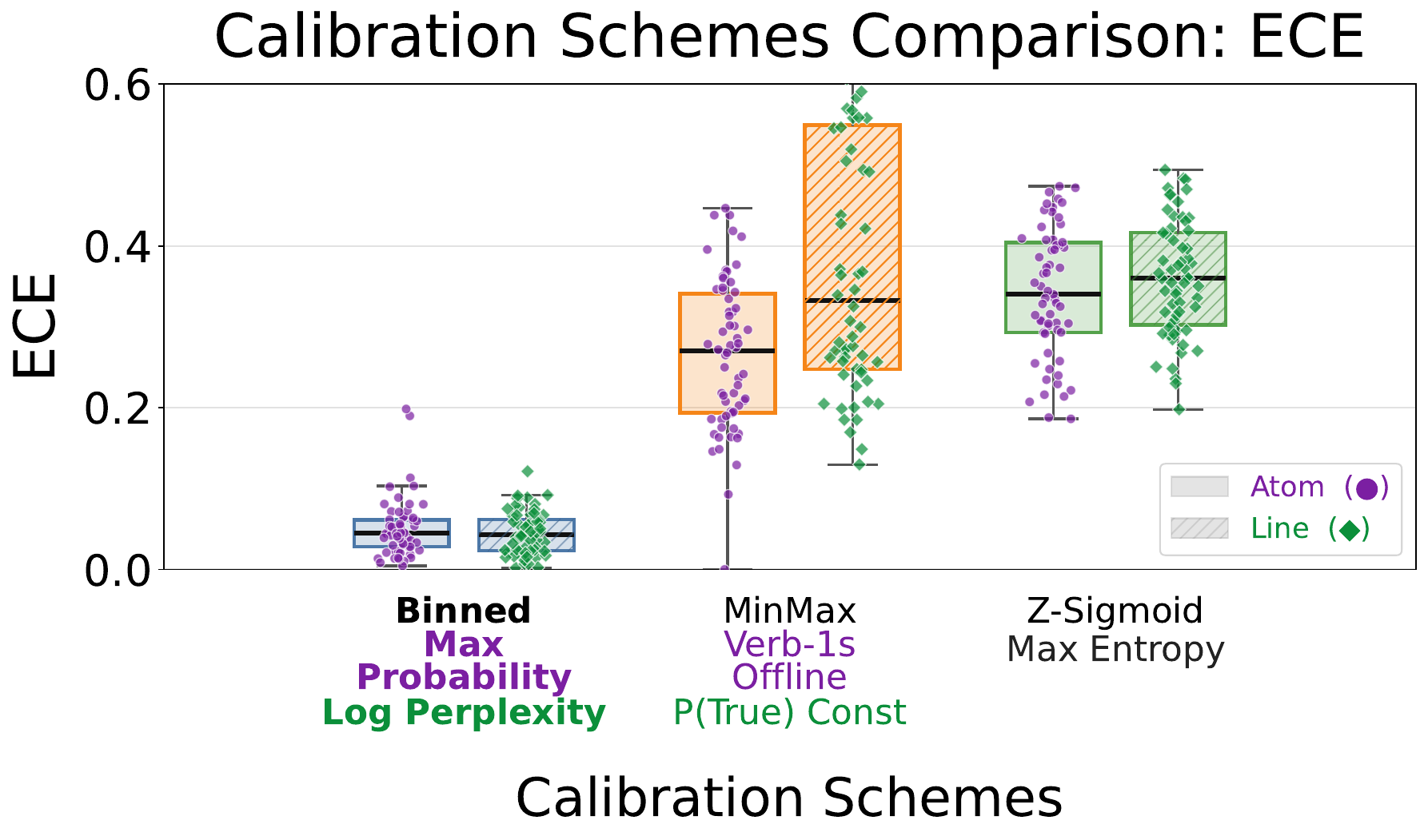}}
\caption{Comparison of post-hoc calibration schemes. Bold labels indicate the best-performing method, with the corresponding optimal confidence function listed below. \textcolor{atompurple}{Purple} and \textcolor{linegreen}{green} markers denote \textcolor{atompurple}{atomic}- and \textcolor{linegreen}{line}-unit granularities, respectively.}
\label{fig: calibration method comparison - ECE}
\end{center}
\end{figure}

5) \textbf{Uncertainty Estimation Transferability.}
Finally, we examine whether SALT's uncertainty trends generalize by comparing confidence-function rankings on SALT against those from \emph{AIME} and \emph{MMLU-Pro}. While agreement is stronger for AIME, it is non-uniform and highly sensitive to calibration methods. Specifically, AUROC rankings align best under Z-score and Sigmoid transforms, while ECE rankings favor binned calibration. This "calibration-regime effect" suggests that the post-hoc mapping used to derive probabilities largely dictates a metric's perceived transferability.
Granularity also modulates agreement. Line-level evaluation yields the most stable Spearman correlations for AIME (0.68 AUROC, 0.74 ECE), likely because AIME targets coarse, answer-level correctness. In contrast, the weaker atomic-level correlation suggests that SALT's fine-grained resolution captures nuanced uncertainty structures invisible to coarse benchmarks. Thus, SALT’s findings are transferable but conditional: confidence rankings generalize well to AIME at line-level resolution, while MMLU-Pro agreement remains weak. See Appendix~\ref{appendix: Uncertainty Estimation transferability} for details.

% ===========================================================================

\section{Conclusion}
\label{sec: conclusion}
In this work, we introduced SALT, a long-form generation benchmark that shifts uncertainty evaluation from coarse, noisy judges to high-resolution, deterministic ground truth. By eliminating label noise, SALT analyzes over 50 LLMs, exposing critical dynamics in calibration and ranking.

Our findings reveal that long-form correctness is path-dependent, as prefix quality shapes future accuracy while length effect is bounded.
Furthermore, effective ranking signals are resolution-dependent; models struggle with individual atoms but show reliability at coarser granularities, highlighting the need for confidence functions optimized for high-resolution units. Finally, we identify a reasoning trade-off: reasoning mechanisms (CoT or specialized training) boost accuracy but systematically degrade confidence ranking, making models less able to distinguish their own errors as they become more "convincing."

These results suggest that current scaling paradigms and reasoning mechanisms are decoupling capability from reliability. For risk-critical applications, high accuracy is insufficient if it is accompanied by brittle confidence estimates. SALT provides a necessary bedrock for future research to address this widening gap, moving beyond aggregate correctness to ensure that as models scale in intelligence, they also scale in the awareness of their own limitations.

Moving forward, the procedural nature of SALT enables scalable interventions. Its high-resolution error signals offer a rich substrate for training granular confidence classifiers, surpassing methods reliant on coarse sequence-level labels. Furthermore, this dense feedback mechanism is naturally suited for Reinforcement Learning (RL) post-training to mitigate the observed reasoning-ranking trade-off.

\newpage
\section*{Impact Statement}
This work aims to advance uncertainty estimation by improving the reliability of long-form generation. While our research has broad societal implications, particularly for the safety of high-stakes AI applications, we identify no specific consequences requiring exceptional highlight here.

\section*{Acknowledgments}
We thank Yonatan Belinkov for the fruitful discussions.
The research was partially supported by Israel Science Foundation, grant No 765/23

\bibliography{paper}
\bibliographystyle{icml2026}

%%%%%%%%%%%%%%%%%%%%%%%%%%%%%%%%%%%%%%%%%%%%%%%%%%%%%%%%%%%%%%%%%%%%%%%%%%%%%%%
%%%%%%%%%%%%%%%%%%%%%%%%%%%%%%%%%%%%%%%%%%%%%%%%%%%%%%%%%%%%%%%%%%%%%%%%%%%%%%%
% APPENDIX
%%%%%%%%%%%%%%%%%%%%%%%%%%%%%%%%%%%%%%%%%%%%%%%%%%%%%%%%%%%%%%%%%%%%%%%%%%%%%%%
%%%%%%%%%%%%%%%%%%%%%%%%%%%%%%%%%%%%%%%%%%%%%%%%%%%%%%%%%%%%%%%%%%%%%%%%%%%%%%%
\newpage
\appendix
\onecolumn

%%%%%%%%%%%%%%%%%%%%%%%%%%%%%%%%%%%%%%%%%%%%%%%%%%%%%%%%%%%%%%%%%%%%%%%%%%%%%%%
%%%%%%%%%%%%%%%%%%%%%%%%%%%%%%%%%%%%%%%%%%%%%%%%%%%%%%%%%%%%%%%%%%%%%%%%%%%%%%%
\section{Benchmark Tasks}
\label{appendix: benchmark tasks}
This appendix details the technical implementation and procedural generation of the SALT tasks introduced in Section~\ref{subsec: benchmark tasks}. We provide a comprehensive breakdown of dataset statistics in Table~\ref{tab: benchmark tasks}, while Table~\ref{tab:matrix-dims-general} specifies the dimensionality constraints for the mathematical tasks. To ensure reproducibility, we include the exact prompt templates used across all tasks in Figures~\ref{fig: matmul prompt}--\ref{fig: words collection prompt}. The following subsections further analyze task-specific performance and the unique challenges posed by each domain.

\begin{table}[t]
\caption{Overview of SALT tasks and sample sizes}
\label{tab: benchmark tasks}
\vskip 0.1in
\begin{center}
\small
\begin{tabularx}{\columnwidth}{Xccr}
\toprule
\textbf{Task name} & \textbf{Domain} & \textbf{Samples} & \textbf{Atoms} \\
\midrule
    Code & Coding &  148 & 1,934 \\
    First-Order Logic & Logic & 400 & 8,410 \\
    Matrix Multiplication & Math & 300 & 863 \\
    Kronecker Product & Math & 300 & 8,618 \\
    DNA Translation & Translation & 300 & 28,972 \\
    Words Collection & Multi-Needle & 300 & 6,000 \\
    \midrule
    Total &  &  1,748 & 54,797 \\
\bottomrule
\end{tabularx}
\end{center}
\vskip -0.1in
\end{table}

\begin{figure}[ht]
\vskip 0.2in
\begin{center}
\centerline{\includegraphics[width=0.75\linewidth]{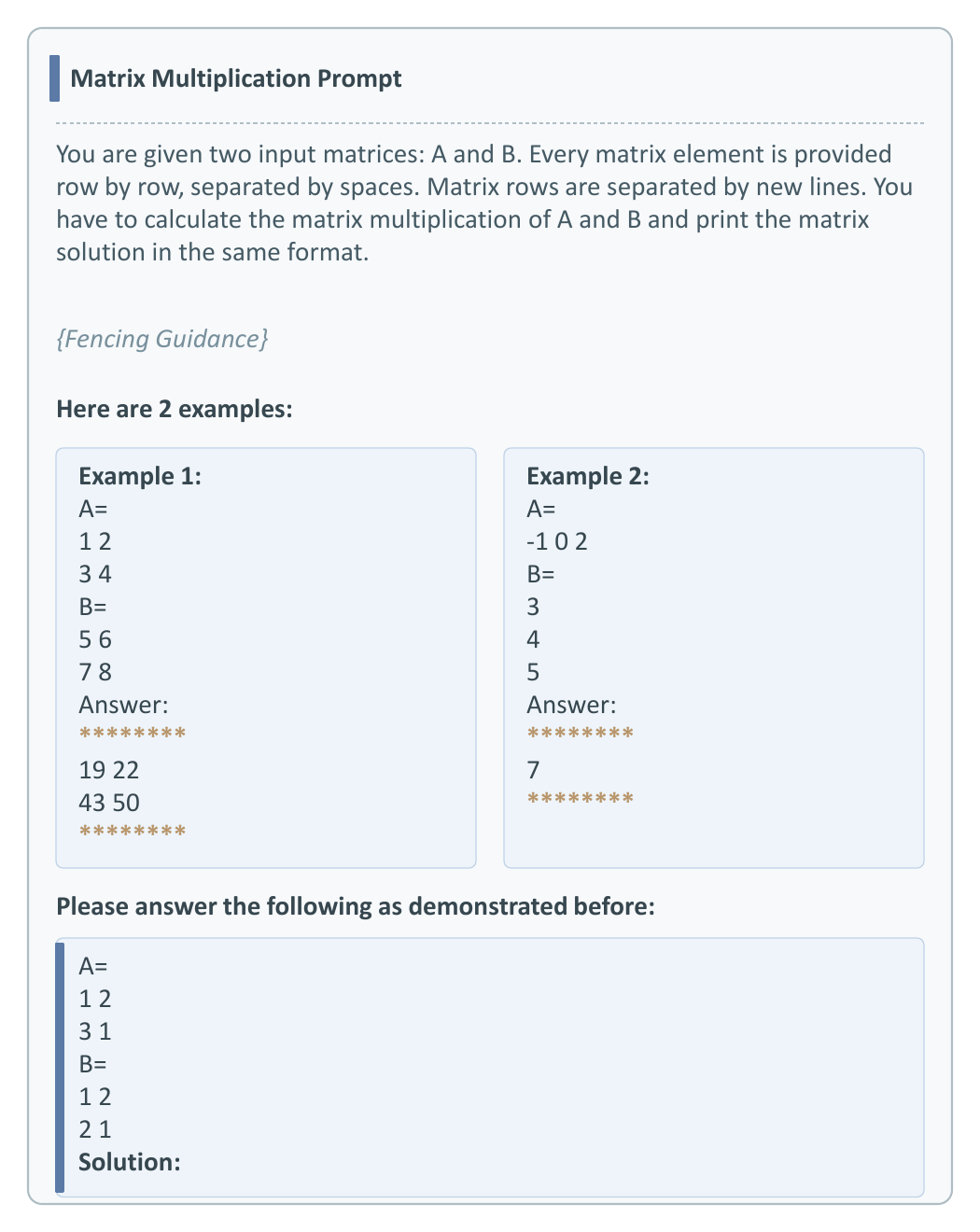}}
\caption{The Matrix Multiplication task prompt, demonstrating few-shot examples and strict output fencing instructions.}
\label{fig: matmul prompt}
\end{center}
\vskip -0.2in
\end{figure}

\begin{figure}[ht]
\vskip 0.2in
\begin{center}
\centerline{\includegraphics[width=0.75\linewidth]{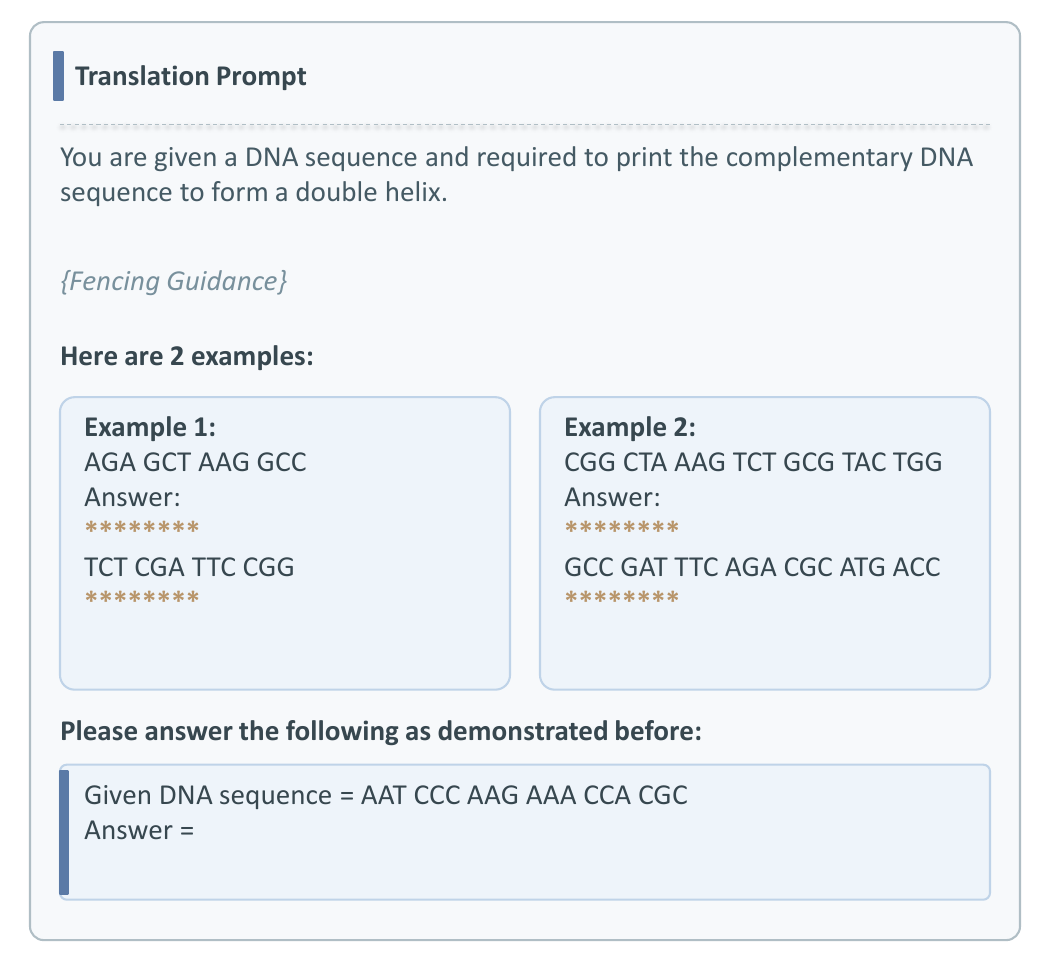}}
\caption{The Translation task prompt, demonstrating few-shot examples and strict output fencing instructions.}
\label{fig: Translation prompt}
\end{center}
\vskip -0.2in
\end{figure}

\begin{figure}[ht]
\vskip 0.2in
\begin{center}
\centerline{\includegraphics[width=0.75\linewidth]{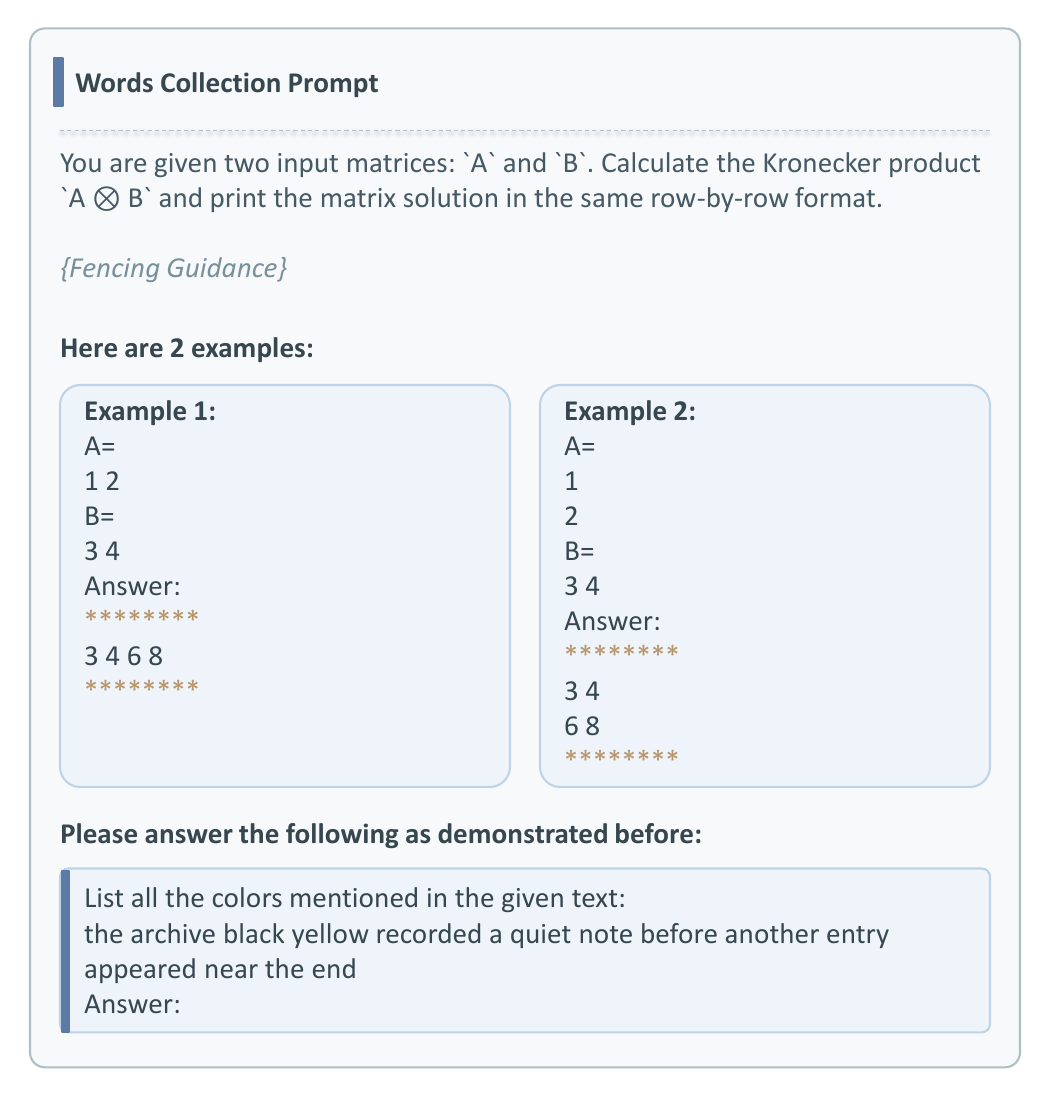}}
\caption{The Kronecker Product prompt, demonstrating few-shot examples and strict output fencing instructions.}
\label{fig: kronecker prompt}
\end{center}
\vskip -0.2in
\end{figure}

\begin{figure}[ht]
\vskip 0.2in
\begin{center}
\centerline{\includegraphics[width=0.75\linewidth]{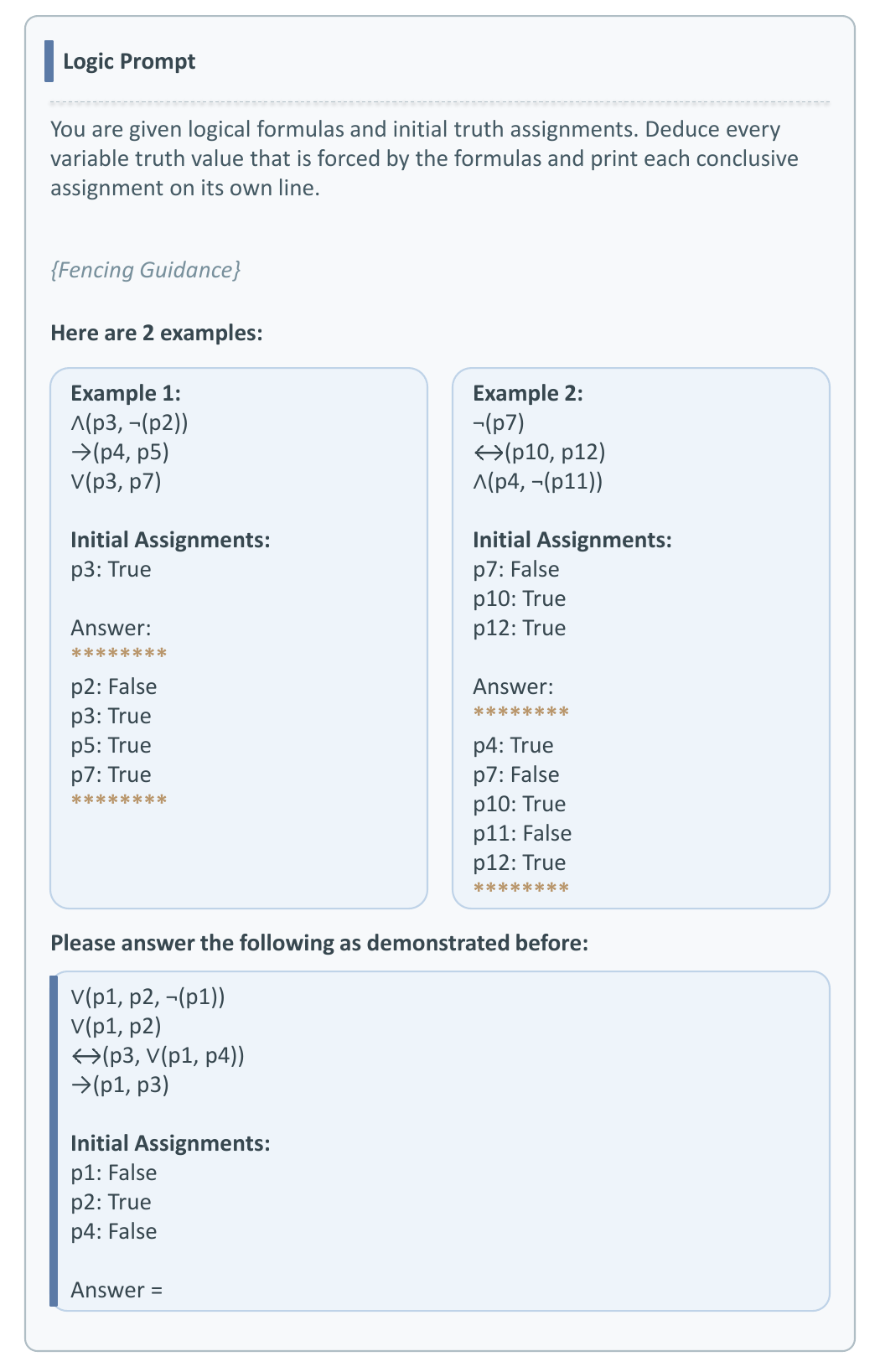}}
\caption{The First-Order Logic task prompt, demonstrating few-shot examples and strict output fencing instructions.}
\label{fig: logic prompt}
\end{center}
\vskip -0.2in
\end{figure}

\begin{figure}[ht]
\vskip 0.2in
\begin{center}
\centerline{\includegraphics[width=0.75\linewidth]{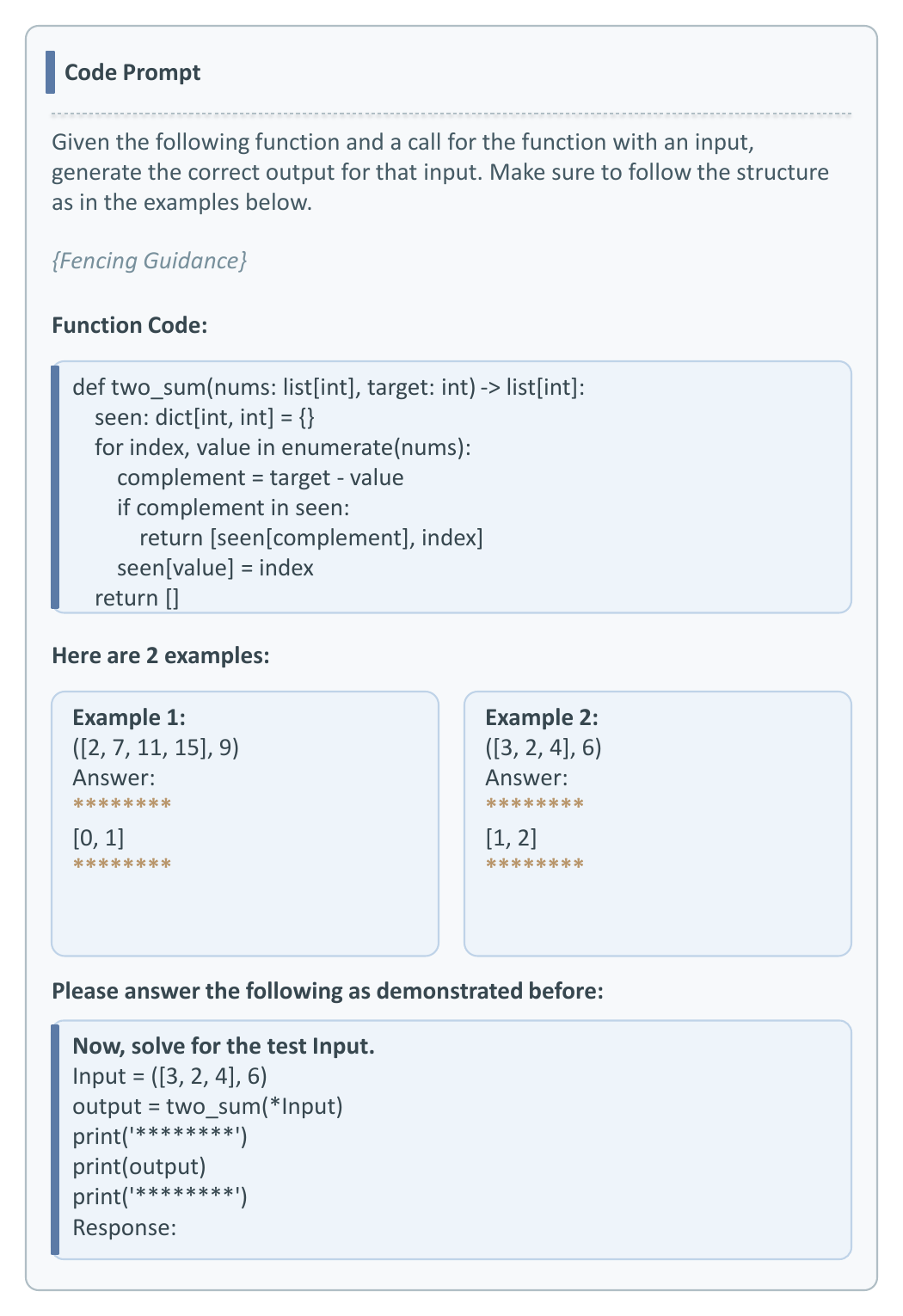}}
\caption{The Code task prompt, demonstrating few-shot examples and strict output fencing instructions.}
\label{fig: Code prompt}
\end{center}
\vskip -0.2in
\end{figure}

\begin{figure}[ht]
\vskip 0.2in
\begin{center}
\centerline{\includegraphics[width=0.75\linewidth]{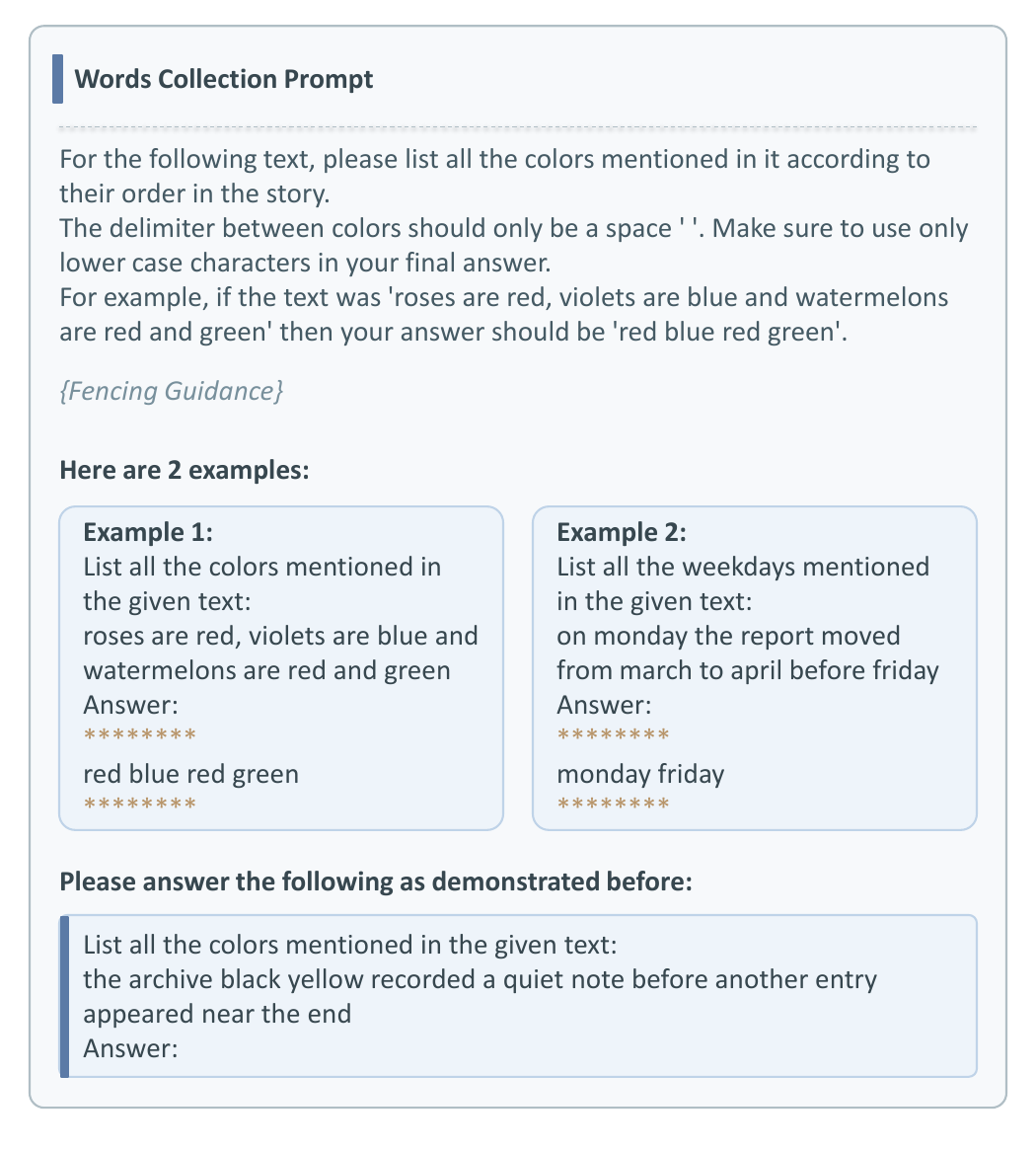}}
\caption{The Words Collection task prompt, demonstrating few-shot examples and strict output fencing instructions.}
\label{fig: words collection prompt}
\end{center}
\vskip -0.2in
\end{figure}

\textbf{Task-Level Metric Normalization.} 
Note that the number of samples and atoms varies across tasks (Table~\ref{tab: benchmark tasks}). To ensure this imbalance does not skew the performance metrics, unless reported for each task independently, all evaluation metrics (precision, ECE, and AUROC) are first computed independently for each task. The final benchmark-level score is then obtained by averaging across tasks, ensuring that each task contributes equally to the overall evaluation, regardless of its size.

As mentioned in Section~\ref{subsec: metrics}, we measure a very high Pearson correlation between the precision and recall across our tasks. The correlations range from 0.92 to 1.0, with an average of 0.98, as can be seen in Table~\ref{tab: precision recall correlation}.

\begin{table}[t]
\caption{Pearson correlation between Precision and Recall across benchmark tasks.}
\label{tab: precision recall correlation}
\vskip 0.1in
\begin{center}
\small
\begin{tabularx}{\columnwidth}{Xr}
\toprule
\textbf{Task} & \textbf{Correlation} \\
\midrule
Code & 1.00 \\
First-Order Logic & 0.99 \\
Matrix Multiplication & 0.99 \\
Kronecker Product & 0.99 \\
DNA Translation & 0.99 \\
Words Collection & 0.92 \\
\bottomrule
\end{tabularx}
\end{center}
\vskip -0.1in
\end{table}

Figure~\ref{fig: precision atom vs line units} compares task-specific precision across atomic and line-unit resolutions. We observe a dramatic gap between the atomic and line levels' performance, underscoring the necessity of fine-grained evaluation of long-form generations.
Figure~\ref{fig: ECE_atom_vs_line_tasks} exhibits how Calibration differs across tasks and task-dependent ECE comparison of atomic versus line levels. We observe that tasks exhibiting higher precision in Figure~\ref{fig: precision atom vs line units} also show lower ECE, aligned with the observed high correlation between precision and ECE in Section~\ref{subsec: uncertainty estimation}. Furthermore, unlike Macro-AUROC, which significantly favored the Line granularity, we observe a milder but opposite trend regarding ECE. We performed a one-sided Wilcoxon paired test to test the ECE preferred unit-level evaluation and find out that, on average, atomic-level performs better than line-level ($p=3.06 \times 10^{-6}$). However, when inspecting each task individually, it is statistically significant only in the translation and Multi-Needle tasks.
Additionally, Figure~\ref{fig: redundant atoms per task} illustrates the \emph{redundant atoms percentage}—the proportion of generated atomic units that are redundant. For example, a 10-atom generation containing two redundant atoms would yield a 20\% redundancy rate of atoms.

\begin{table}[t]
\caption{Matrix dimension constraints in SALT for a given problem size $L$. Neither of the matrices is a Scalar ($1\times1$) matrix.}
\label{tab:matrix-dims-general}
\begin{center}
\small
\begin{tabular}{l l l}
\toprule
Operation & Matrix shapes & Dimensional size \\
\midrule
MatMul
& $A:m\times k,\; B:k\times n$
& $m n = L,\; k \le 3$ \\
Kronecker
& $A:m\times n,\; B:p\times q$
& $m n \cdot p q = L$ \\
\bottomrule
\end{tabular}
\end{center}
\vskip -0.1in
\end{table}

\begin{figure}
    \centering
    \includegraphics[width=0.5\linewidth]{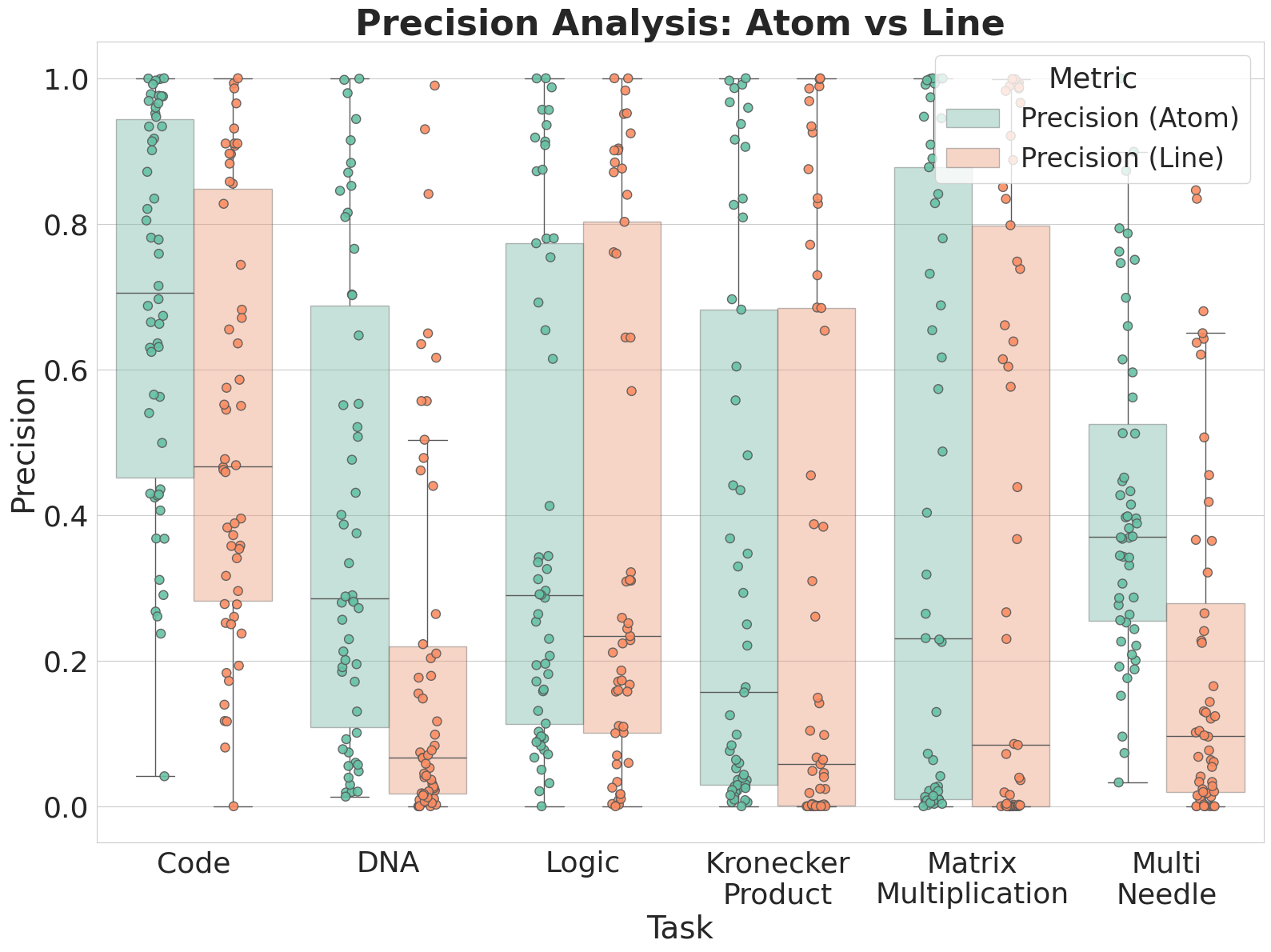}
    \caption{Comparison of precision across tasks at atomic and line-unit resolutions.}
    \label{fig: precision atom vs line units}
\end{figure}

\begin{figure}
    \centering
    \includegraphics[width=0.5\linewidth]{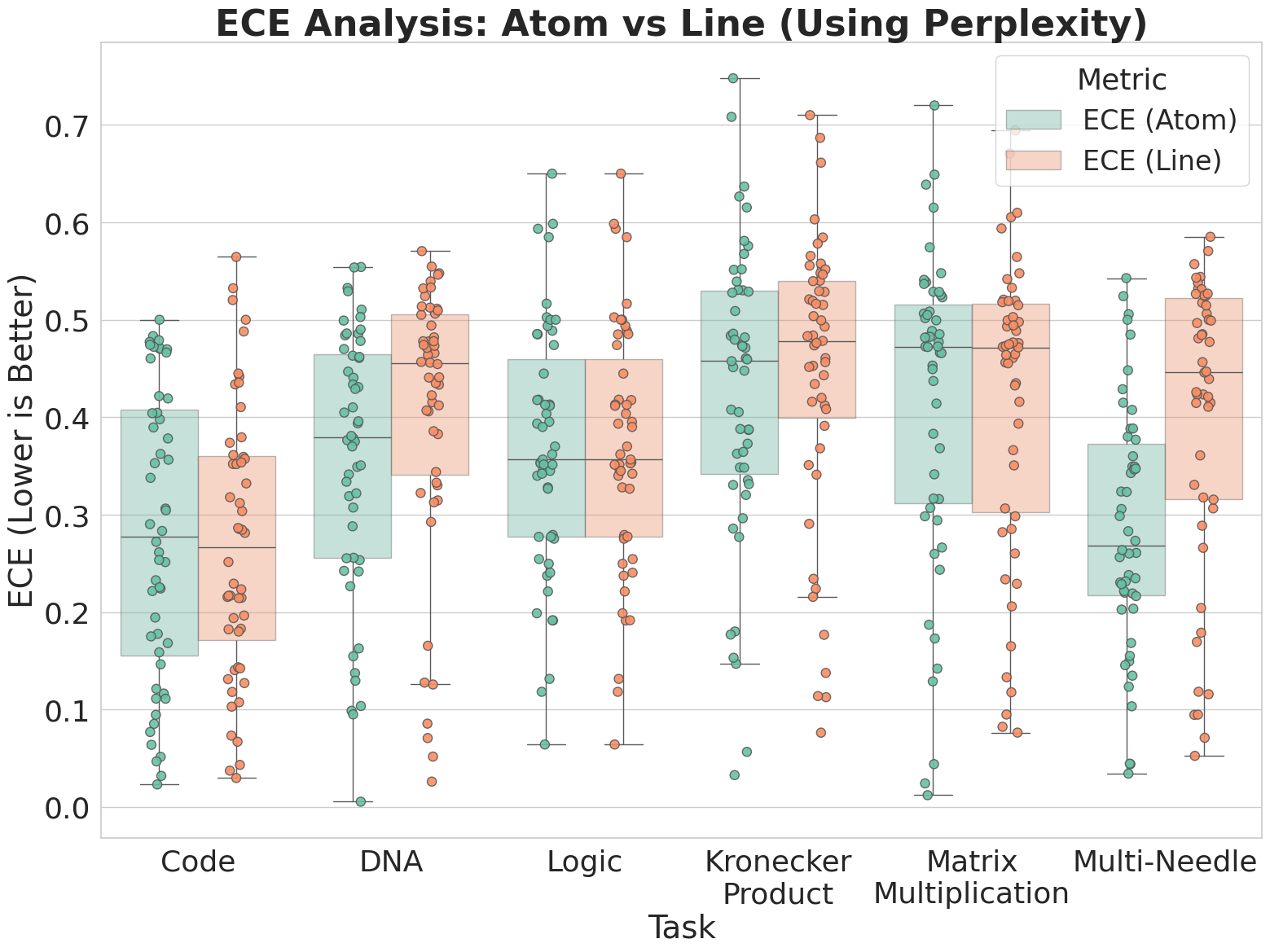}
    \caption{Comparison of ECE across tasks at atomic and line-unit resolutions. Confidence is computed using Perplexity, the optimal function for calibration established in Section~\ref{subsec: uncertainty estimation}.}
    \label{fig: ECE_atom_vs_line_tasks}
\end{figure}

\begin{figure}
    \centering
    \includegraphics[width=0.5\linewidth]{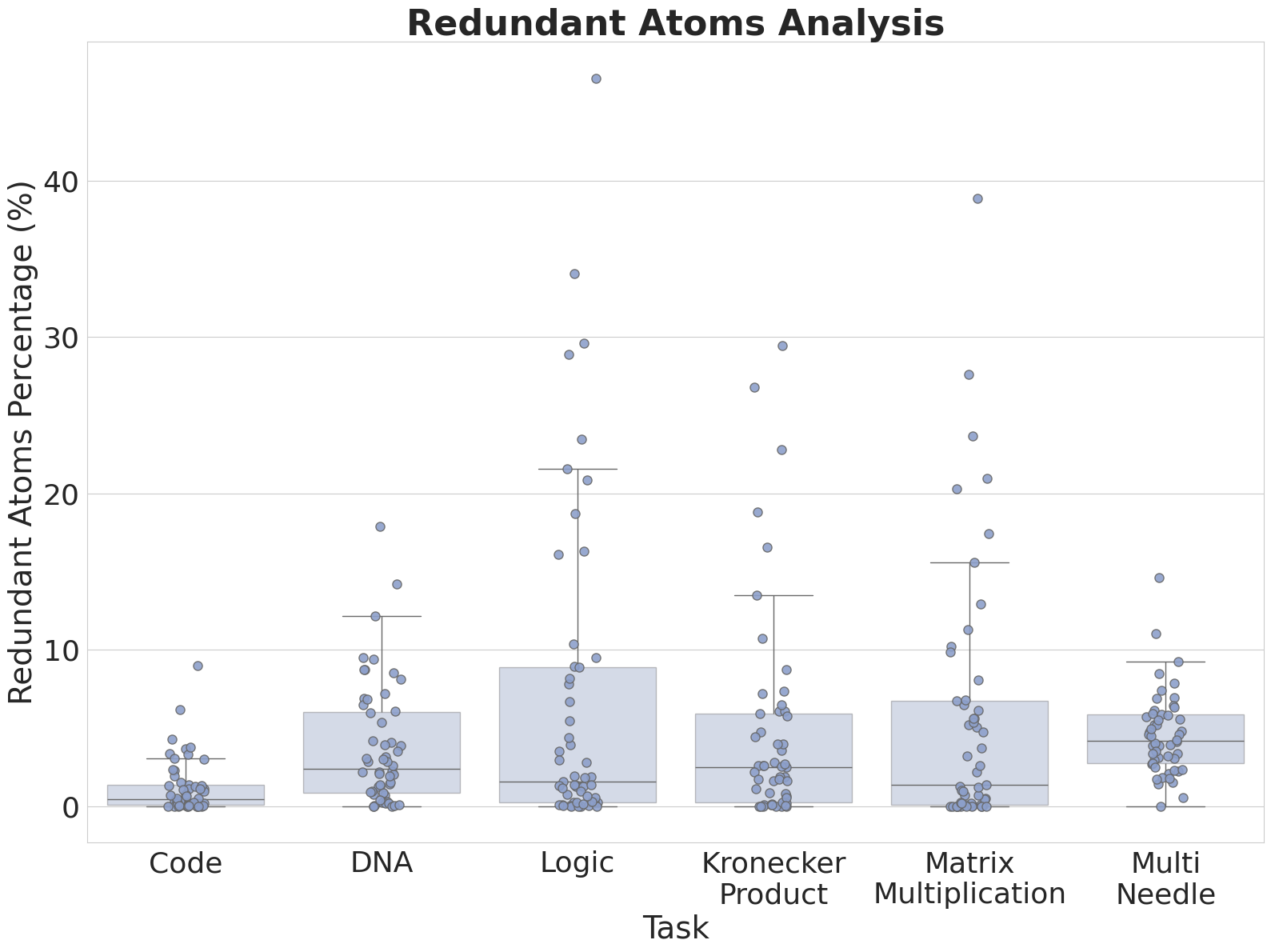}
    \caption{The measured relative redundant atoms in generations, across tasks.}
    \label{fig: redundant atoms per task}
\end{figure}

\subsection{Maze and ARC-AGI Evaluation}
\label{appendix: maze and arc-agi results}
We additionally evaluate SALT on ARC-AGI and Maze, two tasks that extend the benchmark toward abstract induction and spatial planning. Due to incomplete model coverage under compute constraints, these tasks are not included in the aggregate main-body results, and we report them separately here. Despite this limitation, both tasks further illustrate the value of SALT's unit-level evaluation: ARC-AGI naturally decomposes grid outputs into cells, while Maze decomposes paths into sequential steps.

\textbf{ARC-AGI.}
Figure~\ref{fig: arc agi precision atom vs generation} shows a large gap between atom-level and generation-level precision. Models often recover a substantial fraction of the output cells, yet exact full-grid correctness remains much lower, demonstrating how generation-level evaluation hides partial structure recovered by the model. Ranking performance exhibits a related granularity effect: line-level and whole-generation AUROC generally provide stronger signals than atom-level AUROC (Figures~\ref{fig: arc agi auroc atom vs line} and~\ref{fig: arc agi auroc atom vs generation}), consistent with the finding that confidence ranking is more reliable at coarser resolutions.

\textbf{Maze.}
Maze exhibits a different pattern. As shown in Figure~\ref{fig: maze precision atom vs line}, both line-level and atom-level exhibit low precision, reflecting the difficulty of preserving longer valid path segments even when individual moves are partially correct. Figure~\ref{fig: maze precision vs auroc atom} further shows a strong negative association between atom-level precision and Macro-AUROC on the evaluated models, suggesting that models better at finding paths are not necessarily better at ranking their correct and incorrect steps by confidence. While the limited model coverage prevents broad conclusions, this reinforces the paper's central observations that correctness and confidence ranking can decouple, especially in long, structured generations, and that models perform poorly out of the box confidence ranking.

\begin{figure}[!htbp]
\vskip 0.2in
\begin{center}
\centerline{\includegraphics[width=0.8\columnwidth]{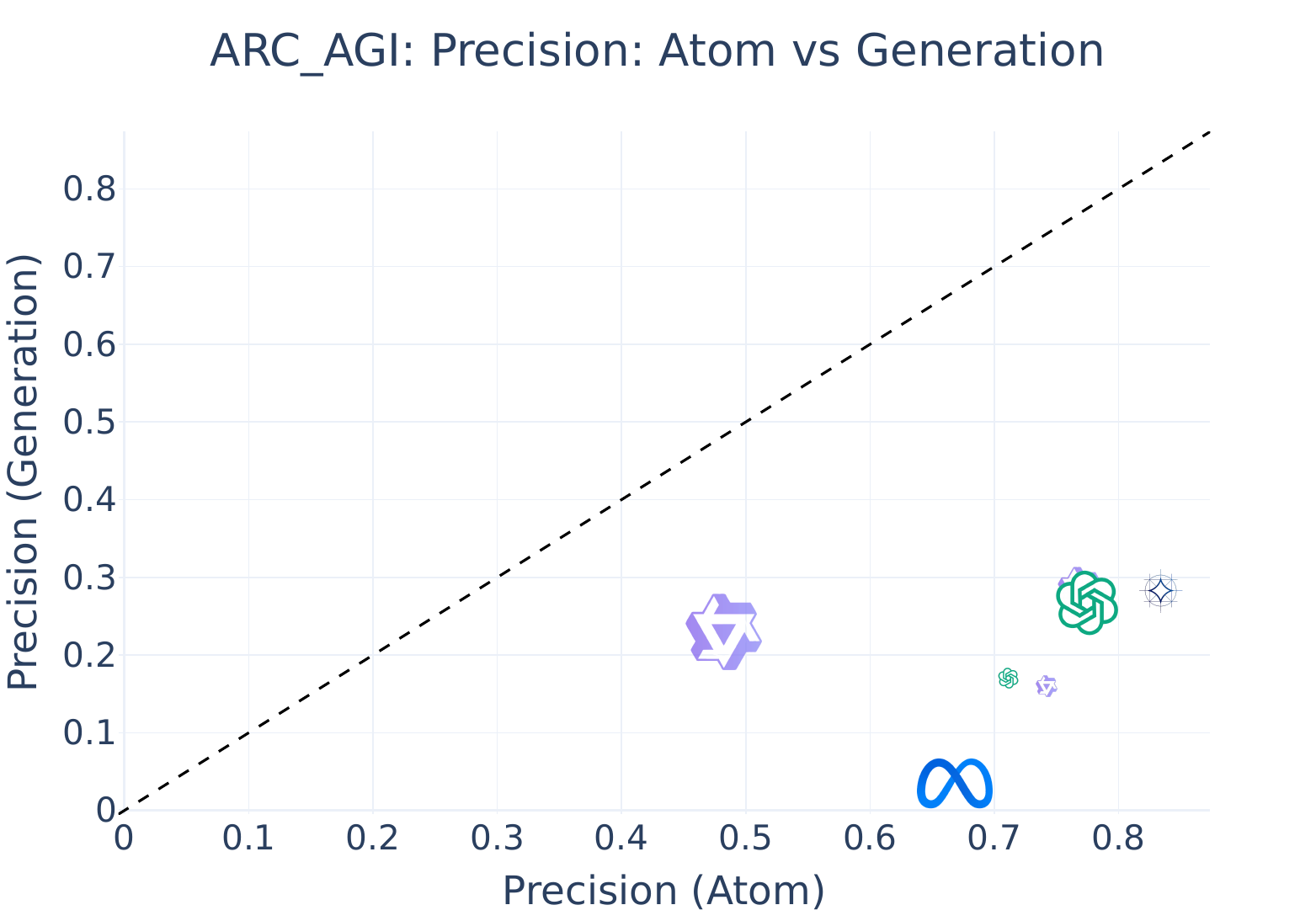}}
\caption{ARC-AGI precision at atomic and generation-level resolutions. Atom-level evaluation reveals high partial cell-wise correctness that is obscured by poor exact generation-level scoring.}
\label{fig: arc agi precision atom vs generation}
\end{center}
\vskip -0.2in
\end{figure}

\begin{figure}[!htbp]
\vskip 0.1in
\centering
    % --- Subfigure (a): Atom ---
    \begin{subfigure}[b]{\columnwidth}
        \centering
        \includegraphics[width=0.8\columnwidth]{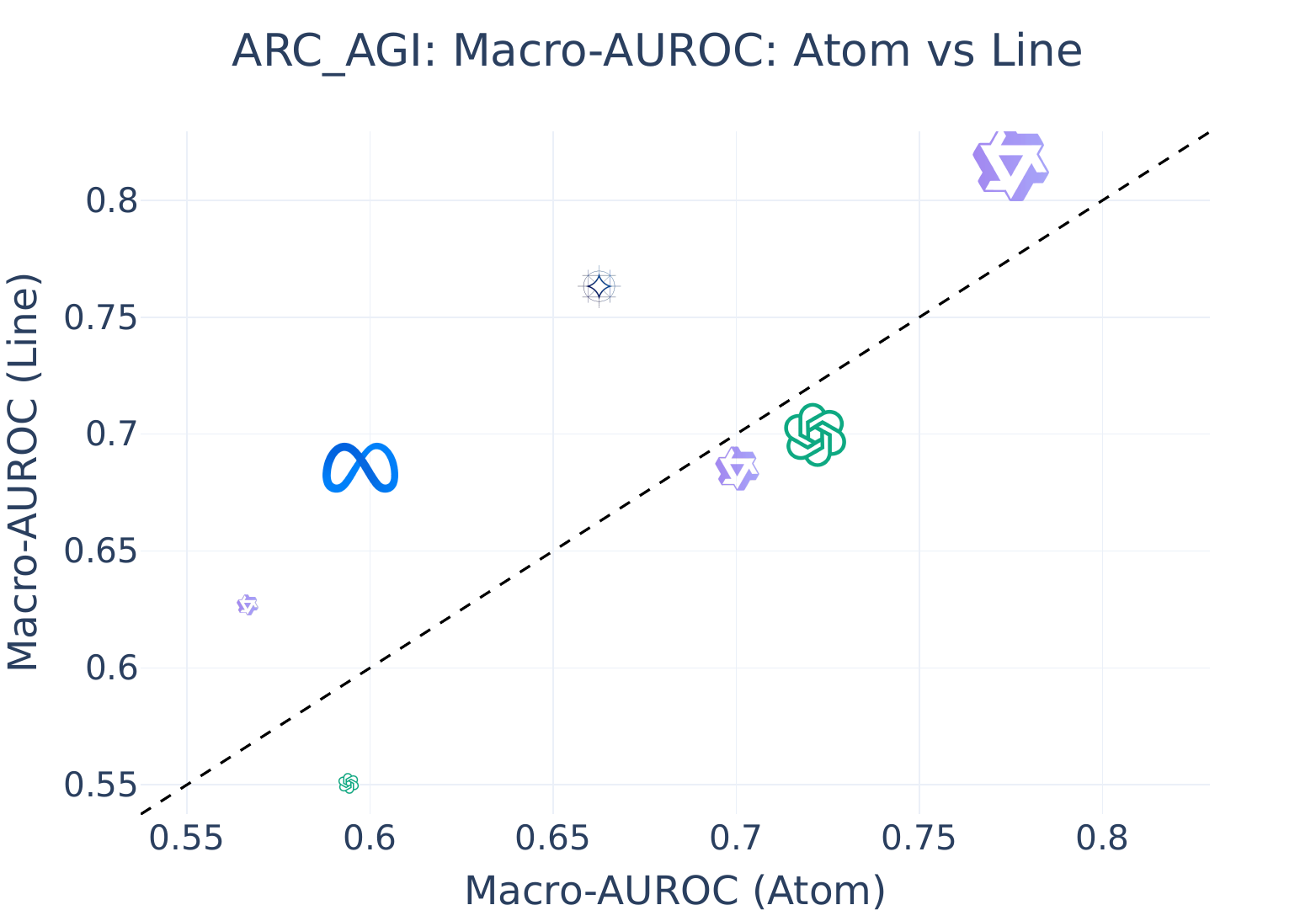}
        \caption{ARC-AGI Macro-AUROC at atomic and line-level resolutions.}
        \label{fig: arc agi auroc atom vs line}
    \end{subfigure}

    \vskip 0.1in

    % --- Subfigure (b): Line ---
    \begin{subfigure}[b]{\columnwidth}
        \centering
        \includegraphics[width=0.8\columnwidth]{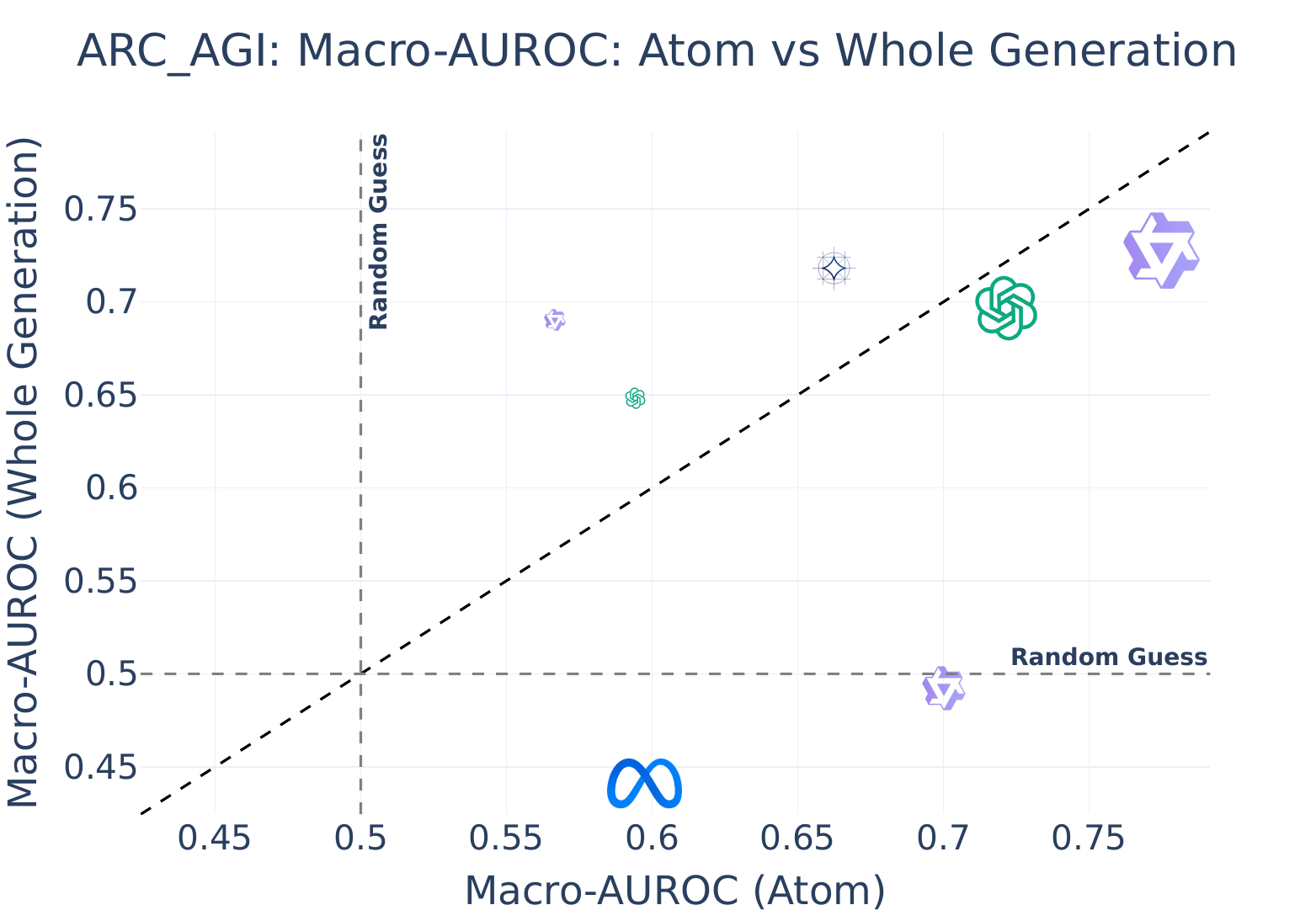}
        \caption{ARC-AGI Macro-AUROC at atomic and generation resolutions.}
        \label{fig: arc agi auroc atom vs generation}
    \end{subfigure}

\caption{ARC-AGI Macro-AUROC comparison at atomic with Line and generation resolutions. Both coarse resolutions often provide a clearer confidence ranking signal than atom-level ranking.}
\label{fig: arc agi auroc atom vs coarse granularity}
\vskip -0.2in
\end{figure}

\begin{figure}[ht]
\vskip 0.2in
\begin{center}
\centerline{\includegraphics[width=0.8\columnwidth]{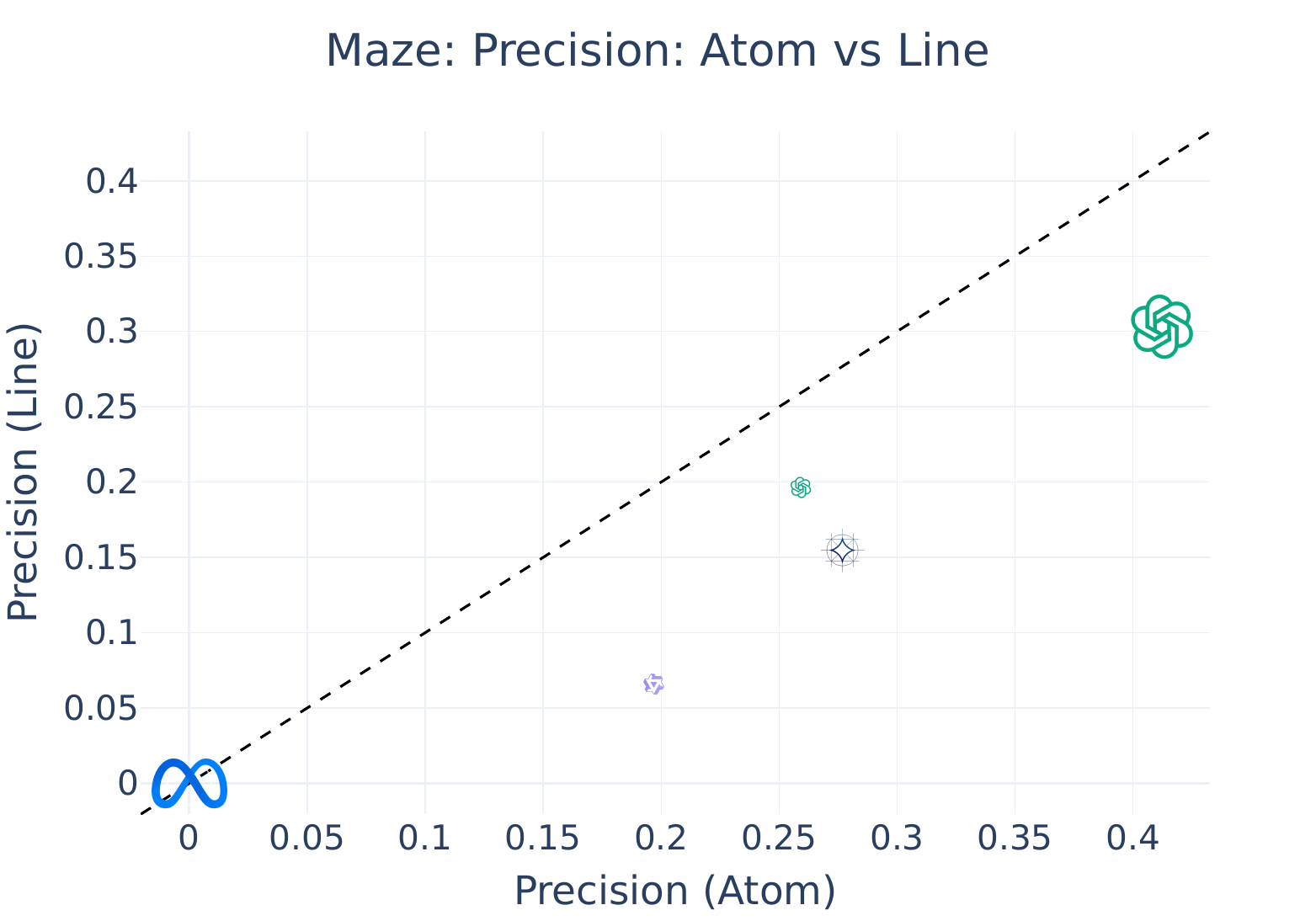}}
\caption{Maze precision at atomic and line-level resolutions.}
\label{fig: maze precision atom vs line}
\end{center}
\vskip -0.2in
\end{figure}

\begin{figure}[ht]
\vskip 0.2in
\begin{center}
\centerline{\includegraphics[width=0.8\columnwidth]{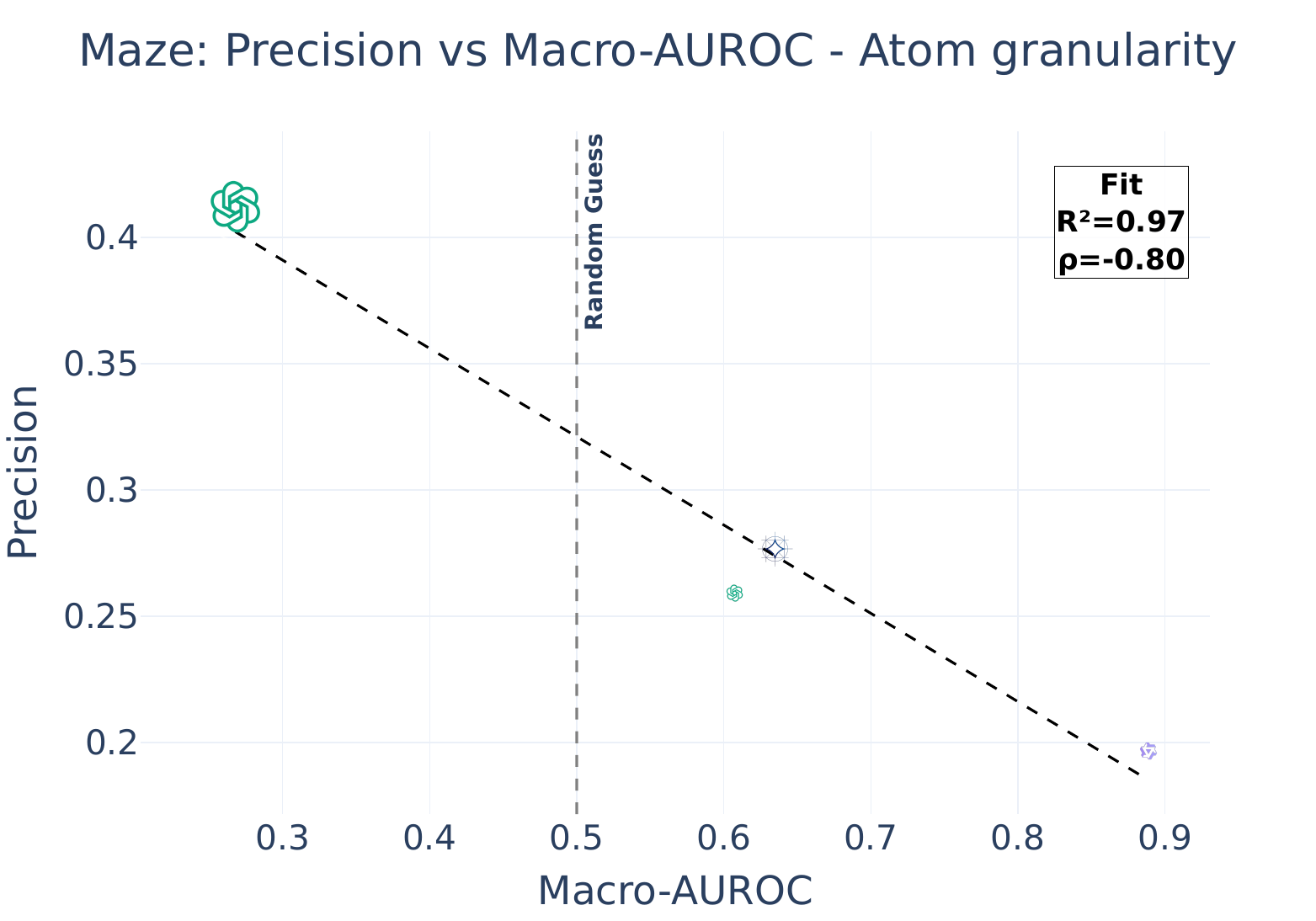}}
\caption{Maze atom-level precision versus Macro-AUROC. The evaluated models show a strong negative association, suggesting that higher path accuracy does not necessarily imply better confidence ranking, and vice versa.}
\label{fig: maze precision vs auroc atom}
\end{center}
\vskip -0.2in
\end{figure}

\FloatBarrier

\section{More on Correctness Analysis and Drivers}
\label{appendix: more on correctness}

\subsection{LLM Correctness Given Prior Correctness Percentage}
\label{appendix: correctness given prior success}
While Figure~\ref{fig: prior error effect on current accuracy} in Section~\ref{subsec: LLMs correctness} exhibit the relation between LLMs correctness and their prior mistakes, it makes a coarse aggregation across many LLMs, prompting methods, and tasks. Here, we prove it is a solid consistent behavior among sub groups as well.

First, we explore the discussed relation w.r.t each task in our benchmark. As shown in Figure~\ref{fig: prior error effect on current accuracy - task}, we observe the same pattern in the majority of task, with the exception of the Code and Kronecker product tasks. While LLMs are showing more robustness to past mistakes in the Code task, they are even more vulnerable in the Kronecker product task.

\begin{figure}[ht]
\vskip 0.2in
\begin{center}
\centerline{\includegraphics[width=\columnwidth]{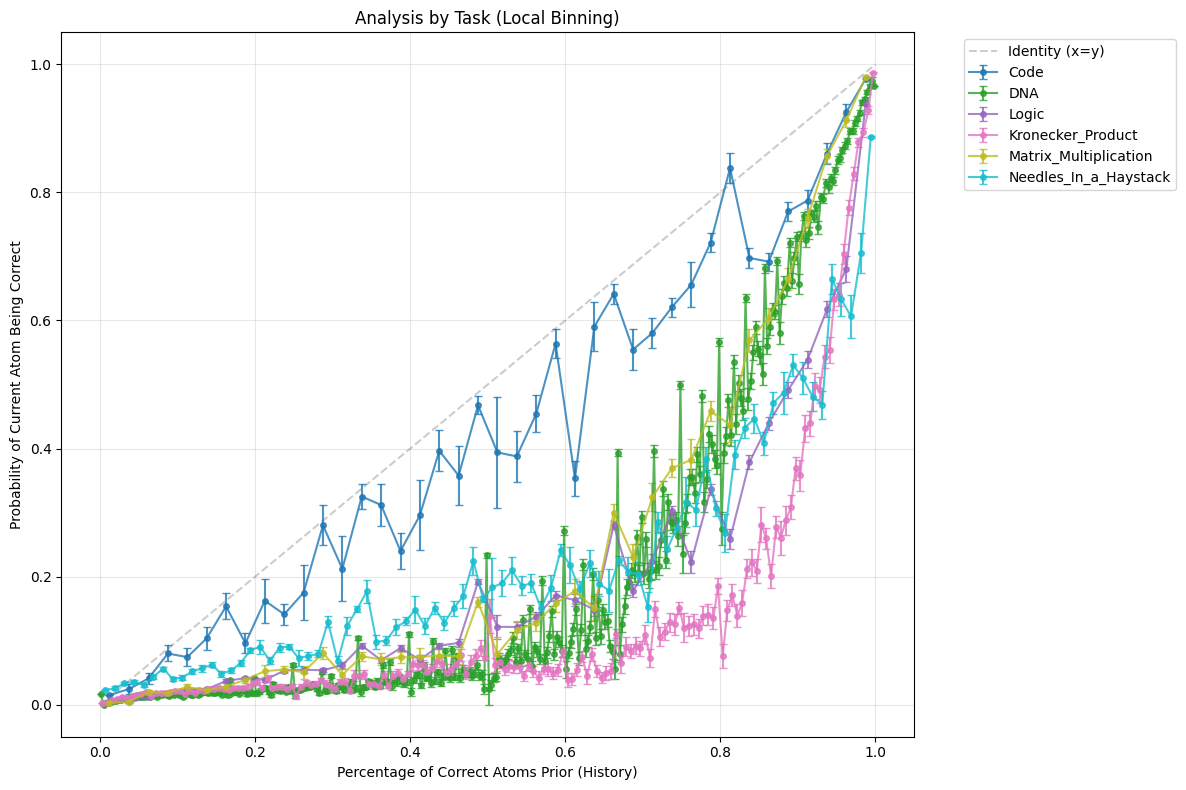}}
\caption{A comparison across the benchmark's tasks, of the probability of an atom being correct given the percentage or prior correct atoms.}
\label{fig: prior error effect on current accuracy - task}
\end{center}
\vskip -0.2in
\end{figure}

Second, we validate whether different prompting method plays a role, where we compare a regular prompt with a Chain-of-Thought prompt. Figure~\ref{fig: prior error effect on current accuracy - prompt} proves it makes no difference.

\begin{figure}[ht]
\vskip 0.2in
\begin{center}
\centerline{\includegraphics[width=\columnwidth]{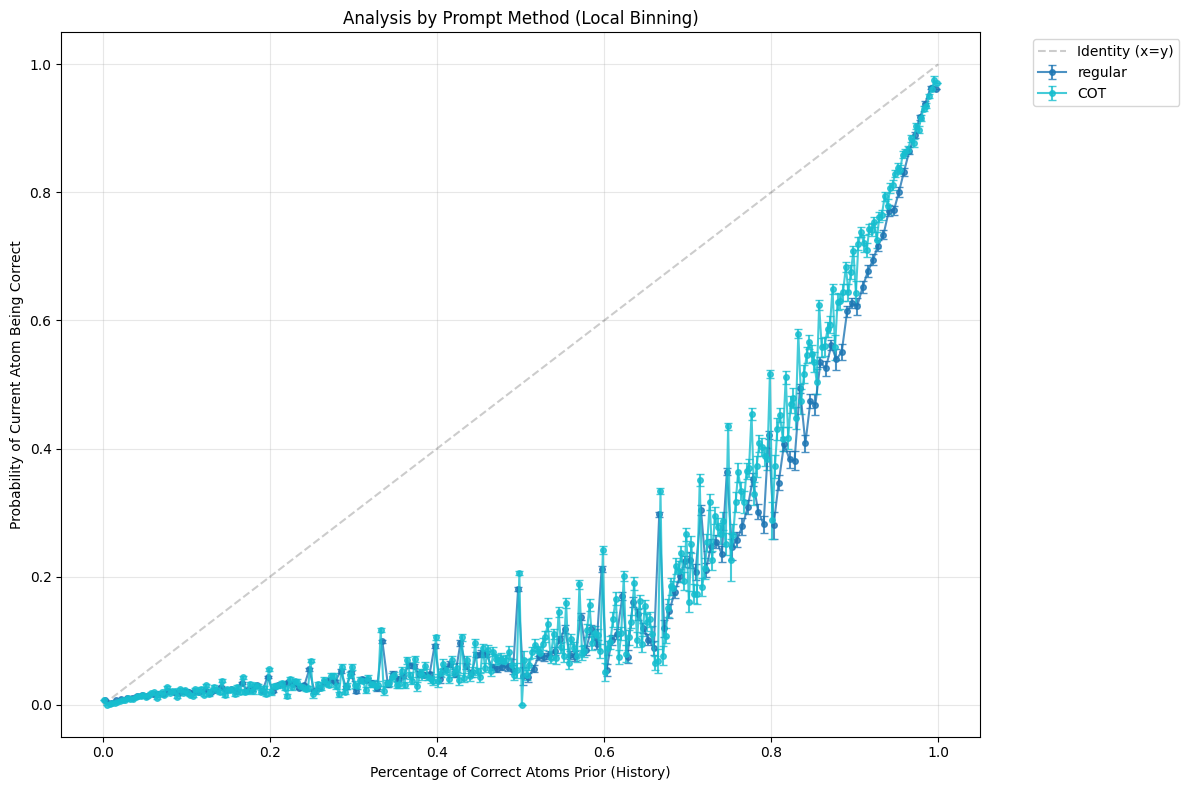}}
\caption{A comparison between regular and Chain-of-Thought prompting, of the probability of an atom being correct given the percentage or prior correct atoms.}
\label{fig: prior error effect on current accuracy - prompt}
\end{center}
\vskip -0.2in
\end{figure}

Reasoning models are showing extraordinary precision compared to instruction-tuned models. Lastly, we choose to compare instruction-tuned with reasoning models, in an attempt to check whether reasoning models not only improve accuracy, but also exhibit a different level of robustness to previous mistakes. In Figure~\ref{fig: prior error effect on current accuracy - reasoning-instruct}, we show that both instruction-tuned and reasoning models lack the resilience to previous mistakes.

\begin{figure}[ht]
\vskip 0.2in
\begin{center}
\centerline{\includegraphics[width=\columnwidth]{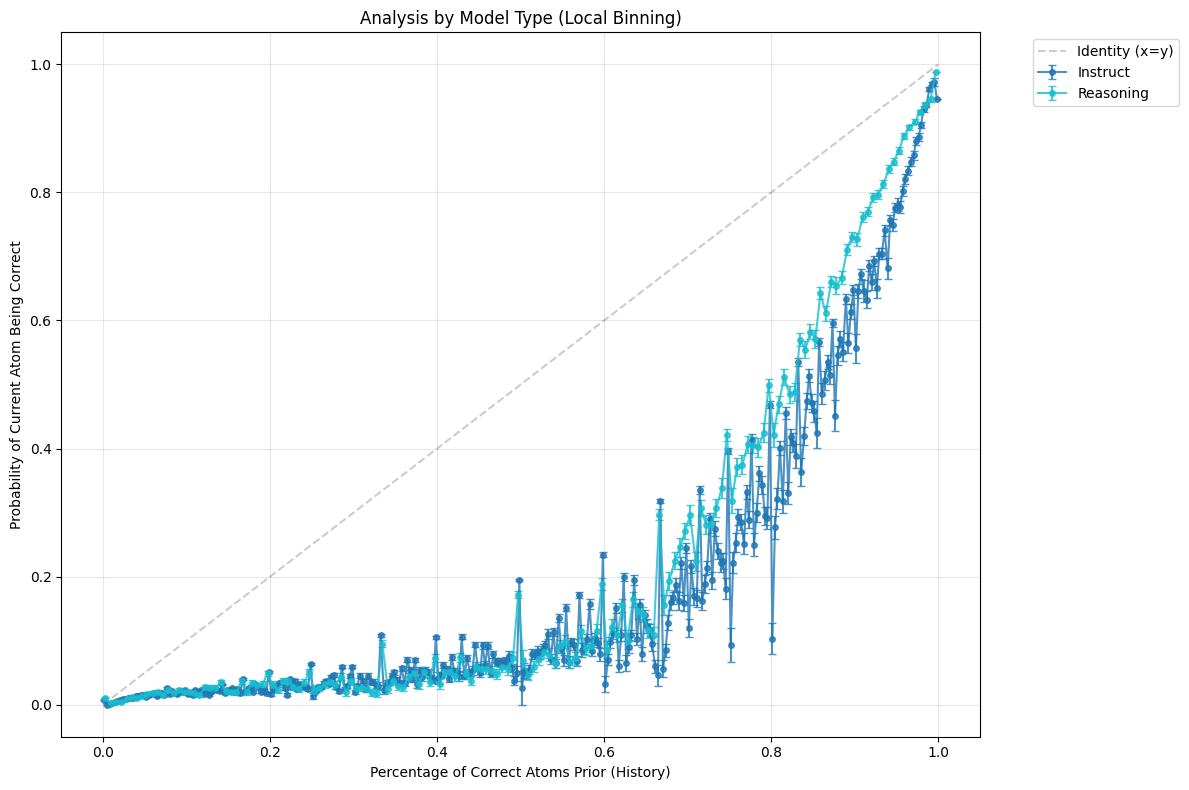}}
\caption{A comparison between instruction-tuned and reasoning models, of the probability of an atom being correct given the percentage or prior correct atoms.}
\label{fig: prior error effect on current accuracy - reasoning-instruct}
\end{center}
\vskip -0.2in
\end{figure}

\subsection{Controlled-Intervention Prefix-Correctness Experiment and Length-response Association}
\label{appendix: more on intervention experiment}

This appendix gives the full identification, inference, and robustness
details for the experiment summarized in the main body. The protocol
intervenes on the model's own answer atoms; the analysis quantifies the
causal effects of two perturbation treatments and one observational
adjusted association.

\subsubsection{Protocol and Treatments}
\label{appendix:protocol}

Let $a_{m,s}^{(p)}$ denote the atom at position $p$ in model $m$'s
original generation on sample $s$. For each triple $(m, s, p^{\star})$
we retain the original prefix $a_{m,s}^{(1)}, \ldots,
a_{m,s}^{(p^{\star}-1)}$ and produce a family of \emph{modified}
prefixes by deterministically flipping selected positions in
$\{1, \ldots, p^{\star}-1\}$: positions originally wrong are replaced
with the gold atom (\textit{correction}), and positions originally
correct are replaced with a random non-gold atom (\textit{corruption}).
One row per triple has the original prefix and serves as the
\emph{paired baseline}. The outcome is
\begin{equation}
Y \;=\; \mathbb{1}\!\left[\,\text{model } m \text{ regenerates atom at }
p^{\star} \text{ matching gold}\,\right] \in \{0, 1\}.
\end{equation}

We consider three treatments:
\begin{itemize}
    \item \textbf{Global prefix $\Delta$:} $T_{\mathrm{global}} =
    C_{1:p^{\star}-1}^{\mathrm{mod}} - C_{1:p^{\star}-1}^{\mathrm{orig}}$,
    the change in fraction of correct atoms over the whole prefix.
    \item \textbf{Local last-10 $\Delta$:} $T_{\mathrm{local}} =
    C_{p^{\star}-10:p^{\star}-1}^{\mathrm{mod}} -
    C_{p^{\star}-10:p^{\star}-1}^{\mathrm{orig}}$, restricted to the ten
    positions immediately before the target.
    \item \textbf{Answer-context length:} $T_{\mathrm{len}} =
    p^{\star} - 1$. \emph{Not} an intervention; included so we can
    adjust for it and quantify its association with $P(Y{=}1)$
    conditional on prefix correctness.
\end{itemize}

\subsubsection{Paired-Difference Outcome}
\label{appendix:paired}

For each row $i$, the matched baseline $b(i)$ has the same
$(m_{i}, s_{i}, p_{i}^{\star})$ and \texttt{is\_original\_prefix = 1}.
The paired outcome
\begin{equation}
\tilde Y_{i} \;=\; Y_{i} \;-\; Y_{b(i)} \;\in\; \{-1, 0, +1\}
\end{equation}
equals $0$ on baseline rows by construction, anchoring the regression
at the no-intervention point. Pairing absorbs all
$(m, s, p^{\star})$-level fixed effects --- baseline difficulty,
sample-specific lexical context, model-specific generation tendencies
--- at a granularity finer than any dummy variable could provide.

\subsubsection{Spline Model}
\label{appendix:model}

For each intervention treatment $T \in
\{T_{\mathrm{global}}, T_{\mathrm{local}}\}$ we fit, by OLS,
\begin{equation}
\tilde Y_{i} \;=\; \alpha + \sum_{k=1}^{5} \beta_{k}\,\phi_{k}(T_{i})
+ \boldsymbol{\gamma}^{\top}\mathbf{X}_{i} + \varepsilon_{i},
\label{eq:paired-ols}
\end{equation}
with $\{\phi_{k}\}_{k=1}^{5}$ a cubic B-spline basis
($\mathrm{df}{=}5$, $\deg{=}3$) and \emph{explicit} boundary knots at
$[\min_{j} T_{j} - \epsilon,\ \max_{j} T_{j} + \epsilon]$,
$\epsilon = 10^{-6}(\max-\min)$. The explicit-bounds parametrization
(i) suppresses Patsy's implicit intercept column, which would otherwise
collide with the regression constant and silently corrupt the joint
test on $\{\beta_{k}\}$, and (ii) admits prediction on per-model subsets
whose own treatment range is narrower than the pooled range.

For $T_{\mathrm{len}}$ we instead fit a logistic GLM,
\begin{equation}
\operatorname{logit} \Pr(Y_{i}{=}1) \;=\; \alpha
+ \sum_{k=1}^{5} \beta_{k}\,\phi_{k}(T_{\mathrm{len}, i})
+ \boldsymbol{\gamma}^{\top}\mathbf{X}_{i},
\label{eq:length-glm}
\end{equation}
because no paired baseline exists for ``no length''
($T_{\mathrm{len}}$ is always positive). The predicted curve is
recentered at $T_{\mathrm{len}}{=}0$ at plot time.

\subsubsection{Confounders}
\label{appendix:confounders}

The confounder vector $\mathbf{X}_{i}$ comprises:
\begin{itemize}
    \item standardized target position $z = (p^{\star} - \overline{p^{\star}}) / \mathrm{sd}(p^{\star})$;
    \item original-prefix correctness $C^{\mathrm{orig}}$ (and
    $C_{10}^{\mathrm{orig}}$ for the local contrasts);
    \item model-identity dummies $\mathbb{1}\{m_{i}{=}m\}$;
    \item sample-identity dummies $\mathbb{1}\{s_{i}{=}s\}$;
    \item for each contrast's \emph{mutually adjusted} variant, the
    \emph{other} $\Delta$ treatment as a linear covariate.
\end{itemize}

\subsubsection{Significance vs.\ Nonlinearity}
\label{appendix:tests}

We distinguish two null hypotheses per contrast:
\begin{enumerate}
    \item \textbf{Significance}
    ($H_{0}^{\mathrm{sig}}\!: \beta_{1} = \cdots = \beta_{5} = 0$):
    F-test (OLS) / likelihood-ratio test (GLM) against the model with
    the spline basis removed. \textit{Rejects flatness, not linearity.}
    \item \textbf{Nonlinearity} ($H_{0}^{\mathrm{nl}}$: the spline
    model reduces to a linear-in-$T$ model). F-test (OLS) / LR test
    (GLM) between the spline model and
    $\tilde Y = \alpha + \delta T + \boldsymbol{\gamma}^{\top}
    \mathbf{X} + \varepsilon$. We also report
    $\Delta\mathrm{AIC} = \mathrm{AIC}(\text{linear}) -
    \mathrm{AIC}(\text{spline})$; positive $\Rightarrow$ spline
    preferred.
\end{enumerate}

Within-triple row correlation makes asymptotic row-level p-values
overconfident; we cap reported p-values at $p < 0.001$ and rely on
$\Delta\mathrm{AIC}$ for quantitative claims.

\subsubsection{Prediction and Cross-Model Aggregation}
\label{appendix:prediction}

Predictions are evaluated on a shared 60-point grid spanning the pooled
1st--99th-percentile of $T$. For each model $m \in \{1, \ldots, M\}$ we
form a prediction sample $\mathcal{S}_{m}$ (uniform subsample of up to
2{,}000 rows from $\{i : m_{i} = m\}$, fixed seed \texttt{20260424}),
compute
\begin{equation}
\hat y_{m}(t) \;=\; \frac{1}{|\mathcal{S}_{m}|}
\sum_{i \in \mathcal{S}_{m}} \hat Y_{i}(t),
\end{equation}
and cross-model average $\hat y(t) = M^{-1} \sum_{m} \hat y_{m}(t)$.
Sharing the grid across models guarantees the cross-model average is
well-defined at every $t$.

\subsubsection{Saturation Diagnostics for the Length-Response Curve}
\label{appendix:saturation}

Define
\begin{align}
\kappa &= \min\!\left\{t \,:\, |\hat y'(t)|
< 0.10 \cdot \max_{t} |\hat y'(t)| \right\}, \\
\xi &= \min\!\left\{t \,:\, \hat y(t) - \hat y(0) \,\leq\,
0.95\,\big(\hat y(t_{\max}) - \hat y(0)\big) \right\}
\end{align}
(inequalities flipped if $\hat y$ is increasing in $t$). $\kappa$ is
the slope-knee --- the position at which the spline's slope drops below
10\% of its peak; $\xi$ is the position at which 95\% of the long-context
drop has accrued.

95\% confidence intervals for $\kappa$ and $\xi$ come from a
$1{,}000$-iteration model-resampling bootstrap: in each iteration we
draw $M$ model identities uniformly with replacement, re-aggregate the
per-model curves, and re-measure $\kappa, \xi$ on the bootstrap-aggregated
curve. We report 2.5\%--97.5\% bootstrap quantiles. The random seed is
fixed.

\subsubsection{Causal Estimand: Scope and Limitations}
\label{appendix:scope}

The intervention contrasts estimate the \emph{average within-triple
counterfactual effect} of our prefix-flipping protocol on $P(Y{=}1)$,
conditional on the listed confounders --- \textbf{not} the universal
causal effect of ``correctness'' in the wild. Our flipping procedure
samples uniformly across positions and atom values, whereas natural
errors are heterogeneous in type and location; the two distributions
may produce systematically different downstream effects.

The length-response result is an \emph{adjusted length-response
association}, not a causal length effect; length is not randomized in
our protocol.

Row-level F-tests overstate degrees of freedom because rows within a
triple are correlated. We cap p-values at $p < 0.001$ in the main text
and rely on $\Delta\mathrm{AIC}$ and the bootstrap CIs
(\S\ref{appendix:tests}, \S\ref{appendix:saturation}) for quantitative
claims.

\subsubsection{Results}
\label{appendix:results}

\paragraph{Intervention contrasts.}
Table~\ref{tab:intervention-contrasts} reports the four paired-OLS
intervention contrasts on $n_{\mathrm{rows}} = 51{,}260$ rows
($n_{\mathrm{baselines}} = 4{,}910$). After mutual adjustment, the
residual global slope ($+0.37$) is approximately $2\times$ the residual
local slope ($+0.19$); the global term loses only $11\%$ of its slope
when local is added, but the local term loses $35\%$ when global is
added. Corruption ($T < 0$) is between $2\times$ and $6\times$ more
damaging than correction ($T > 0$) across all four contrasts. All four
spline-vs-linear comparisons overwhelmingly favor the spline
($\Delta\mathrm{AIC} \geq 325.8$).

\begin{figure}[ht]
\begin{center}
\centerline{\includegraphics[width=0.85\columnwidth]{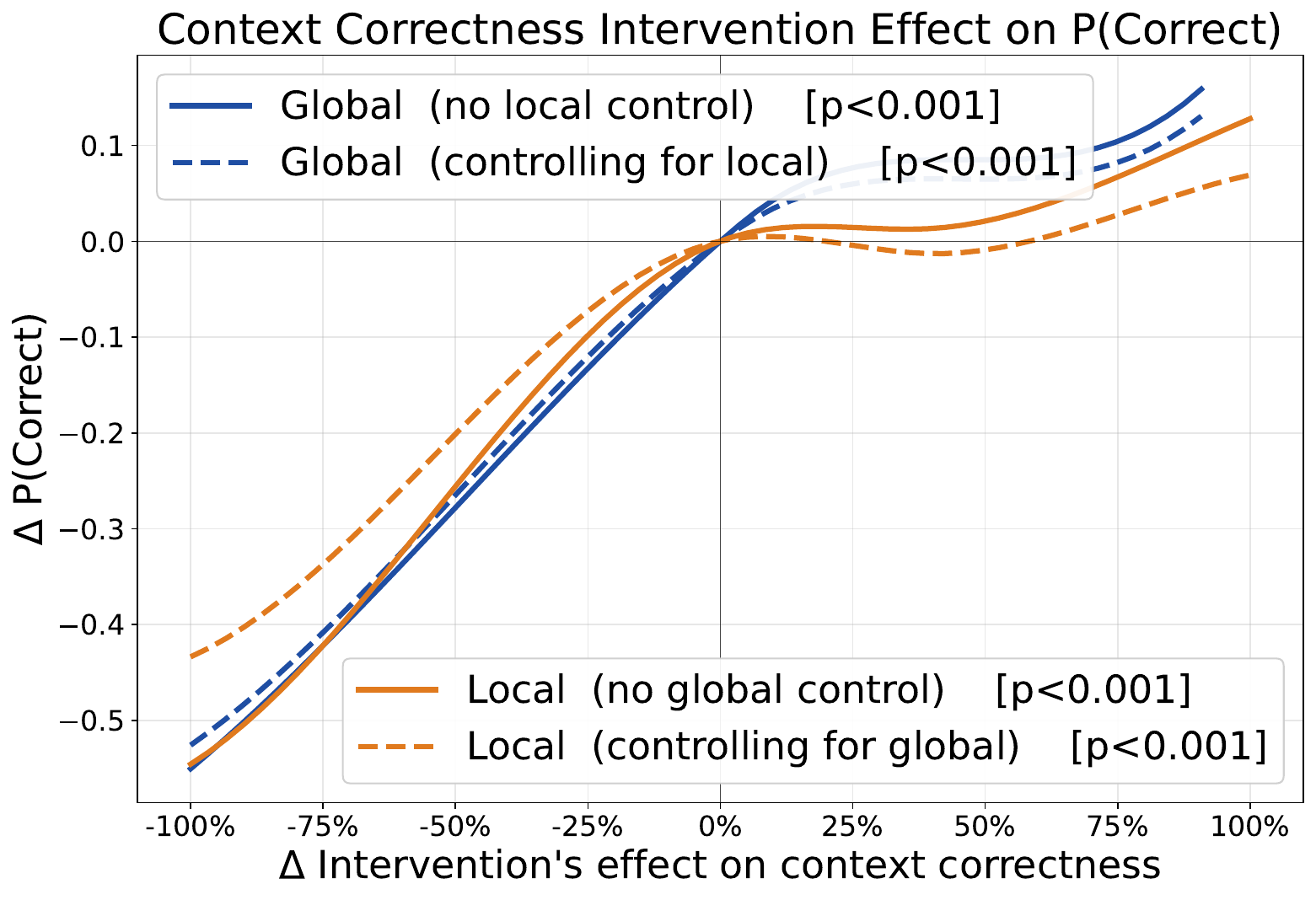}}
\caption{Effect of controlled prefix-correctness perturbations on future correctness.}
\label{fig: global and local correctness effect on p-true}
\end{center}
\vskip -0.2in
\end{figure}

\begin{table*}[t]
\centering
\caption{Paired-difference spline regression results for the two
intervention treatments, with and without mutual adjustment. The slope
is the secant slope per unit $\Delta$ over the full data range.
$\overline{\Delta P}_{\pm}$ are the mean $\Delta P$ values over the
positive ($T > 0$, correction) and negative ($T < 0$, corruption)
regions. $\Delta\mathrm{AIC} = \mathrm{AIC}(\text{linear}) -
\mathrm{AIC}(\text{spline})$.}
\label{tab:intervention-contrasts}
\small
\begin{tabular}{lrrrrrrr}
\toprule
Contrast & slope$/$unit & $\overline{\Delta P}_{+}$ &
$\overline{\Delta P}_{-}$ &
$|\overline{\Delta P}_{-} / \overline{\Delta P}_{+}|$ &
Sig.\ $p$ & Nonlin.\ $p$ & $\Delta\mathrm{AIC}$ \\
\midrule
Global $\Delta$, no local control & $+0.42$ & $+0.117$ & $-0.231$ &
$2.0\times$ & ${<}0.001$ & $7{\times}10^{-71}$ & $+325.8$ \\
Global $\Delta$ + local control & $+0.37$ & $+0.083$ & $-0.214$ &
$2.6\times$ & ${<}0.001$ & $3{\times}10^{-78}$ & $+359.8$ \\
Local last-10 $\Delta$, no global control & $+0.29$ & $+0.063$ &
$-0.238$ & $3.8\times$ & ${<}0.001$ & $4{\times}10^{-290}$ &
$+1340.1$ \\
Local last-10 $\Delta$ + global control & $+0.19$ & $+0.025$ &
$-0.164$ & $6.6\times$ & ${<}0.001$ & $6{\times}10^{-157}$ &
$+724.4$ \\
\bottomrule
\end{tabular}
\end{table*}

\paragraph{Length-response curve.}
Table~\ref{tab:length-saturation} reports the saturation diagnostics
for the controlled and uncontrolled context-length curves (binomial
GLM, significance $p < 0.001$, nonlinearity
$\Delta\mathrm{AIC} = +715.9$). The temporal saturation is unchanged by
the correctness controls; the asymptotic magnitude inflates by
$\approx 0.07$ ($\approx 16\%$) when correctness is dropped from the
regressors. Figure~\ref{fig:context-length-combined} overlays both
curves.

\begin{table}[t]
\centering
\caption{Saturation diagnostics for the answer-context length response
curve. $\kappa$ is the slope-knee (where the slope drops below 10\% of
peak); $\xi$ is the 95\%-of-asymptote position. 95\% CIs from
$1{,}000$ model-resampling bootstrap iterations.}
\label{tab:length-saturation}
\small
\begin{tabular}{lrrr}
\toprule
Variant & $\kappa$ (atoms) [95\% CI] & $\xi$ (atoms) [95\% CI] &
$\hat y(t_{\max}) - \hat y(0)$ \\
\midrule
Controlled & $20.8$ [$20.8, 23.3$] & $66.6$ [$53.8, 71.6$] & $-0.430$ \\
Uncontrolled & $20.8$ [$18.3, 23.3$] & $61.5$ [$46.2, 69.1$] &
$-0.500$ \\
\bottomrule
\end{tabular}
\end{table}

\begin{figure}[ht]
\vskip 0.1in
\begin{center}
\centerline{\includegraphics[width=0.95\columnwidth]{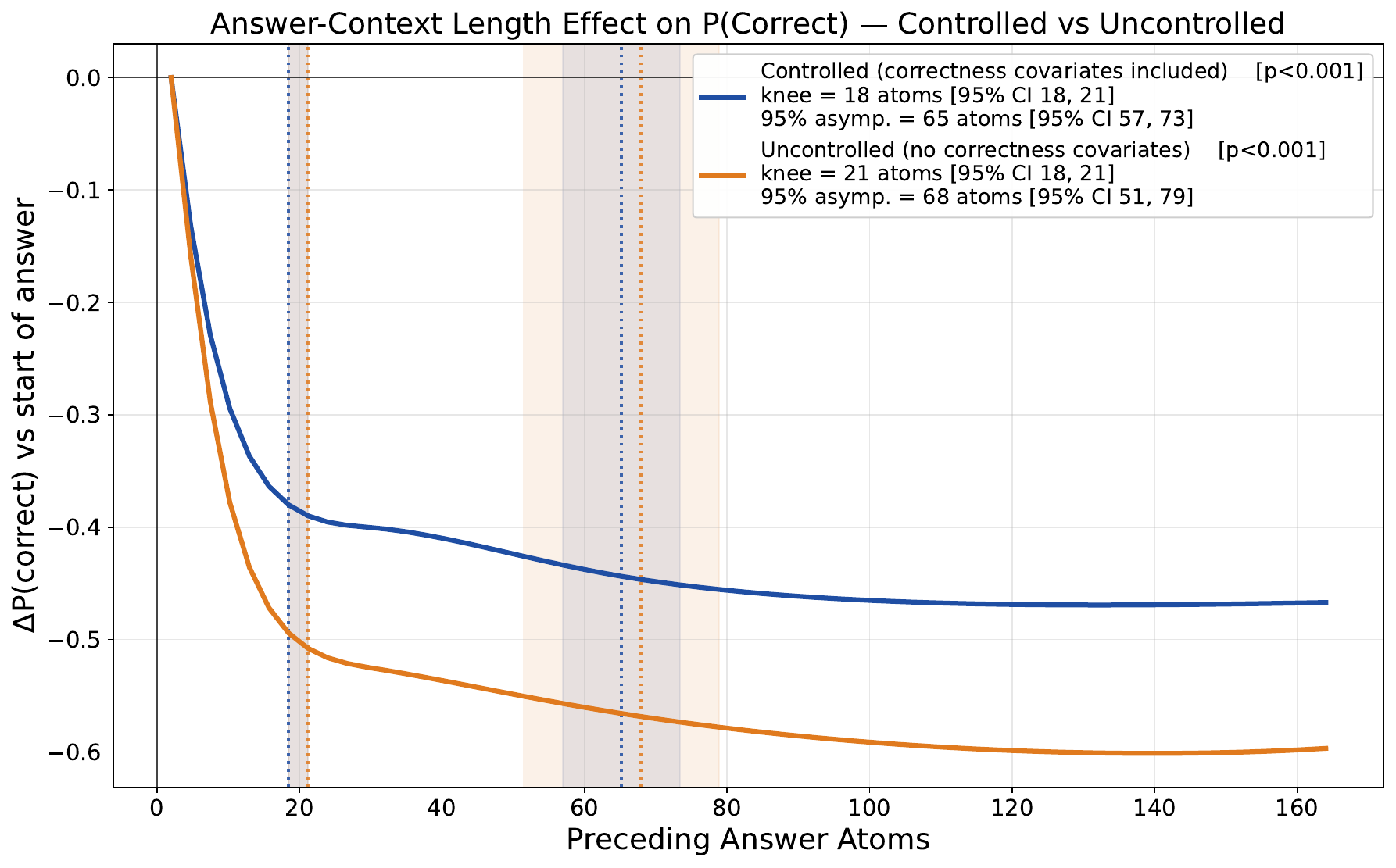}}
\caption{Length-response curves, controlled vs.\ uncontrolled for
prefix correctness. Both saturate on the same timeline; the
uncontrolled curve drops to a deeper asymptote, attributable to
prefix-correctness confounding. Legend reports the joint-significance
p-value, the slope-knee with 95\% CI, and the 95\%-asymptote position
with 95\% CI for each curve.}
\label{fig:context-length-combined}
\end{center}
\vskip -0.2in
\end{figure}

\paragraph{Fixed-correctness curves.}
Holding prefix correctness fixed at one of
$\{0\%, 20\%, \ldots, 100\%\}$ via override of all correctness
covariates produces six within-level length-response curves
(Figure~\ref{fig:context-length-by-correctness}). The model's predicted
$\Pr(Y{=}1)$ at the start of the answer fans out from
$\hat P(0\%) = 0.25$ to $\hat P(100\%) = 0.92$ and converges into the
narrow range $[0.02, 0.23]$ at the longest supported context. The
absolute headroom lost to length is therefore much larger for clean
prefixes ($\approx 0.69$ at 100\%) than for dirty ones ($\approx 0.23$
at 0\%).

\begin{figure}[ht]
\vskip 0.1in
\begin{center}
\centerline{\includegraphics[width=0.95\columnwidth]{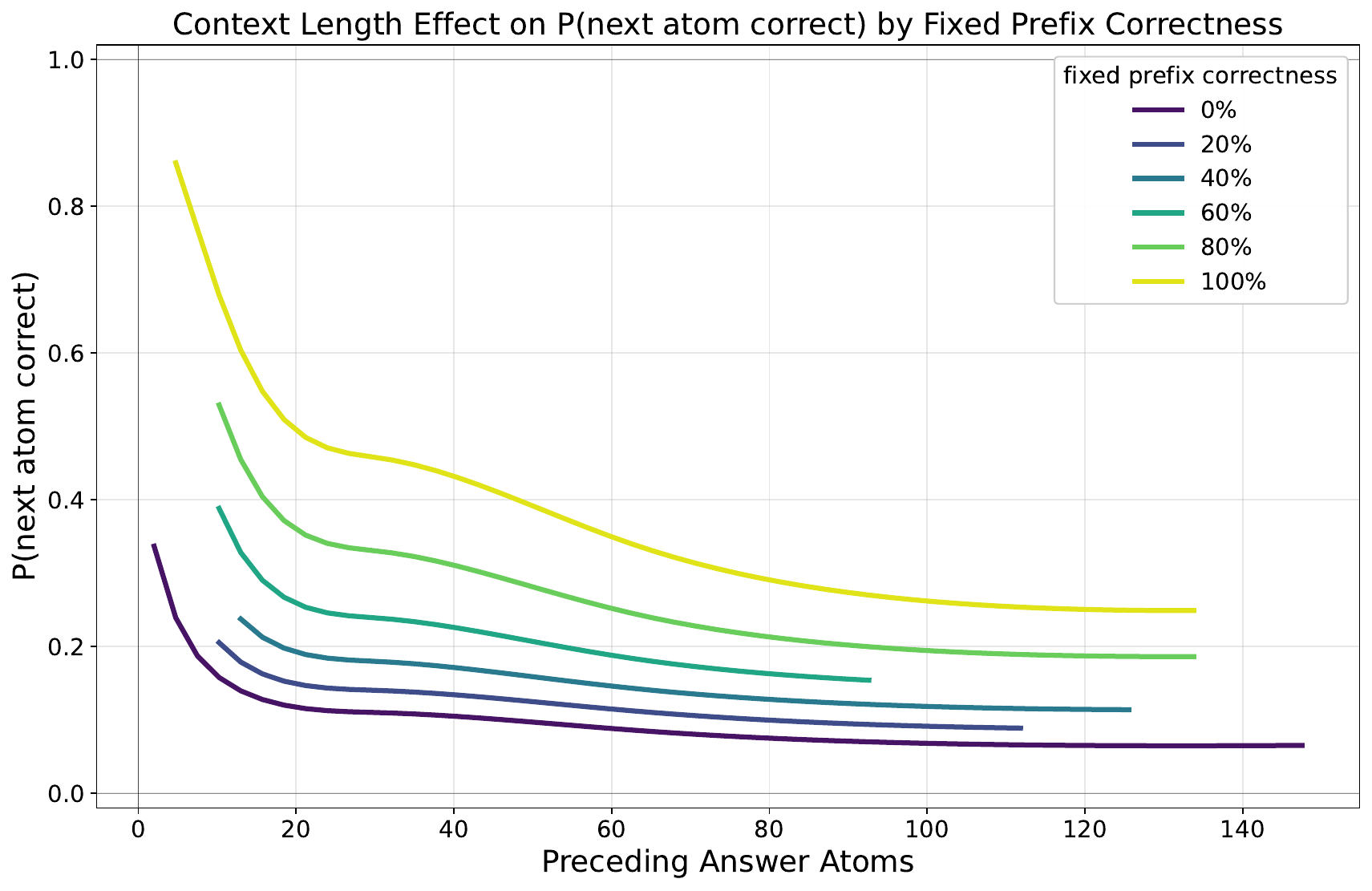}}
\caption{Predicted $P(\text{next correct})$ as a function of
preceding answer atoms with prefix correctness held fixed at different levels.}
\label{fig:context-length-by-correctness}
\end{center}
\vskip -0.2in
\end{figure}

\subsubsection{Robustness}
\label{appendix:robustness}

\paragraph{Per-model fits.}
Replacing the pooled spline fit with separate per-model spline fits
and averaging the per-model predictions leaves the local-$\Delta$
findings essentially unchanged but increases tail variance on the
global-$\Delta$ curve, consistent with per-model treatment ranges
being narrower than the pooled range. The pooled fit is the reference;
the per-model fit is a sensitivity analysis.

\section{Benchmarked Models}
\label{appendix: benchmarked models}

Table~\ref{tab:benchmarked-models} provides a comprehensive overview of the models evaluated in this study. The selection spans a wide range of scales, from compact 0.5B parameter models to massive 358B architectures, with exclusive closed-source models potentially exceeding these limits. Our ensemble consists of 55 models in total: 15 specialized reasoning models and 40 instruction-tuned models, comprising 46 open-source and 9 closed-source variants.

\begin{longtable}{P{7cm} l l l l l}
\caption{Detailed list of the benchmarked models. ``\# Params'' and ``\# Active'' represent the total and active parameter count. ``Type'' reflects whether the model is an instruction-tuned model or a reasoning model. ``Source'' reflects whether the model is open or closed source. }\label{tab:benchmarked-models}\\
\toprule
\textbf{Model} & \textbf{Organization} & \textbf{\# Params} & \textbf{\# Active} & \textbf{Type} & \textbf{Source} \\
\midrule
\endfirsthead

\multicolumn{6}{c}{\bfseries \tablename\ \thetable{} -- continued} \\
\toprule
\textbf{Model} & \textbf{Organization} & \textbf{\# Params} & \textbf{\# Active} & \textbf{Type} & \textbf{Source} \\
\midrule
\endhead

\midrule \multicolumn{6}{r}{Continued on next page} \\
\endfoot

\bottomrule
\endlastfoot

        % DATA
        
Nemotron-Nano-9B-v2 & NVIDIA & 9B & 9B & Instruct & Open \\
Nemotron-Nano-12B-v2 & NVIDIA & 12B & 12B & Instruct & Open \\
Nemotron-3-Nano-30B-A3B-BF16 & NVIDIA & 30B & 3B & Instruct & Open \\
K2-V2-Instruct & LLM360 & 65B & 65B & Reasoning & Open \\
MiniMax-M2.1 & MiniMax & 230B & 10B & Reasoning & Open \\
MiniMax-M2.5 & MiniMax & 230B & 10B & Reasoning & Open \\
GLM-4.7-Flash & Z.ai & 31B & 31B & Reasoning & Open \\
GLM-4.7-FP8 & Z.ai & 358B & 358B & Reasoning & Open \\
Gemma-3-1b-it & Google & 1B & 1B & Instruct & Open \\
Gemma-3-4b-it & Google & 4B & 4B & Instruct & Open \\
Gemma-3-12b-it & Google & 12B & 12B & Instruct & Open \\
Gemma-3-27b-it & Google & 27B & 27B & Instruct & Open \\
Gemma-4-26B-A4B-it & Google & 27B & 4B & Instruct & Open \\
Gemma-4-31B-it & Google & 33B & 33B & Instruct & Open \\
Llama-3.1-8B-Instruct & Meta & 8B & 8B & Instruct & Open \\
Llama-3-8B-Instruct & Meta & 8B & 8B & Instruct & Open \\
Llama-3-70B-Instruct & Meta & 70B & 70B & Instruct & Open \\
Llama-3.1-70B-Instruct & Meta & 70B & 70B & Instruct & Open \\
Llama-3.2-3B-Instruct & Meta & 3B & 3B & Instruct & Open \\
Llama-3.3-70B-Instruct & Meta & 70B & 70B & Instruct & Open \\
DeepSeek-R1-Distill-Llama-70B & DeepSeek & 70B & 70B & Reasoning & Open \\
DeepSeek-R1-Distill-Llama-8B & DeepSeek & 8B & 8B & Reasoning & Open \\
Phi-3.5-mini-instruct & Microsoft & 3.8B & 3.8B & Instruct & Open \\
Phi-3.5-MoE-instruct & Microsoft & 42B & 6.6B & Instruct & Open \\
Phi-4-reasoning-plus & Microsoft & 14B & 14B & Reasoning & Open \\
Phi-4-reasoning & Microsoft & 14B & 14B & Reasoning & Open \\
phi-4 & Microsoft & 14B & 14B & Instruct & Open \\
phi-4-mini-reasoning & Microsoft & 3.8B & 3.8B & Reasoning & Open \\
phi-4-mini-instruct & Microsoft & 3.8B & 3.8B & Instruct & Open \\
Mistral-7B-Instruct-v0.3 & Mistral AI & 7B & 7B & Instruct & Open \\
Mixtral-8x7B-Instruct-v0.1 & Mistral AI & 47B & 13B & Instruct & Open \\
Mixtral-8x22B-Instruct-v0.1 & Mistral AI & 141B & 39B & Instruct & Open \\
Mistral-Nemo-Instruct-2407 & Mistral AI & 12B & 12B & Instruct & Open \\
Mistral-Large-Instruct-2407 & Mistral AI & 123B & 123B & Instruct & Open \\
Mistral-Large-Instruct-2411 & Mistral AI & 123B & 123B & Instruct & Open \\
Mistral-Medium-3.5-128B & Mistral AI & 128B & 128B & Instruct & Open \\
Qwen2.5-0.5B-Instruct & Qwen & 0.5B & 0.5B & Instruct & Open \\
Qwen2.5-32B-Instruct & Qwen & 32B & 32B & Instruct & Open \\
Qwen2-0.5B-Instruct & Qwen & 0.5B & 0.5B & Instruct & Open \\
Qwen2-1.5B-Instruct & Qwen & 1.5B & 1.5B & Instruct & Open \\
Qwen2-7B-Instruct & Qwen & 7B & 7B & Instruct & Open \\
Qwen2-72B-Instruct & Qwen & 72B & 72B & Instruct & Open \\
Qwen3-4B-Thinking-2507 & Qwen & 4B & 4B & Reasoning & Open \\
Qwen3-4B-Instruct-2507 & Qwen & 4B & 4B & Instruct & Open \\
Qwen3-30B-A3B-Thinking-2507 & Qwen & 30B & 3B & Reasoning & Open \\
Qwen3-30B-A3B-Instruct-2507 & Qwen & 30B & 3B & Instruct & Open \\
Qwen3-Next-80B-A3B-Thinking & Qwen & 80B & 3B & Reasoning & Open \\
Qwen3-Next-80B-A3B-Instruct & Qwen & 80B & 3B & Instruct & Open \\
Qwen3.6-35B-A3B & Qwen & 35B & 3B & Reasoning & Open \\
Qwen3.6-35B-A3B-NoThink & Qwen & 35B & 3B & Instruct & Open \\
Qwen3.6-27B & Qwen & 27B & 27B & Reasoning & Open \\
Qwen3.6-27B-NoThink & Qwen & 27B & 27B & Instruct & Open \\
phi-3-medium-128k-instruct & Microsoft & 14B & 14B & Instruct & Open \\
gemini-3-flash-preview & Google & Unknown & Unknown & Reasoning & Closed \\
gemini-3-pro-preview & Google & Unknown & Unknown & Reasoning & Closed \\
gpt-oss-20b & OpenAI & 20B & 20B & Reasoning & Open \\
gpt-oss-120b & OpenAI & 120B & 120B & Reasoning & Open \\
gpt-4o & OpenAI & Unknown & Unknown & Instruct & Closed \\
gpt-4o-mini & OpenAI & Unknown & Unknown & Instruct & Closed \\
gpt-4.1-2025-04-14 & OpenAI & Unknown & Unknown & Instruct & Closed \\
gpt-4.1-mini-2025-04-14 & OpenAI & Unknown & Unknown & Instruct & Closed \\
gpt-4.1-nano-2025-04-14 & OpenAI & Unknown & Unknown & Instruct & Closed \\
gpt-5.1-2025-11-13 & OpenAI & Unknown & Unknown & Reasoning & Closed \\
gpt-5-nano-2025-08-07 & OpenAI & Unknown & Unknown & Reasoning & Closed \\
        \bottomrule
\end{longtable}

\section{Background and Prior Work}
\label{appendix: more related work}
In this work, we focus on \emph{information-based} and \emph{verbalization} methods as our primary uncertainty estimators.
\emph{Information-based} methods aggregate token-level log-probabilities to derive unit-level scores, such as perplexity or entropy \citep{DBLP:conf/emnlp/MinKLLYKIZH23, DBLP:conf/naacl/GengCWKNG24,DBLP:journals/tse/HuangSWZCJM25}, and constitute the main family we study empirically.
\emph{Verbalization} methods prompt the model to explicitly assign confidence in its answer (e.g., ``Verb. 1S'' or ``Verb. 2S'') \citep{DBLP:journals/corr/abs-2207-05221, DBLP:conf/emnlp/TianMZSRYFM23}, which we additionally evaluate as a complementary signal.
We also note additional prominent methods that we do not directly benchmark in this work.
\emph{Internal-state} methods analyze the model's hidden-layer representations (e.g., activations or selected neurons), typically via probing or representation-space analysis \citep{DBLP:conf/emnlp/AzariaM23, DBLP:conf/iclr/0026L0GWTFY24}.
\emph{Sampling} methods assess uncertainty via the consistency across multiple generated outputs, such as measuring the agreement rate among several independent responses \citep{DBLP:conf/emnlp/ManakulLG23, DBLP:journals/nature/FarquharKKG24}.

\section{More on Measuring ECE}
As exhibited in Equation~\ref{eq: ECE formula}, $m$, the number of bins, is a user-defined hyperparameter.
We follow \citet{DBLP:conf/icml/GuoPSW17} in setting the number of bins to $m =15$. We calculate the ECE across all evaluation units in generations from the same model, task, and prompt method.

\subsection{Confidence Functions Calibration}
\label{appendix: kappa normalization}
As discussed in Section~\ref{subsec: confidence functions}, many confidence functions (e.g., perplexity, entropy) produce scores outside $[0,1]$, which is required for calibration assessment. We therefore map their outputs to $[0,1]$ using one of three calibration methods, each applied independently per (model, task, prompt method) tuple.
All schemes follow the same protocol: scores are split into train and test sets, statistics are estimated on the train split, and the learned mapping is applied to the test split.

\textbf{Z-Sigmoid} (Algorithm~\ref{alg: z-score sigmoid calibration process}) standardizes test scores via Z-score normalization using the training mean and standard deviation, then applies the Sigmoid function to map them to $[0,1]$.
\textbf{MinMax} (Algorithm~\ref{alg: minmax calibration process}) linearly rescales test scores to $[0,1]$ using the training range, with the bounds computed as trimmed quantiles for robustness to outliers. We pick $q=0.01$.
\textbf{Binned} (Algorithm~\ref{alg: binned calibration process}) partitions the training score range into $B$ equal-width bins and replaces each score with the empirical correctness rate of its bin, estimated on the training set. We evaluate this scheme with $B \in \{5, 10, 100\}$.
As discussed in \cref{para:calibration_schemes}, while binned calibration minimize measured ECE (\cref{fig: calibration method comparison - ECE}), it has the opposite effect on the AUROC (\cref{fig: calibration method comparison - AUROC}), revealing that a chosen calibration methods have varying, and even reversed effects, on the different aspects of uncertainty estimation.

\begin{figure}[ht]
\vskip 0.2in
\begin{center}
\centerline{\includegraphics[width=0.85\columnwidth]{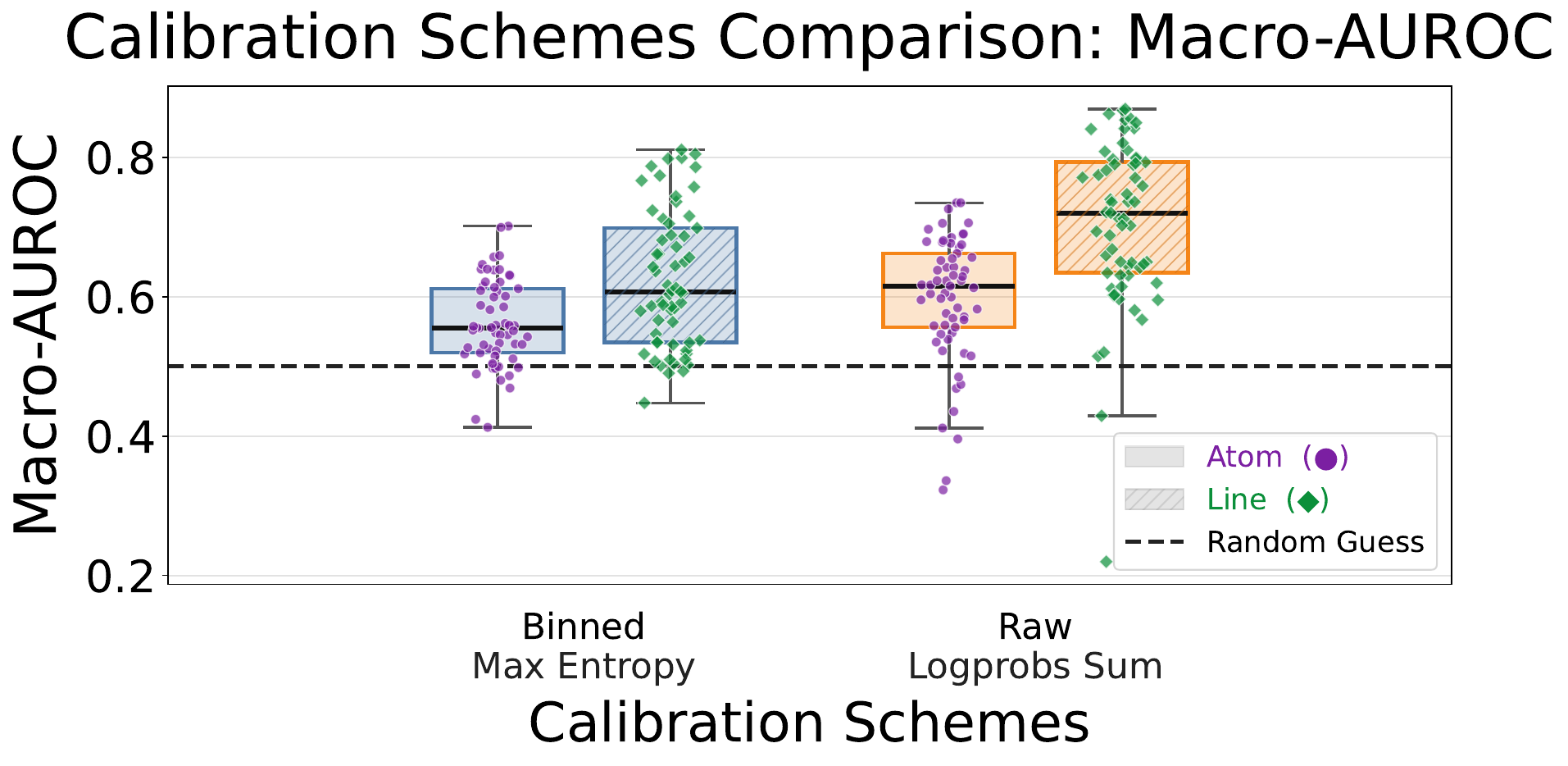}}
\caption{Effect of post-hoc calibration's schemes on ranking.}
\label{fig: calibration method comparison - AUROC}
\end{center}
\vskip -0.2in
\end{figure}

\begin{algorithm}[tb]
  \caption{Z-Sigmoid Scores Calibration}
  \label{alg: z-score sigmoid calibration process}
  \begin{algorithmic}
    \STATE {\bfseries Input:} confidence scores $\mathbf{x}$
    \STATE {\bfseries Output:} calibrated confidence scores $\hat{\mathbf{p}}$
    \STATE $\mathbf{x}_{\text{train}}, \mathbf{x}_{\text{test}} \gets \textsc{Train\_Test\_Split}(\mathbf{x})$
    
    \STATE $\mu \gets \textsc{Mean}(\mathbf{x}_{\text{train}})$
    
    \STATE $\sigma \gets \textsc{Std}(\mathbf{x}_{\text{train}})$
    
    \STATE $\tilde{\mathbf{x}}_{\text{test}} \gets (\mathbf{x}_{\text{test}} - \mu) / \sigma$
    \COMMENT{Z-score normalization using training statistics}

    \STATE $\hat{\mathbf{p}} \gets \textsc{Sigmoid}(\tilde{\mathbf{x}}_{\text{test}})$
    \COMMENT{Map normalized scores to $[0,1]$}
  \end{algorithmic}
\end{algorithm}

\begin{algorithm}[tb]
  \caption{MinMax Confidence Calibration}
  \label{alg: minmax calibration process}
  \begin{algorithmic}
    \STATE {\bfseries Input:} confidence scores $\mathbf{x}$, trim quantile $q$
    \STATE {\bfseries Output:} calibrated confidence scores $\hat{\mathbf{p}}$
    \STATE $\mathbf{x}_{\text{train}}, \mathbf{x}_{\text{test}} \gets \textsc{Train\_Test\_Split}(\mathbf{x})$

    \STATE $x_{\min} \gets Q_q(\mathbf{x}_{\text{train}})$, \quad $x_{\max} \gets Q_{1-q}(\mathbf{x}_{\text{train}})$
    \COMMENT{Robust range: $q$-th and $(1{-}q)$-th quantiles of the training scores}

    \STATE $\hat{\mathbf{p}} \gets \textsc{Clip}\!\left(\dfrac{\mathbf{x}_{\text{test}} - x_{\min}}{x_{\max} - x_{\min}},\; 0,\; 1\right)$
  \end{algorithmic}
\end{algorithm}

\begin{algorithm}[tb]
  \caption{Binned Confidence Calibration}
  \label{alg: binned calibration process}
  \begin{algorithmic}
    \STATE {\bfseries Input:} confidence scores $\mathbf{x}$, labels $\mathbf{y}$, number of bins $B$
    \STATE {\bfseries Output:} calibrated confidence scores $\hat{\mathbf{p}}$
    \STATE $\mathbf{x}_{\text{train}}, \mathbf{x}_{\text{test}} \gets \textsc{Train\_Test\_Split}(\mathbf{x})$

    \STATE $x_{\min} \gets \textsc{Min}(\mathbf{x}_{\text{train}})$, \quad $x_{\max} \gets \textsc{Max}(\mathbf{x}_{\text{train}})$

    \STATE $\tilde{x}_i \gets \textsc{Clip}\!\left(\dfrac{x_i - x_{\min}}{x_{\max} - x_{\min}},\; 0,\; 1\right)$ \quad for each $x_i \in \mathbf{x}_{\text{train}}$

    \STATE $b_i \gets \min\!\left(\lfloor \tilde{x}_i \cdot B \rfloor,\; B - 1\right)$ \quad for each training sample $i$
    \COMMENT{Assign each training sample to one of $B$ equal-width bins}

    \STATE $v_b \gets \textsc{Mean}\bigl(\,y_i \mid b_i = b\,\bigr)$ \quad for each $b \in \{0, \ldots, B{-}1\}$
    \COMMENT{Empirical correctness rate in bin $b$}

    \STATE Apply the same normalization and bin assignment to $\mathbf{x}_{\text{test}}$ to obtain $b_i^{\text{test}}$

    \STATE $\hat{p}_i \gets v_{b_i^{\text{test}}}$ \quad for each test sample $i$
    \COMMENT{Replace score with its bin's empirical accuracy}
  \end{algorithmic}
\end{algorithm}

We observed that the best-performing confidence functions on the ECE metrics consistently were among those that require normalization. One can argue that the calibration step unfolds the probability distribution in a way that ECE benefits from it. To test that possibility, we experimented with calibrating all confidence functions' scores, regardless of whether needed, and we observed that the best-performing confidence functions were still consistent, i.e., perplexity and entropy for calibration. With that in mind, we do note that even confidence functions which do not require calibration, still benefit from it in the ECE scores.

\subsection{ECE as Generation Unfolds}
\label{appendix: ece as generation unfolds}

A key advantage of SALT is that long-form generations are decomposed into sequential, task-aligned atoms whose correctness is known a priori. This enables calibration to be tracked not only at the level of the full response, but also as a function of position within the generated answer. We therefore measure ECE over relative atom positions, allowing us to inspect how calibration evolves as the model conditions on an increasingly long self-generated context.

Figure~\ref{fig: ece as generation unfolds} compares this temporal calibration behavior across post-hoc calibration methods, each paired with its best-performing confidence function. The raw Mean Probability baseline becomes steadily less calibrated as generation progresses, with ECE increasing from roughly $0.2$ near the beginning to above $0.5$ near the end. In contrast, binned calibration methods exhibit the opposite trend: after an initially high-error prefix, their ECE rapidly drops and remains substantially lower throughout most of the generation, showing similar trajectories. MinMax and Z-Sigmoid calibration remain comparatively stable but at higher ECE levels, suggesting that their aggregate performance reflects persistent calibration mismatch rather than position-specific degradation. Overall, these trends reinforce the observation that the choice of calibration method strongly shapes calibration quality, and further show that this effect persists throughout the generation rather than only in aggregate.

\begin{figure}[ht]
\vskip 0.2in
\begin{center}
\centerline{\includegraphics[width=0.7\columnwidth]{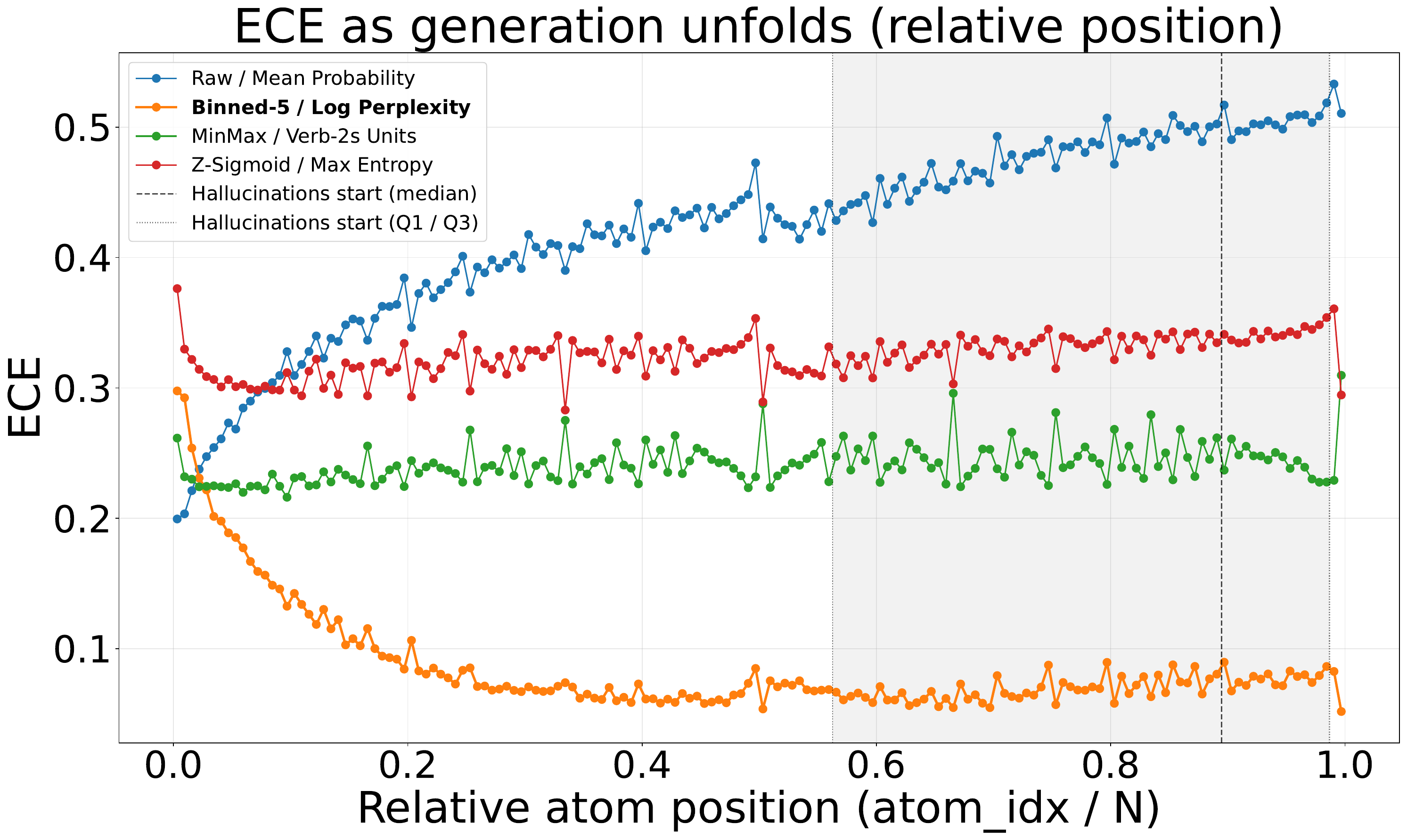}}
\caption{ECE (y-axis) as generation unfolds (x-axis). Points represent bins of atoms' relative position. Markers show bin means.}
\label{fig: ece as generation unfolds}
\end{center}
\vskip -0.2in
\end{figure}

\section{The Confidence Functions}
\label{appendix: kappa functions}
In this section, we provide the explicit mathematical definitions for the unit-level confidence functions evaluated in this study. For a generated evaluation unit $\hat{u}_i$ consisting of $k$ tokens, let $p_j = p(\hat{y}_j \mid \mathbf{x}, \hat{y}_{1:j-1})$ denote the model's predictive probability for the $j$-th token, and $H(\hat{y}_j) = -\sum_{v \in \Sigma} p(v \mid \mathbf{x}, \hat{y}_{1:j-1}) \log p(v \mid \mathbf{x}, \hat{y}_{1:j-1})$ denote the per-token Shannon entropy \citep{shannon1948mathematical} calculated over all tokens $v$ in the vocabulary $\Sigma$. Table~\ref{tab:confidence_functions} summarizes the aggregation methods used to derive logit-based unit-level scores from these token-level signals. The ``Trend'' column indicates whether a higher value signifies higher confidence ($\uparrow$) or higher uncertainty ($\downarrow$).

\begin{table}[ht]
    \centering
    \renewcommand{\arraystretch}{2.2} 
    \caption{Summary of Unit-Level Confidence Functions. $k$ denotes the number of tokens within the unit.}
    \label{tab:confidence_functions}
    \begin{tabular}{lcc}
        \toprule
        \textbf{Metric Name} & \textbf{Definition} & \textbf{Trend} \\
        \midrule
        Perplexity & $\displaystyle \exp\left( - \frac{1}{k} \sum_{j=1}^k \log p_j \right)$ & $\downarrow$ \\
        Log-Perplexity & $\displaystyle - \frac{1}{k} \sum_{j=1}^k \log p_j$ & $\downarrow$ \\
        Mean Probability & $\displaystyle \frac{1}{k} \sum_{j=1}^k p_j$ & $\uparrow$ \\
        Median Probability & $\displaystyle \text{median}\left(\{ p_j \}_{j=1}^k\right)$ & $\uparrow$ \\
        Max Probability & $\displaystyle \max_{j=1}^k p_j$ & $\uparrow$ \\
        Logprobs Sum & $\displaystyle \sum_{j=1}^k \log p_j$ & $\uparrow$ \\
        Mean Entropy & $\displaystyle \frac{1}{k} \sum_{j=1}^k H(\hat{y}_j)$ & $\downarrow$ \\
        Max Entropy & $\displaystyle \max_{j=1}^k H(\hat{y}_j)$ & $\downarrow$ \\
        \bottomrule
    \end{tabular}
\end{table}

% --- Verbalized confidence functions ---
We additionally evaluate \emph{verbalized} confidence functions, which elicit a confidence value by querying the model itself rather than aggregating token-level statistics. Two probe types are considered: $P(\text{True})$~\citep{DBLP:journals/corr/abs-2207-05221}, which reads the normalized next-token probability on the labels \texttt{True} vs.\ \texttt{False} when the model is asked whether a piece of generated content is correct, and \emph{Verbalized 2S}~\citep{DBLP:conf/emnlp/TianMZSRYFM23}, which asks the model to state a numeric confidence in their response-ready in the $[0,1]$ range. We denote these per-content elicitors by
\[
\rho_M(z) \;=\; \frac{p_M(\texttt{T} \mid z)}{p_M(\texttt{T} \mid z) + p_M(\texttt{F} \mid z)} \in [0,1], \qquad c_M(z) \in [0,1],
\]
where $z$ is the piece of generated content the model is asked to judge. Each probe is queried at the generation level (over the full response $\hat{\mathbf{y}}$) and at the unit level by passing the full unit decomposition $\hat{u}_{1:n}$ jointly in a single prompt and reading the per-unit output, which we write as $\rho_M(\hat{u}_i \mid \hat{u}_{1:n})$ and $c_M(\hat{u}_i \mid \hat{u}_{1:n})$. Generation-level scores are additionally broadcast unchanged to every unit, providing a zero-resolution control at unit granularity. The six resulting functions are summarized in Table~\ref{tab:verbalized_confidence_functions}.

\begin{table}[ht]
    \centering
    \renewcommand{\arraystretch}{2.2}
    \caption{Summary of Verbalized Confidence Functions.}
    \label{tab:verbalized_confidence_functions}
    \begin{tabular}{lccc}
        \toprule
        \textbf{Confidence Function} & \textbf{Definition} & \textbf{Trend} & \textbf{Granularity} \\
        \midrule
        $P(\text{True})$, Response   & $\rho_M(\hat{\mathbf{y}})$                                   & $\uparrow$ & Generation Only \\
        $P(\text{True})$, per-Unit   & $\rho_M(\hat{u}_i \mid \hat{u}_{1:n})$                       & $\uparrow$ & Any Unit \\
        $P(\text{True})$, Broadcast  & $\rho_M(\hat{\mathbf{y}})$, replicated to each $\hat{u}_i$   & $\uparrow$ & Any Unit \\
        Verbalized 2S, Response  & $c_M(\hat{\mathbf{y}})$                                      & $\uparrow$ & Generation Only \\
        Verbalized 2S, per-Unit  & $c_M(\hat{u}_i \mid \hat{u}_{1:n})$                          & $\uparrow$ & Any Unit \\
        Verbalized 2S, Broadcast & $c_M(\hat{\mathbf{y}})$, replicated to each $\hat{u}_i$      & $\uparrow$ & Any Unit \\
        \bottomrule
    \end{tabular}
\end{table}

\section{More on Uncertainty Estimation results}
\label{appendix: Uncertainty Estimation results}

\subsection{How to Find the Best Confidence Function}
\label{appendix: best kappa function}
In this section, we detail the tests we performed to decide which confidence score function is on top among the confidence score functions we tested in this work. Without loss of generally, for each pair of confidence score functions, $\kappa_1, \kappa_2$, we pick the sets of paired results. A paired result in this context, is a pair of confidence scores extracted from applying the same LLM on the same dataset in this benchmark, using the same prompt, and evaluating on the same level of granularity, with the only exception that we use either $\kappa_1$ or $\kappa_2$ to get the ultimate confidence scores. We end up with over 500 paired samples for each comparison of two confidence score functions. Since the samples are paired, we employ the one-sided Wilcoxon paired test, where the tested hypothesis is whether $\kappa_1$ is better than $\kappa_2$, where the term better means greater for AUROC metrics, and lower for the ECE metric. We report the p-value of all tests in a matrix, so it is visible which confidence functions are better than which. In the case where multiple confidence functions are statistically better than the rest, and the Wilcoxon paired test is not sufficient, we pick the one with the best median score among the group of the best, statistically. Finally, we supplement the full results for the ECE, Adaptive ECE, AUROC, and PRR in Figures~\ref{fig: best kappa function - ece}, \ref{fig: best kappa function - auroc}, and \ref{fig: best kappa function - prr}, respectively.

\begin{figure}[ht]
\vskip 0.2in
\begin{center}
\centerline{\includegraphics[width=0.8\columnwidth]{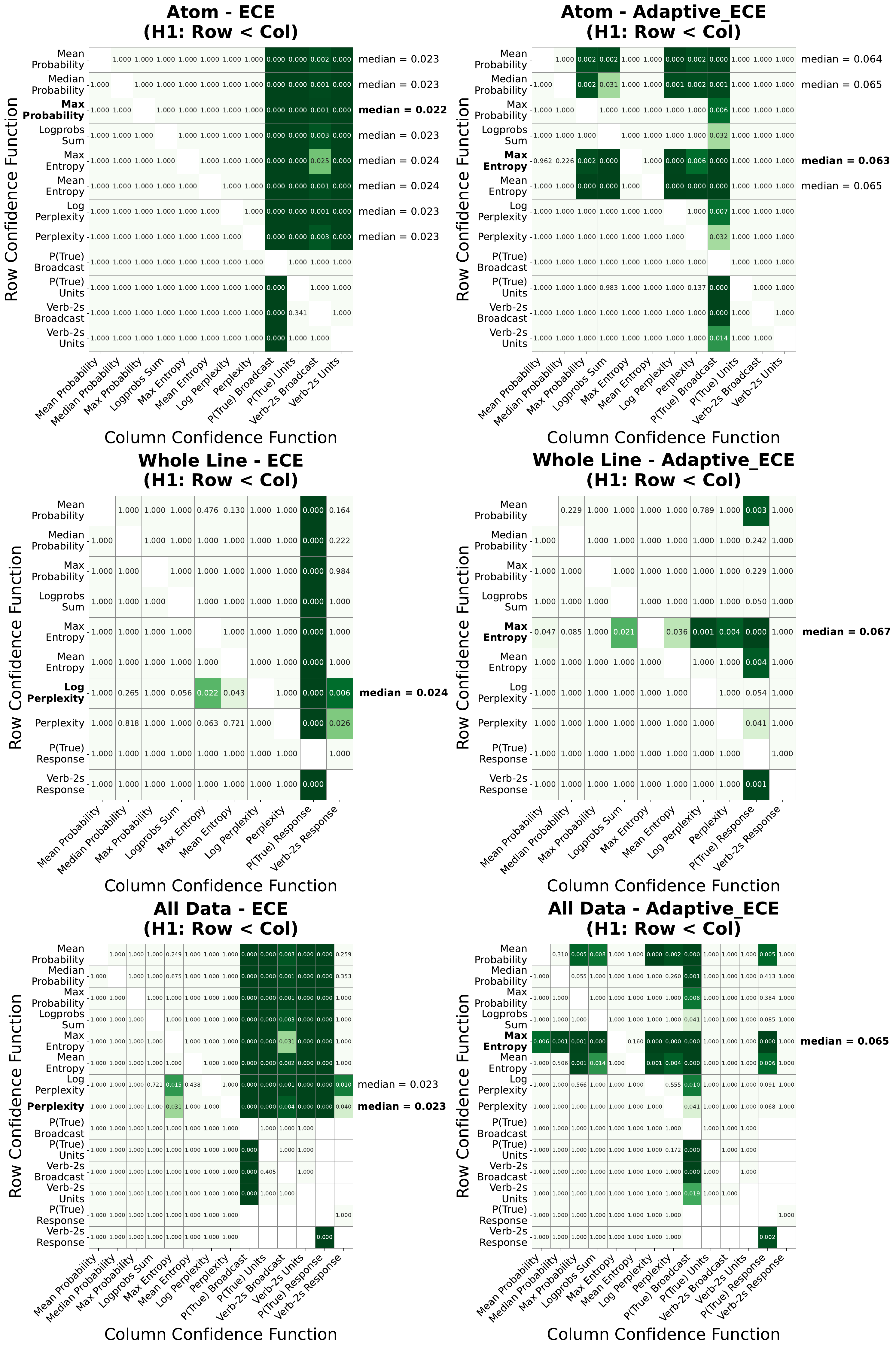}}
\caption{Which confidence function is optimal for Calibration among the tested confidence functions.}
\label{fig: best kappa function - ece}
\end{center}
\vskip -0.2in
\end{figure}

\begin{figure}[ht]
\vskip 0.2in
\begin{center}
\centerline{\includegraphics[width=0.8\columnwidth]{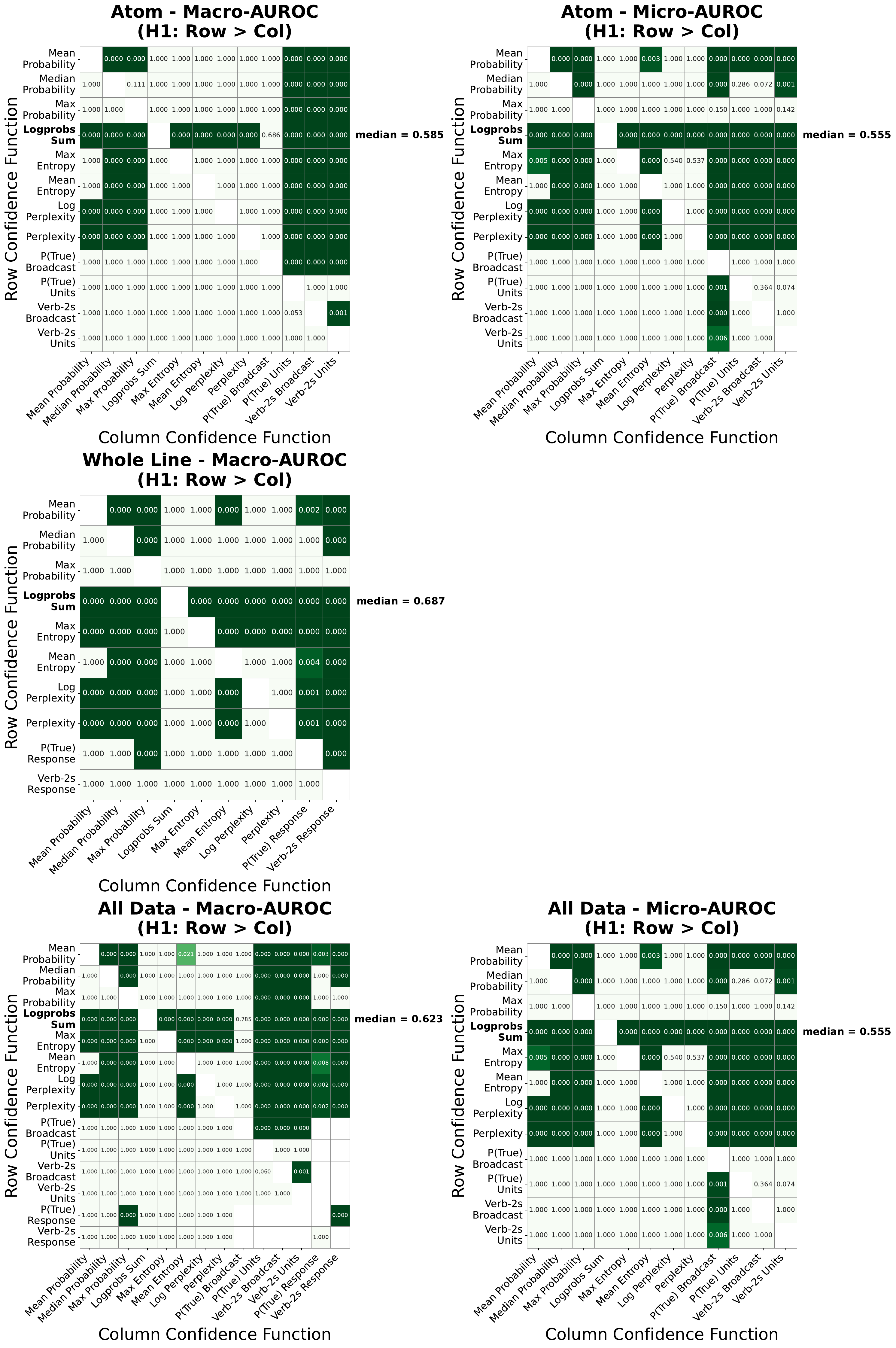}}
\caption{Which confidence function is optimal for Ranking among the tested confidence functions.}
\label{fig: best kappa function - auroc}
\end{center}
\vskip -0.2in
\end{figure}

\begin{figure}[ht]
\vskip 0.2in
\begin{center}
\centerline{\includegraphics[width=0.8\columnwidth]{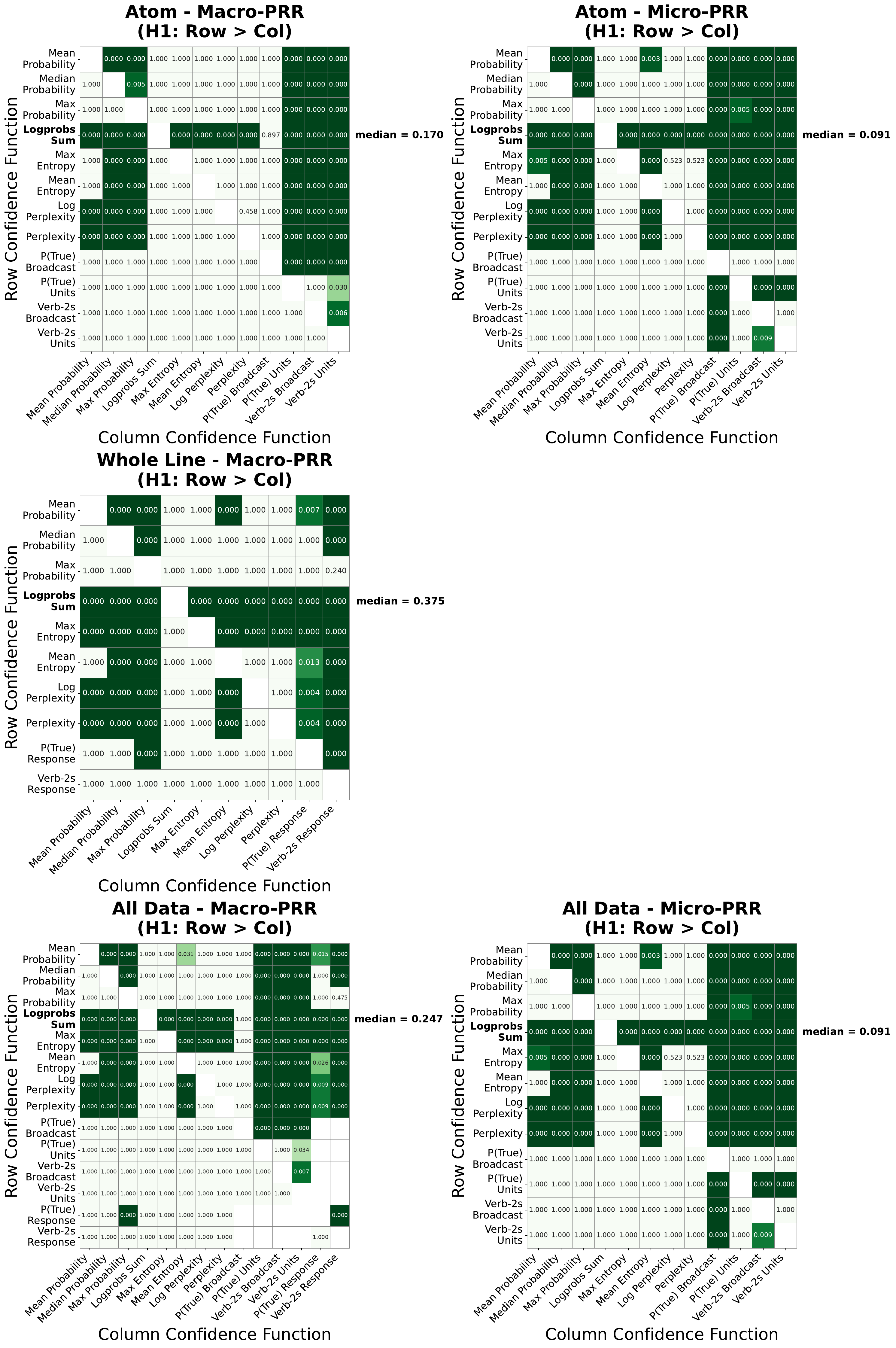}}
\caption{Which confidence function is optimal for the prediction-rejection ratio among the tested confidence functions.}
\label{fig: best kappa function - prr}
\end{center}
\vskip -0.2in
\end{figure}

\subsection{Confidence Ranking}
\label{appendix: more on auroc}
As discussed in Section~\ref{subsec: uncertainty estimation}, while we observe a strong correlation between precision and calibration, no such relationship is evident between precision and confidence ranking metrics. Figure~\ref{fig: precision_vs_auroc} illustrates this lack of correlation and highlights the comparative ineffectiveness of ranking at the atomic-unit level versus the line-unit level, which we discuss further in Appendix~\ref{appendix: granularity}.

\begin{figure}[ht]
\centering

\begin{subfigure}[t]{0.32\textwidth}
    \centering
    \includegraphics[width=\linewidth]{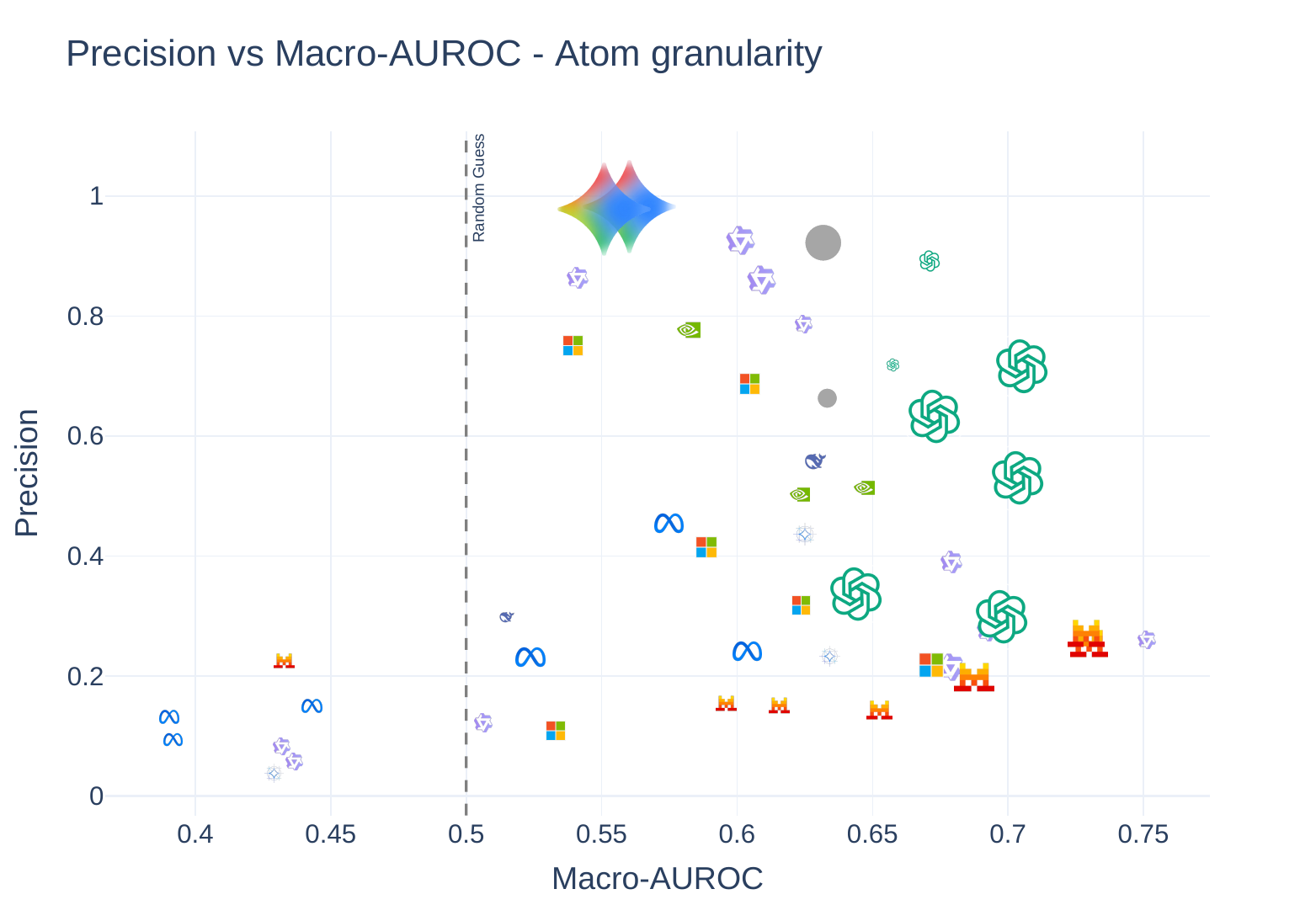}
    \caption{Macro-AUROC (Atom)}
\end{subfigure}\hfill
\begin{subfigure}[t]{0.32\textwidth}
    \centering
    \includegraphics[width=\linewidth]{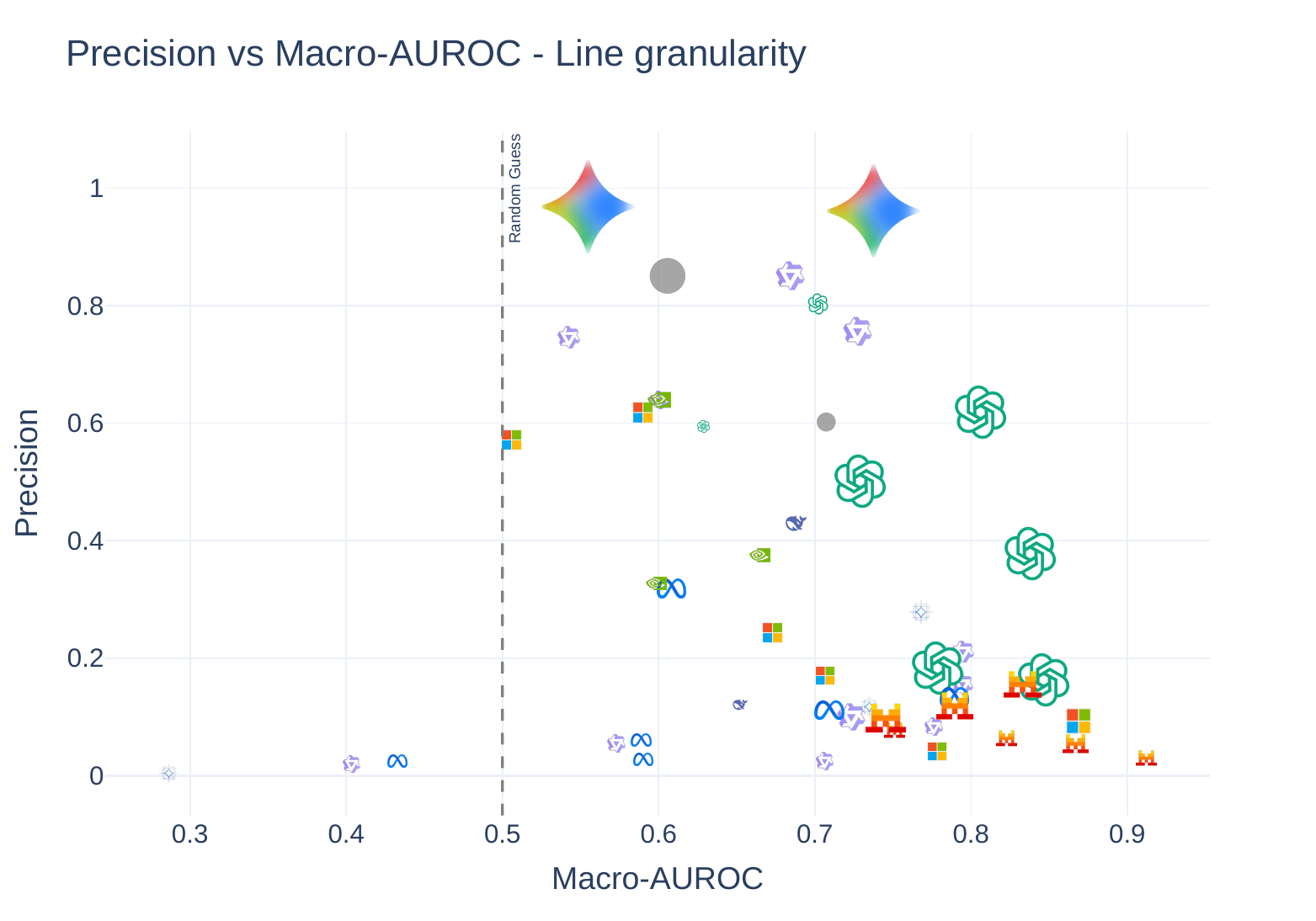}
    \caption{Macro-AUROC (Line)}
\end{subfigure}\hfill
\begin{subfigure}[t]{0.32\textwidth}
    \centering
    \includegraphics[width=\linewidth]{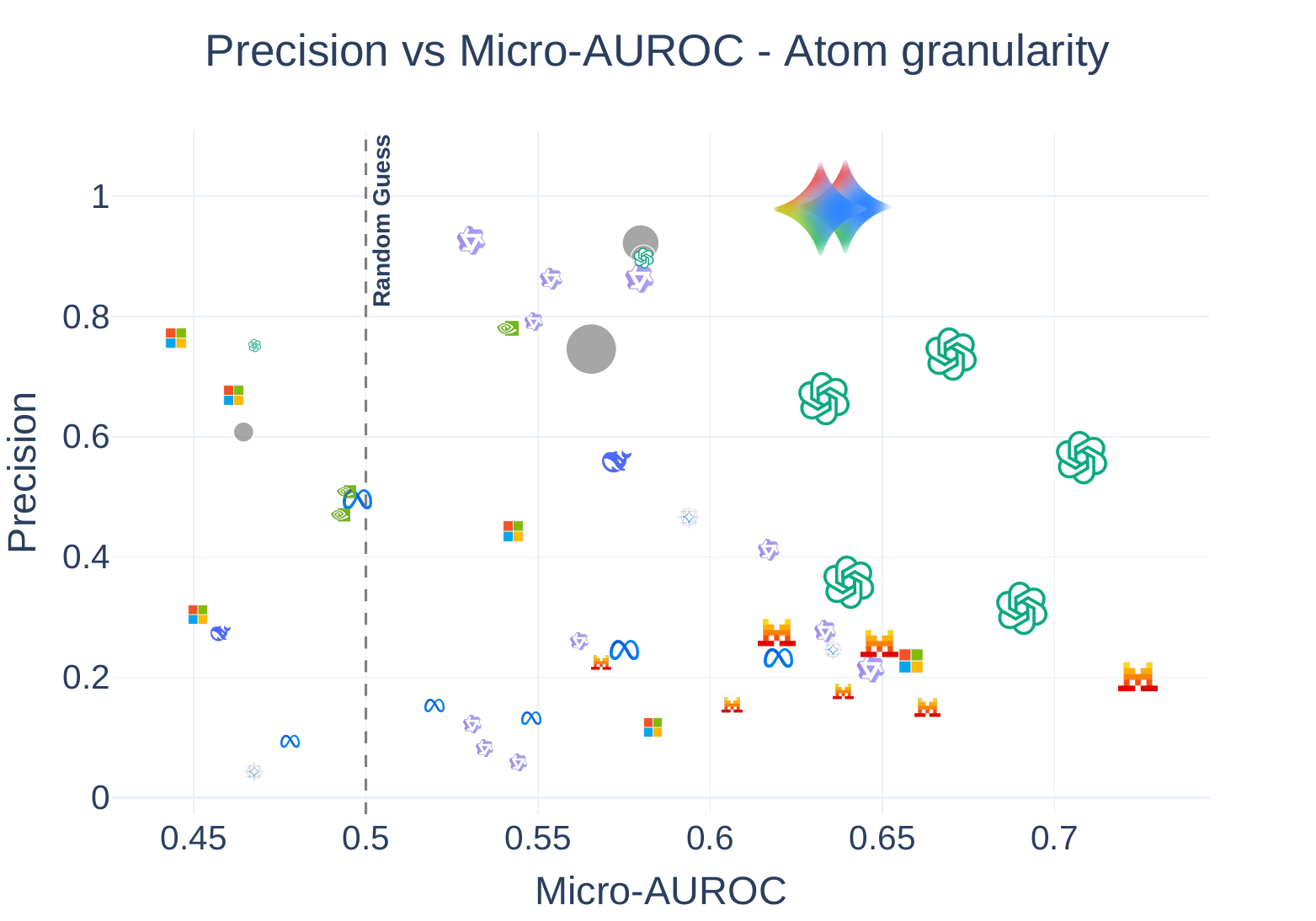}
    \caption{Micro-AUROC (Atom)}
    \label{fig: Micro-AUROC vs Precision}
\end{subfigure}

\caption{Relationship between LLM precision and confidence ranking metrics.}
\label{fig: precision_vs_auroc}
\end{figure}

\subsection{Uncertainty Estimation Transferability of SALT}
\label{appendix: Uncertainty Estimation transferability}

In Section~\ref{subsec: uncertainty estimation}, we analyze whether uncertainty-estimation trends observed on SALT transfer to external benchmarks. Here, we provide the full experimental protocol and correlation analysis. Unlike the precision-based comparison in Section~\ref{sec:benchmark_construction}, which compares precision rankings, this analysis compares \emph{confidence-function rankings}: we ask whether the confidence functions' performance ranking on SALT correlates well with external benchmarks.

We evaluate this transferability on a subset of models for which results are available on both SALT and the external benchmark, considering \emph{AIME} and \emph{MMLU-Pro}. For each matched model, we evaluate all logits-based confidence functions introduced in Appendix~\ref{appendix: kappa functions}. We repeat the comparison across uncertainty metrics, calibration methods, and SALT granularity levels. The evaluated metrics are AUROC and ECE, and the evaluated calibration methods include Z-score followed by a sigmoid transform, min-max normalization, and binned calibrations.

We compute transferability using two complementary procedures.

\paragraph{Per-model rank correlation.}
First, for each matched model, we compute the Spearman correlation between the ranking of confidence functions on SALT and the corresponding ranking on the external benchmark:
\[
\rho_m(D,r,c,g)
=
\rho_{\mathrm{Spear}}
\left(
\{s^{\mathrm{SALT}}_{m,\kappa,r,c,g}\}_{\kappa \in \mathcal{K}},
\{s^{D}_{m,\kappa,r,c}\}_{\kappa \in \mathcal{K}}
\right).
\]
We then average these correlations across matched models:
\[
\bar{\rho}(D,r,c,g)
=
\frac{1}{|\mathcal{M}|}
\sum_{m \in \mathcal{M}}
\rho_m(D,r,c,g).
\]
This procedure preserves the rank comparison within each model and then aggregates across models. We therefore view it as the more rank-faithful estimate of transferability. The results are shown in Figure~\ref{fig:ue_transferability_rank_first_combined}.

\paragraph{Average-score rank correlation.}
Second, we average each confidence function's score across matched models and then compute a single Spearman correlation:
\[
\bar{s}^{D}_{\kappa,r,c}
=
\frac{1}{|\mathcal{M}|}
\sum_{m \in \mathcal{M}}
s^{D}_{m,\kappa,r,c},
\]
and analogously for SALT. We then compute
\[
\rho_{\mathrm{avg}}(D,r,c,g)
=
\rho_{\mathrm{Spear}}
\left(
\{\bar{s}^{\mathrm{SALT}}_{\kappa,r,c,g}\}_{\kappa \in \mathcal{K}},
\{\bar{s}^{D}_{\kappa,r,c}\}_{\kappa \in \mathcal{K}}
\right).
\]
This procedure first smooths model-specific variation and then compares the global ordering of confidence functions. Empirically, it tends to produce stronger-magnitude correlations than the per-model procedure, since averaging reduces idiosyncratic model-level noise before ranking. The results are shown in Figure~\ref{fig:ue_transferability_average_first_combined}.

Across both procedures, we find that AIME agrees substantially better with SALT than MMLU-Pro.

% Rank then average
\begin{figure}[ht]
\vskip 0.2in
\begin{center}
\begin{subfigure}[b]{0.8\columnwidth}
\centering
\includegraphics[width=0.9\linewidth]{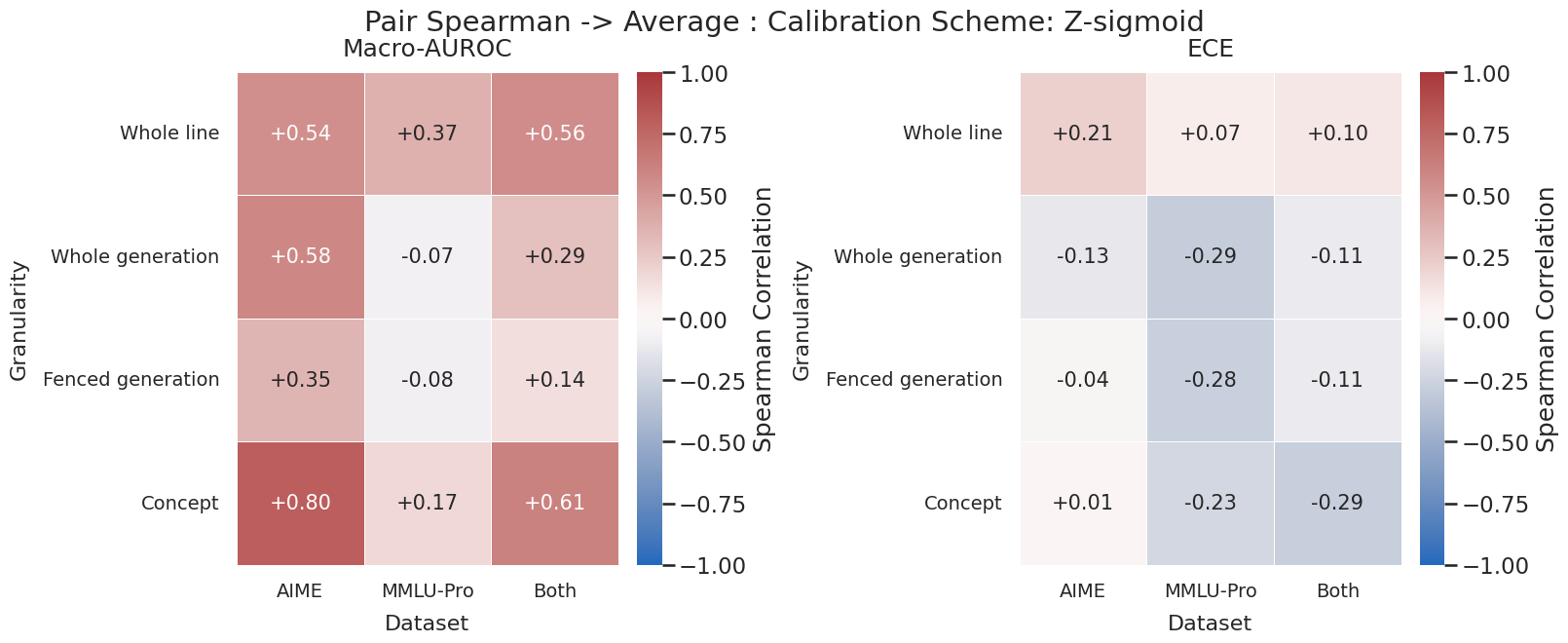}
\end{subfigure}
\par\bigskip
\begin{subfigure}[b]{0.8\columnwidth}
\centering
\includegraphics[width=0.9\linewidth]{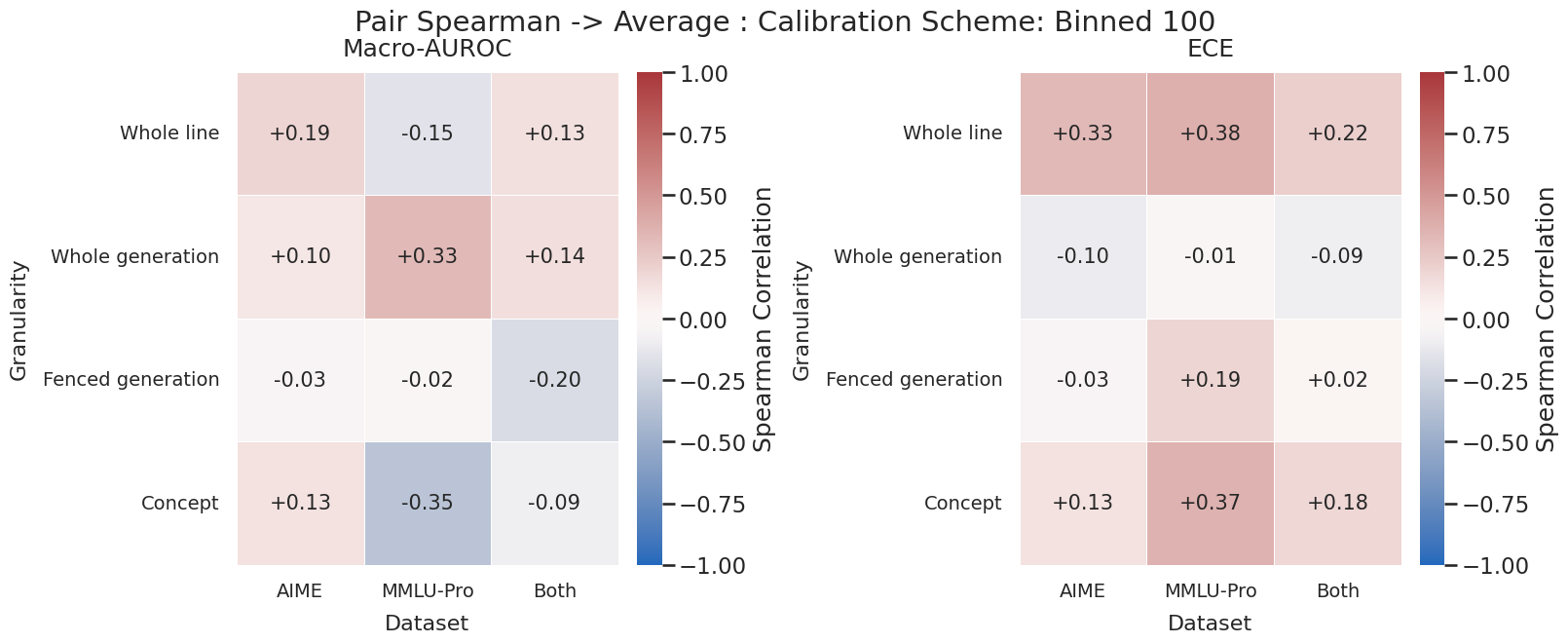}
\end{subfigure}
\par\bigskip
\begin{subfigure}[b]{0.8\columnwidth}
\centering
\includegraphics[width=0.9\linewidth]{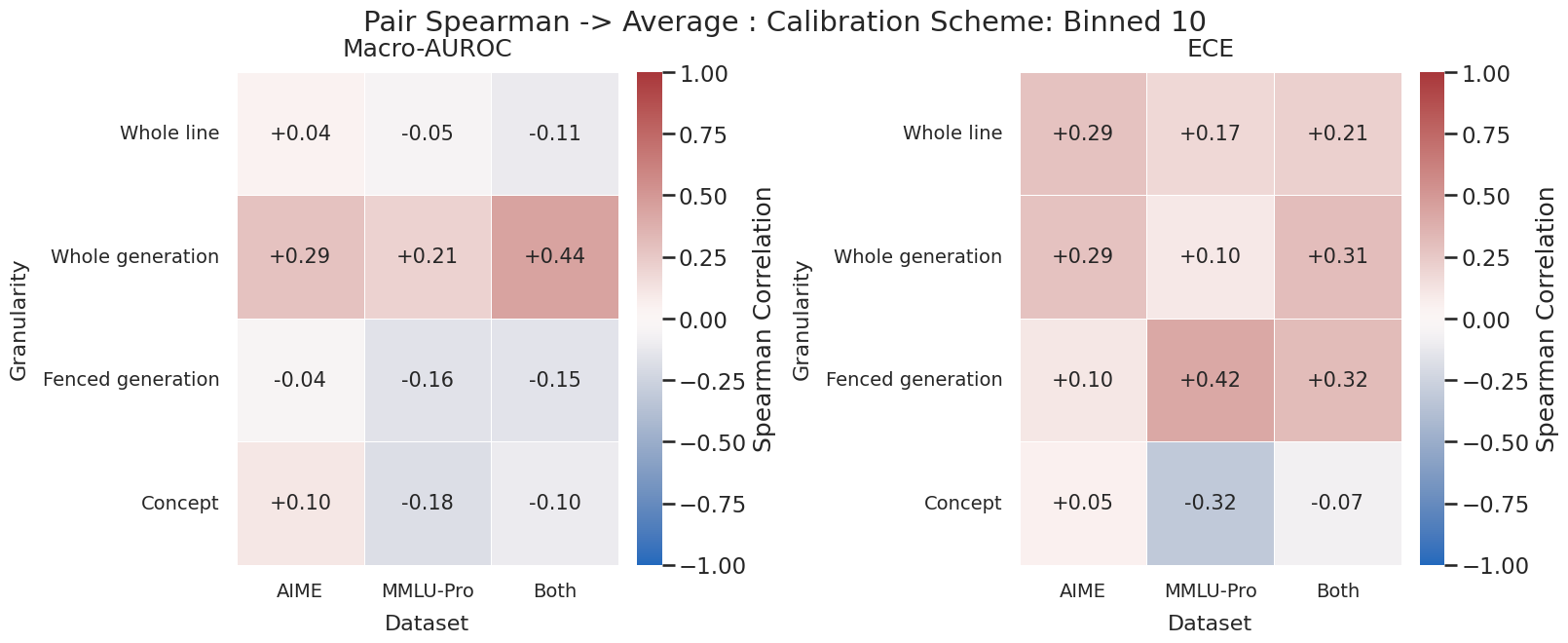}
\end{subfigure}
\par\bigskip
\begin{subfigure}[b]{0.8\columnwidth}
\centering
\includegraphics[width=0.9\linewidth]{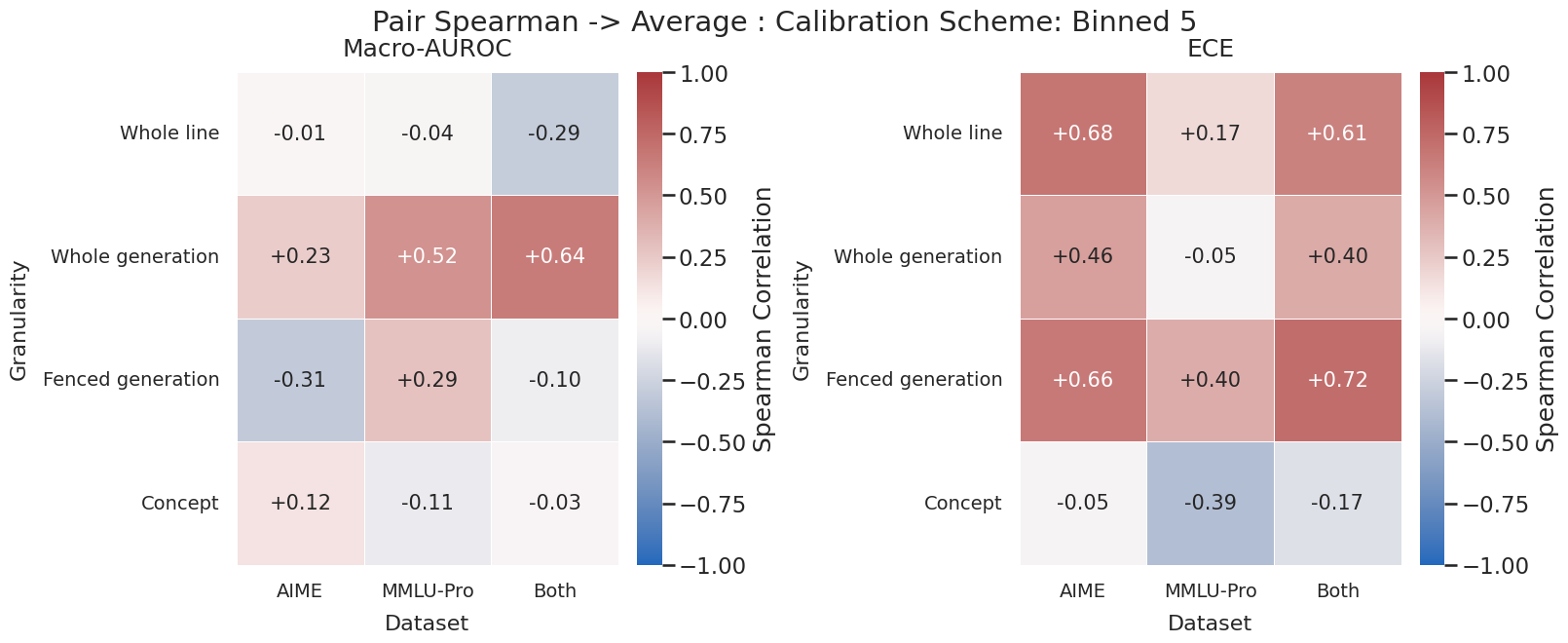}
\end{subfigure}
\caption{Per-model rank correlation}
\label{fig:ue_transferability_rank_first_combined}
\end{center}
\vskip -0.2in
\end{figure}

% Average then rank
\begin{figure}[ht]
\vskip 0.2in
\begin{center}
\begin{subfigure}[b]{\columnwidth}
\centering
\includegraphics[width=0.72\linewidth]{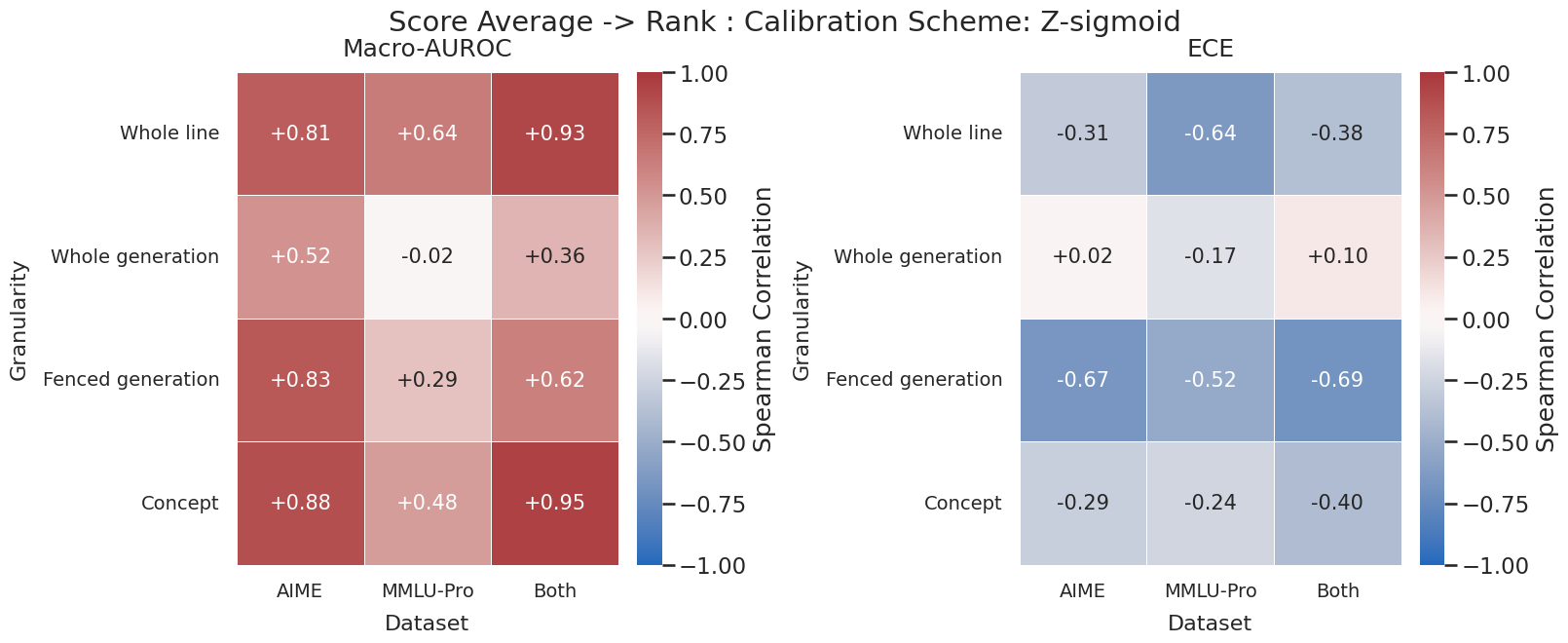}
\end{subfigure}
\par\bigskip
\begin{subfigure}[b]{\columnwidth}
\centering
\includegraphics[width=0.72\linewidth]{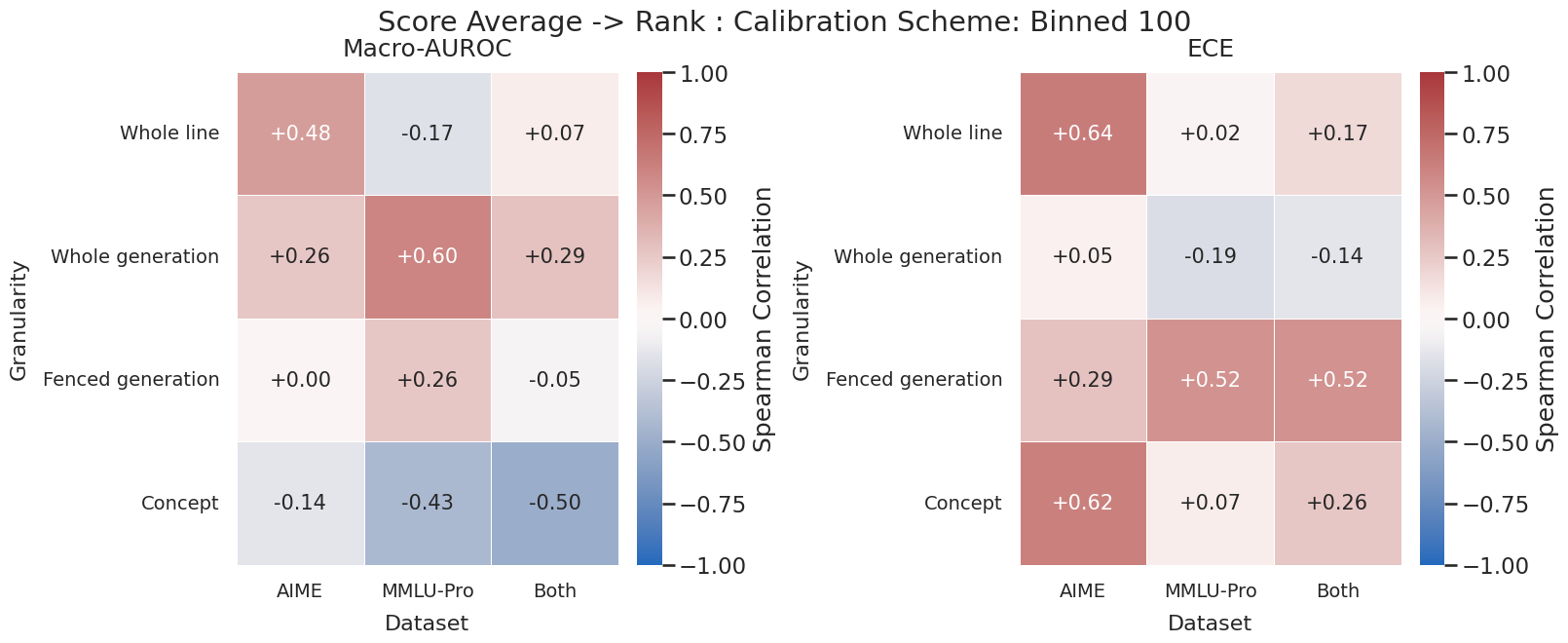}
\end{subfigure}
\par\bigskip
\begin{subfigure}[b]{\columnwidth}
\centering
\includegraphics[width=0.72\linewidth]{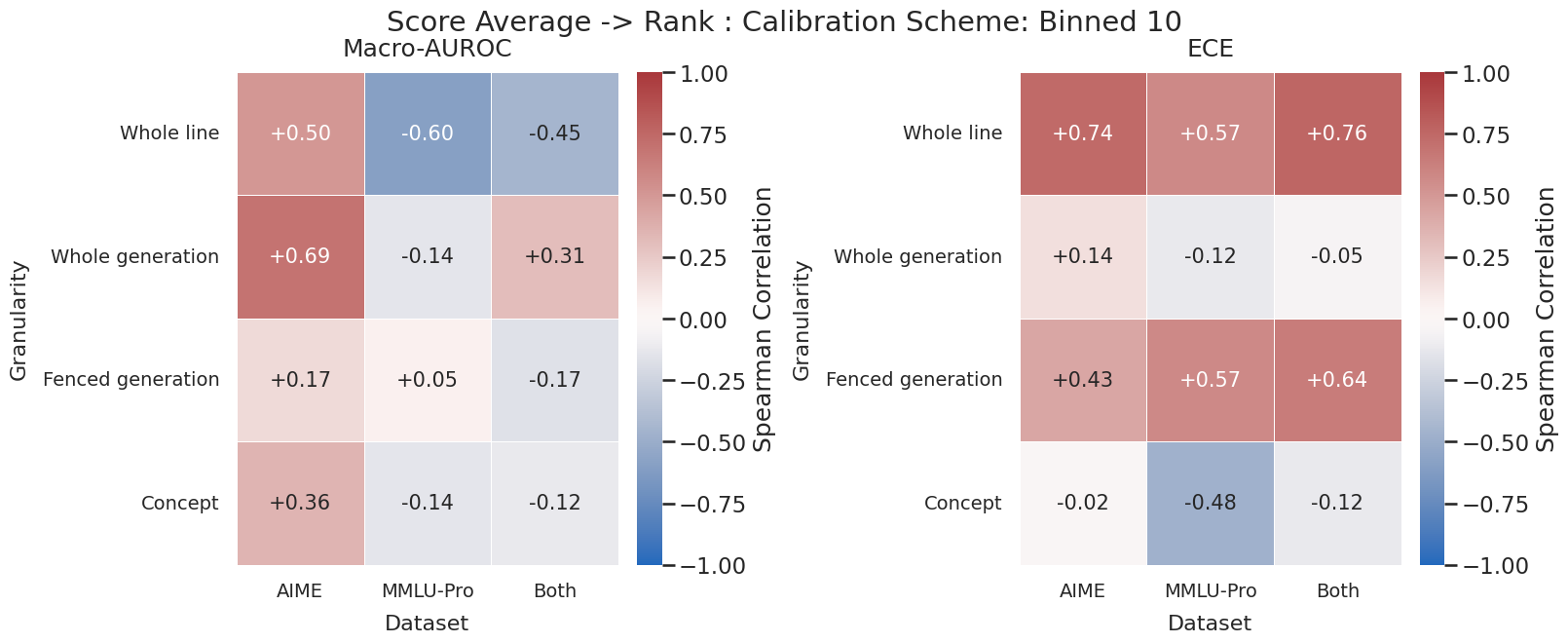}
\end{subfigure}
\par\bigskip
\begin{subfigure}[b]{\columnwidth}
\centering
\includegraphics[width=0.72\linewidth]{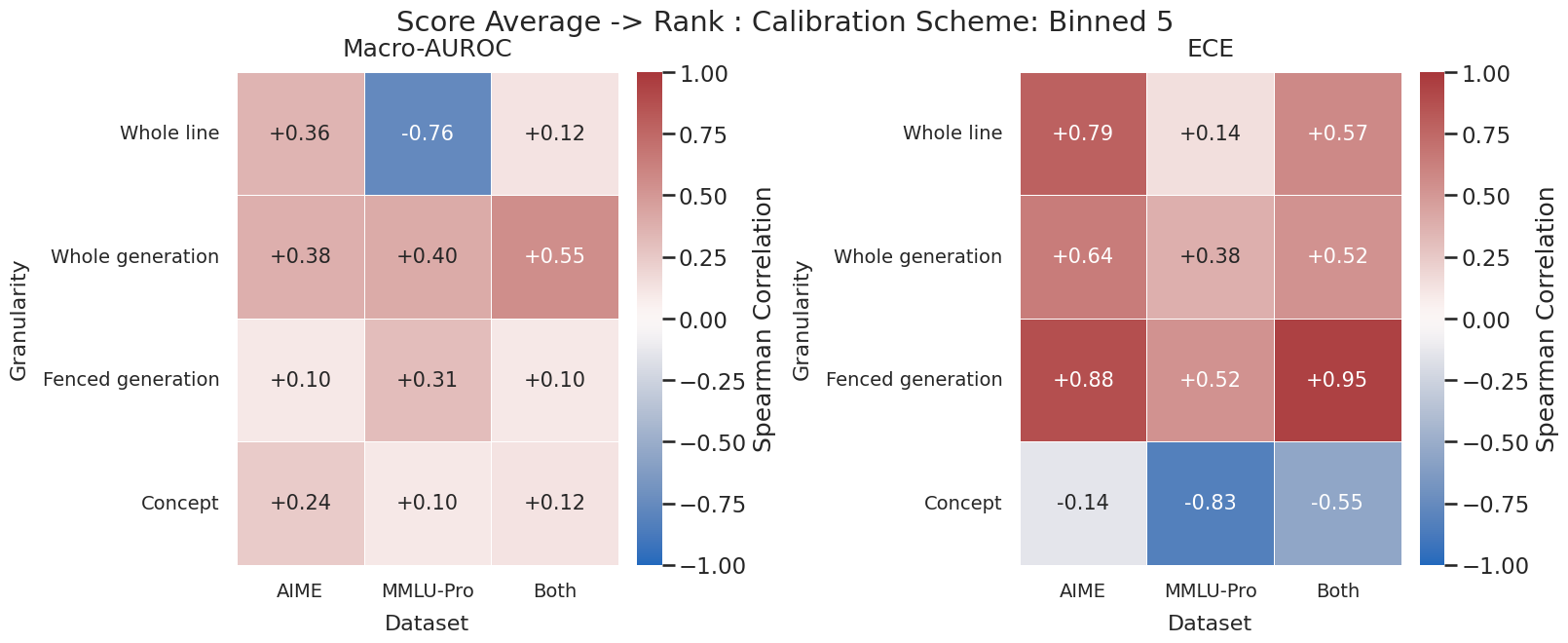}
\end{subfigure}
\caption{Average-score rank correlation.}
\label{fig:ue_transferability_average_first_combined}
\end{center}
\vskip -0.2in
\end{figure}

\subsection{PRR and AUROC Correlation}
\label{appendix: prr and auroc correlation}
We evaluate the Spearman correlation between AUROC and PRR on both the SALT benchmark and external datasets. \citet{chen2026queryleveluncertaintylargelanguage} reports high Spearman correlations of 0.9, 0.96, and 0.98 for the GSM8K, SciQ, and TriviaQA datasets, respectively. Similarly, \citet{DBLP:conf/naacl/YaldizBBTRDZA25} observes an average Spearman correlation of 0.96 across NaturalQA, TriviaQA, GSM8K, and WebQA. On SALT, we consistently find nearly perfect Spearman correlations between AUROC and PRR at the macro-level (see cref{fig: prr and auroc correlations scatter plot}), and a high correlation at the micro-level (see \cref{fig: prr and auroc correlations matrix}).

\begin{figure}[ht]
\vskip 0.1in
\centering
    % --- Subfigure (a): Atom ---
    \begin{subfigure}[b]{\columnwidth}
        \centering
        \includegraphics[width=0.9\columnwidth]{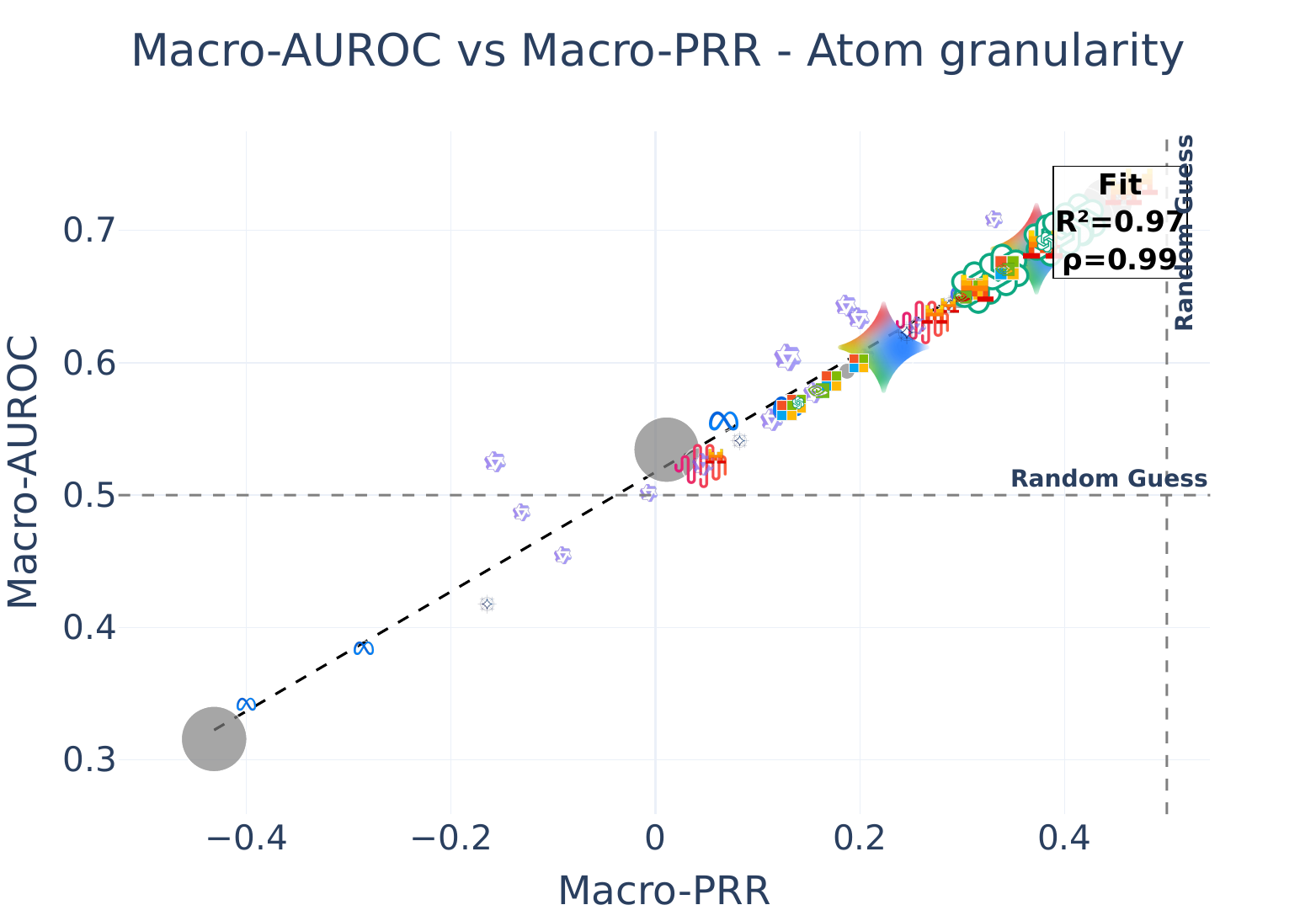}
    \end{subfigure}

    \vskip 0.1in

    % --- Subfigure (b): Line ---
    \begin{subfigure}[b]{\columnwidth}
        \centering
        \includegraphics[width=0.9\columnwidth]{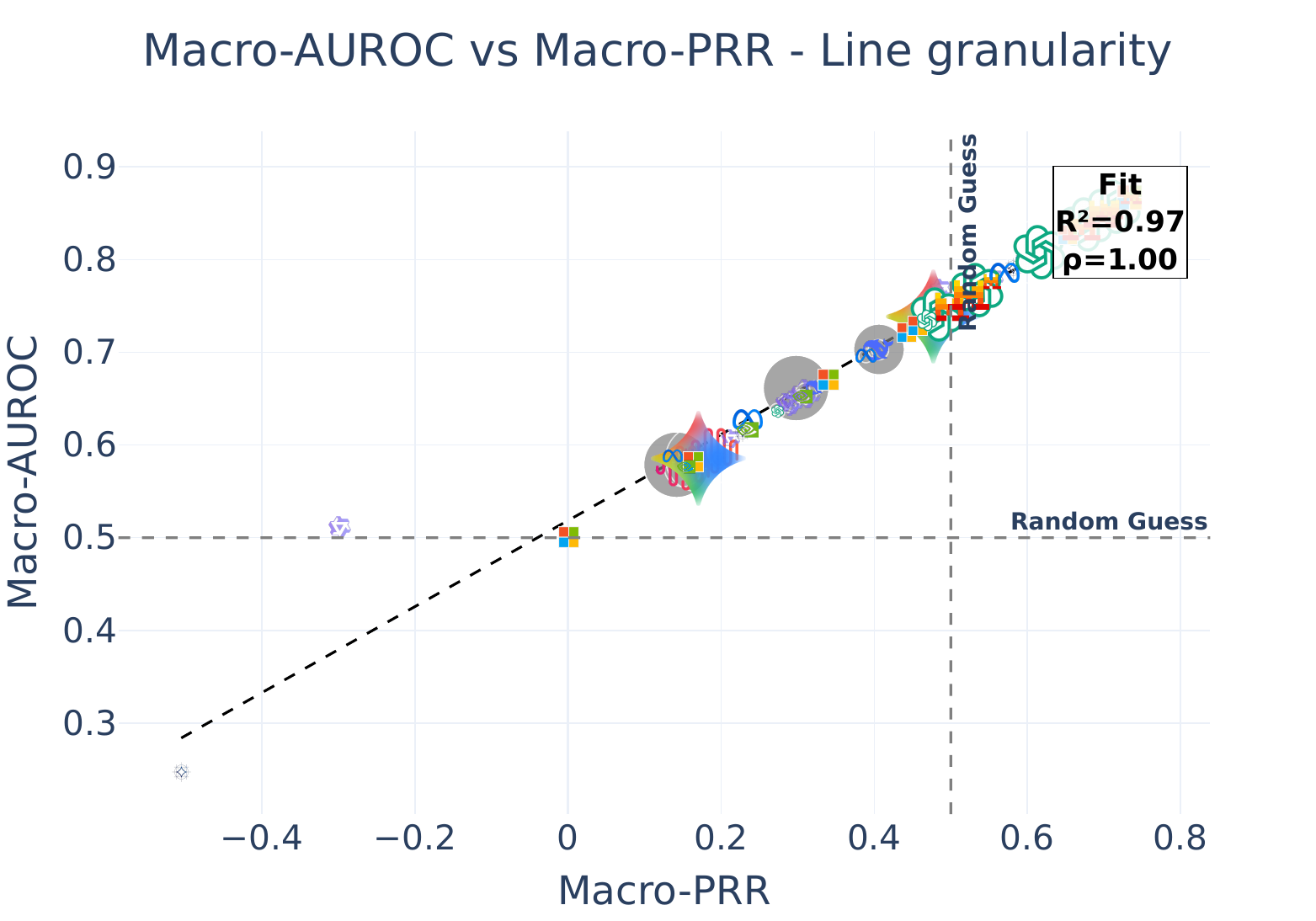}
    \end{subfigure}

\caption{Macro-AUROC and Maro-PRR nearly perfect correlations at the atomic and line-unit levels.}
\label{fig: prr and auroc correlations scatter plot}
\vskip -0.2in
\end{figure}

\begin{figure}[ht]
\vskip 0.2in
\begin{center}
\centerline{\includegraphics[width=0.7\columnwidth]{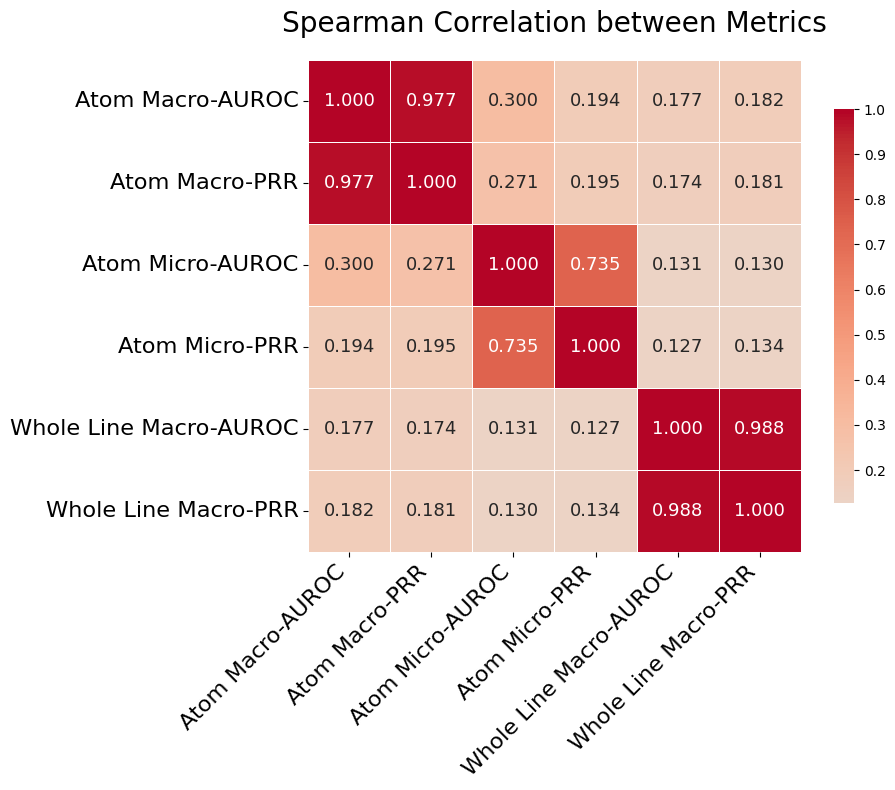}}
\caption{Spearman correlation matrix between PRR and AUROC. Both atom and line units show a high correlation between the two.}
\label{fig: prr and auroc correlations matrix}
\end{center}
\vskip -0.2in
\end{figure}

\subsection{Calibration Driven by Precision: A Causal Investigation}
\label{appendix: precision cause calibration}

This section details the methodology and results for the causal investigation of the relationship between model precision and calibration (measured via ECE) reported in Section~\ref{subsec: uncertainty estimation}. We restricted this analysis to the optimal confidence function identified in Section~\ref{subsec: uncertainty estimation}, perplexity-based estimation.

\subsubsection{Methodology}

To rigorously assess the causal link Precision $\to$ ECE on observational data ($N=50$ LLMs), we employed a three-stage framework:

\paragraph{1. Regime Stratification:}
Preliminary non-parametric analysis revealed a non-monotonic "V-shaped" relationship between precision and calibration. To avoid confounding effects arising from this non-linearity (where opposite trends cancel each other out), we stratified the analysis into two distinct regimes: a \textit{Low Precision} regime (Precision $< 0.6$, $N=34$) and a \textit{High Precision} regime (Precision $\ge 0.6$, $N=16$).

\paragraph{2. Confounding Assessment (Robustness to Model Scale):}
Within each regime, we tested whether the Precision-ECE link was spurious and driven simply by model capacity. We controlled for \textit{Total Parameters} and \textit{Active Parameters} using three complementary tests:
\begin{itemize}
    \item \textbf{Partial Correlation (Spearman):} Measures monotonic association while holding model size constant.
    \item \textbf{Conditional Independence (HSIC):} A non-parametric kernel-based test ($H_0: X \perp Y \mid Z$) to detect non-linear dependencies that linear controls might miss \citep{DBLP:journals/jmlr/GrettonHSBS05}.
    \item \textbf{Non-Linear Double Machine Learning (DML):} We estimated the causal effect using an EconML LinearDML estimator \citep{Chernozhukov2016DoubleDebiasedML}, employing Random Forests to model the nuisance parameters.
\end{itemize}

\paragraph{3. Causal Directionality (ANM):}
To validate the direction of causality, we utilized the Additive Noise Model (ANM) framework \citep{NIPS2008_f7664060}. We modeled the functional relationship using Gaussian Process regression and tested independence between the predictor and the residuals using HSIC. A high p-value indicates that the residuals are independent noise (supporting the model), while a low p-value indicates structure in the residuals (rejecting the model).

\subsubsection{Results and Conclusion}
Table~\ref{tab:causal_results} summarizes the results for both regimes.

\begin{table}[t]
    \centering
    \caption{Causal analysis of the Precision $\to$ ECE relationship across two regimes. \textbf{Confounding Tests} assess if the link persists after controlling for model size (p $<0.05$ indicates a significant link). \textbf{Directionality Tests} (ANM) assess if the causal model fits the data (p $>0.05$ indicates the model is accepted; residuals are independent noise).}
    \label{tab:causal_results}
    \resizebox{0.8\textwidth}{!}{%
    \begin{tabular}{llcc}
        \toprule
        \textbf{Test Category} & \textbf{Metric} & \textbf{Learning Regime} & \textbf{Overconfidence Regime} \\
         & & (Precision $< 0.6$) & (Precision $\ge 0.6$) \\
        \midrule
        \textit{Sample Size} & $N$ & 34 & 16 \\
        \midrule
        \multicolumn{4}{l}{\textbf{1. Robustness to Confounding (Model Size)}} \\
        \quad Partial Correlation & Spearman $\rho$ & $-0.90$ ($p < 0.001$) & $+0.84$ ($p < 0.001$) \\
        \quad Conditional Independence & HSIC Test & Reject $H_0$ ($p < 0.001$) & Reject $H_0$ ($p < 0.001$) \\
        \quad Causal Effect (DML) & Estimate & $-0.60$ ($p < 0.001$) & $+0.62$ ($p < 0.001$) \\
        \midrule
        \multicolumn{4}{l}{\textbf{2. Causal Directionality (ANM)}} \\
        \quad \textbf{Forward Model} & HSIC p-value & $\mathbf{0.99}$ & $0.07$ \\
        \quad (Precision $\to$ ECE) & Decision & \textbf{Accepted} & Accepted (Ambiguous) \\
        \addlinespace
        \quad \textbf{Backward Model} & HSIC p-value & $\mathbf{0.03}$ & $0.08$ \\
        \quad (ECE $\to$ Precision) & Decision & \textbf{Rejected} & Accepted (Ambiguous) \\
        \bottomrule
    \end{tabular}%
    }
\end{table}

The analysis provides strong observational evidence for a non-monotonic causal mechanism:
\begin{enumerate}
    \item \textbf{Robustness:} In both regimes, the relationship between Precision and ECE is robust to confounding. All three tests (Partial Correlation, HSIC, and DML) confirm that the link persists even after rigorously controlling for model scale.
    
    \item \textbf{The Low Precision Regime ($<0.6$):} We find a clear causal signal that improvements in precision drive better calibration. The ANM test decisively accepts the forward direction ($p=0.99$) and rejects the reverse ($p=0.03$), suggesting precision is the functional driver of calibration error in this phase.
    
    \item \textbf{The High Precision Regime ($\ge 0.6$):} The relationship inverts significantly. Higher precision causes \textit{higher} calibration error, with a positive causal effect of $+0.62$. While the directionality tests are ambiguous, we hypothesize it is likely due to the small sample size, $N=16$.
\end{enumerate}

\subsubsection{Residual Analysis}

We fit a linear regression $\text{ECE} \sim \text{Precision}$ on the 35 non-reasoning models in our dataset. This establishes a baseline relationship between precision and expected ECE. We then examined how the 9 reasoning models deviate from this baseline.

\paragraph{Procedure:}
\begin{enumerate}
    \item Fit: $\widehat{\text{ECE}} = \beta_0 + \beta_1 \cdot \text{Precision}$ on non-reasoning models
    \item For each reasoning model $i$, compute residual: $r_i = \text{ECE}_i - \widehat{\text{ECE}}_i$
    \item Test whether $\{r_i\}$ are systematically different from zero using one-sided t-tests
\end{enumerate}

\paragraph{Results:}
\begin{itemize}
    \item Mean residual: $+0.039$ (slight positive deviation)
    \item One-sided t-test (worse than expected): $p = 0.052$
    \item One-sided t-test (better than expected): $p = 0.948$
\end{itemize}

The residuals are not significantly different from zero in either direction, indicating reasoning models' calibration aligns with the precision-based prediction.

\subsubsection{Causal Mediation Analysis}

We employed the DoWhy framework \citep{dowhy, JMLR:v25:22-1258}, which builds upon Pearl's do-calculus \citep{10.1093/biomet/82.4.669} to decompose the effect of reasoning architecture on calibration.

\paragraph{Causal Graph:}
We specified the following directed acyclic graph:
\begin{center}
Is\_Reasoning $\to$ Precision $\to$ ECE \\
Is\_Reasoning $\to$ ECE
\end{center}

The framework decomposes the Total Effect into:
\begin{itemize}
    \item \textbf{Natural Indirect Effect (NIE)}: Effect mediated through Precision
    \item \textbf{Natural Direct Effect (NDE)}: Direct effect independent of Precision
\end{itemize}

\paragraph{Estimation:}
We used two-stage linear regression with 1000 bootstrap iterations to estimate confidence intervals:
\begin{enumerate}
    \item \textbf{First stage}: Regress Precision on Is\_Reasoning
    \item \textbf{Second stage}: Regress ECE on both Is\_Reasoning and Precision
\end{enumerate}

\paragraph{Results:}
\begin{table}[ht]
\centering
\begin{tabular}{lccc}
\toprule
\textbf{Effect} & \textbf{Estimate} & \textbf{95\% CI} & \textbf{Significance} \\
\midrule
Total Effect & $-0.082$ & --- & Overall improvement \\
NIE (via Precision) & $-0.0995$ & $[-0.151, -0.058]$ & Significant ($p<0.001$) \\
NDE (Direct) & $+0.005$ & $[-0.026, 0.049]$ & Not significant ($p=0.798$) \\
\bottomrule
\end{tabular}
\caption{Mediation analysis results. The NIE accounts for 115\% of the Total Effect, indicating complete mediation. The NDE confidence interval spans zero.}
\label{tab:mediation_results}
\end{table}

\paragraph{ANCOVA Confirmation:}
We confirmed these findings via ANCOVA. Regressing ECE on both Precision and Is\_Reasoning yielded:
\begin{itemize}
    \item Precision coefficient: $\beta = -0.247$, $p < 0.001$ (highly significant)
    \item Is\_Reasoning coefficient: $\beta = 0.005$, $p = 0.798$ (not significant)
\end{itemize}

This corroborates the NDE estimate, showing no residual effect of reasoning architecture when controlling for precision.

\paragraph{Interpretation:}
The NIE being larger in magnitude than the Total Effect (with opposite-signed NDE) indicates \emph{complete mediation} with slight inconsistency. This means: (1) reasoning models' calibration benefit flows entirely through their higher precision (NIE significant, negative), and (2) the architecture itself contributes no additional calibration advantage (NDE not significant, CI spans zero). The slight positive NDE point estimate suggests, if anything, a minor penalty when precision is held constant, though this is within sampling error.

\subsection{The combined Effect of Reasoning + CoT}
\label{appendix: reasoning plus cot}

To determine the combined effect of intrinsic reasoning (Reasoning models) and extrinsic prompting (Chain-of-Thought), we evaluated a hybrid configuration: \textbf{Reasoning Models prompted with CoT}. We compared this combined setup against the standard baseline (Instruct models with regular prompting).

The results, visualized in Figure~\ref{fig: reasoning with CoT effect} (Purple series), reveal a compounding effect that further validates the ``Reasoning-Ranking Trade-off'':

\begin{itemize}
    \item \textbf{Precision Saturation:} The combination of Reasoning models and CoT yields a median precision improvement of $\sim$78\%, which is statistically indistinguishable from using Reasoning models alone (Blue series). This suggests that the intrinsic reasoning capabilities of the model already capture the majority of the achievable accuracy gains, making either the extra CoT tokens redundant for correctness, or reasoning models are simply generating CoT tokens regardless of whether requested.

    \item \textbf{Calibration Additivity:} Interestingly, despite the saturation in precision, the calibration gains continue to stack. The hybrid configuration achieves the highest reduction in Expected Calibration Error (ECE) ($\sim$21\%), outperforming Reasoning models alone ($\sim$16\%). This results in the most calibrated configuration among all tested setups, suggesting that the additional reasoning traces help align the model's confidence with its (high) accuracy, even if they do not further improve the accuracy itself.

    \item \textbf{Ranking Aggravation:} Crucially, while accuracy saturates, the degradation in uncertainty estimation continues to worsen. The hybrid configuration suffers the most severe drop in ranking ability (AUROC), with a median degradation of $\sim$11.6\%, compared to $\sim$5\% for Reasoning models alone and $\sim$2\% for CoT alone. 
\end{itemize}

\textbf{Conclusion:}
This creates a paradox where the "Reasoning + CoT" setup is simultaneously the \textbf{most calibrated} and the \textbf{least discriminative} (lowest AUROC).
It suggests that while extended reasoning traces help the model hit a precise aggregate confidence level (low ECE), the added noise or internal coherence flattens the relative ordering of correct vs.\ incorrect atoms, making it harder to rank individual generations reliably.

\begin{figure}[ht]
\vskip 0.2in
\begin{center}
\centerline{\includegraphics[width=\columnwidth]{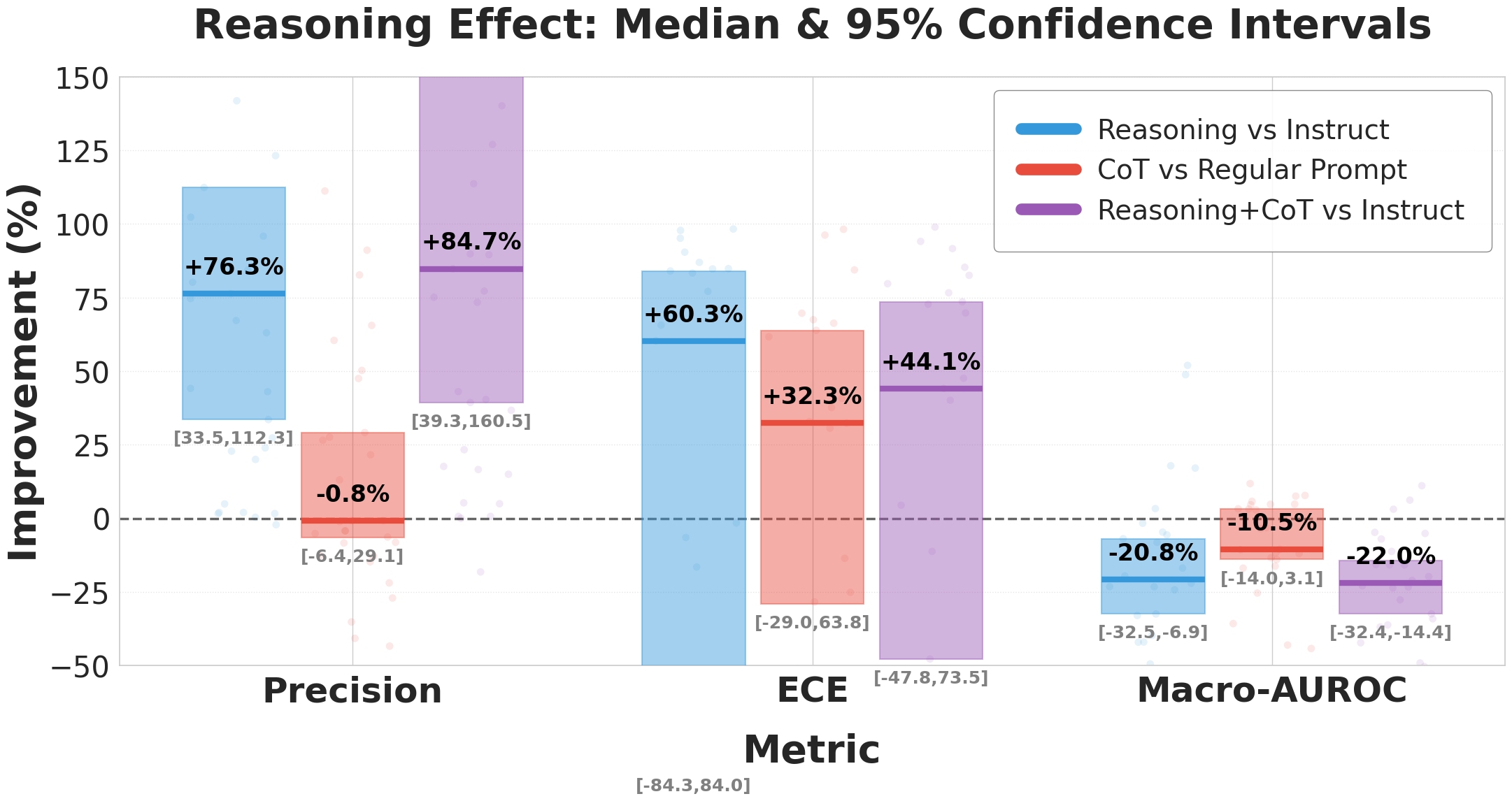}}
\caption{A comparison of the effect of the CoT and Reasoning combination with each reasoning strategy individually, on Precision, ECE, and AUROC}
\label{fig: reasoning with CoT effect}
\end{center}
\vskip -0.2in
\end{figure}

While Figures~\ref{fig: reasoning effects}, \ref{fig: reasoning with CoT effect} demonstrate the general reasoning trade-off, they lack the exploration of the individual tasks and models. For completeness, we provide Figures~\ref{fig: reasoning_tradeoff_combined} and \ref{fig: reasoning_tradeoff_precision vs AUROC} to demonstrate the ECE-AUROC and the Precision-AUROC trade-offs induced by reasoning.

% Reasoning trade-off ECE vs AUROC
\begin{figure}[ht]
\vskip 0.2in
\begin{center}
\begin{subfigure}[b]{\columnwidth}
\centering
\includegraphics[width=\linewidth]{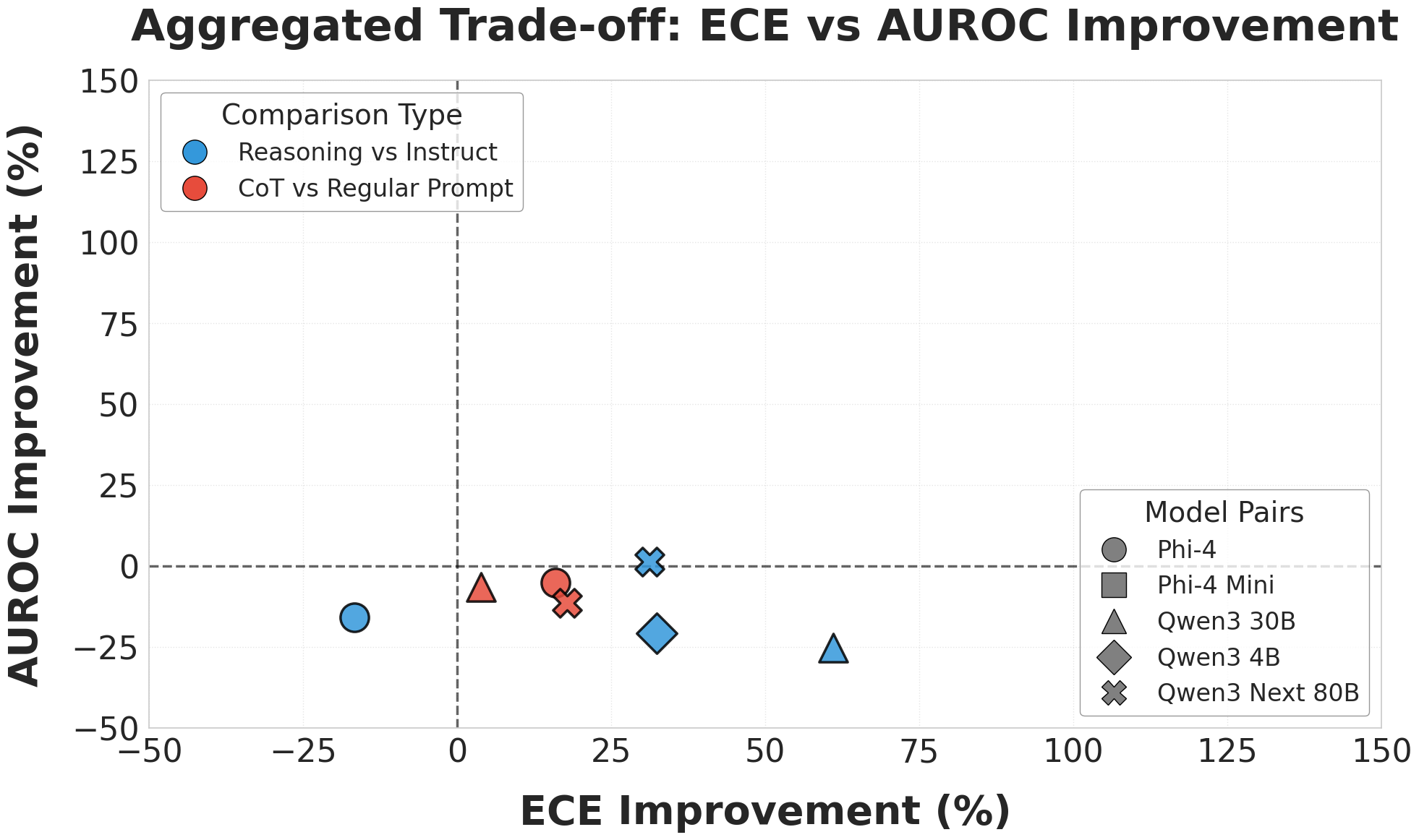}
\end{subfigure}
\par\bigskip
\begin{subfigure}[b]{\columnwidth}
\centering
\includegraphics[width=\linewidth]{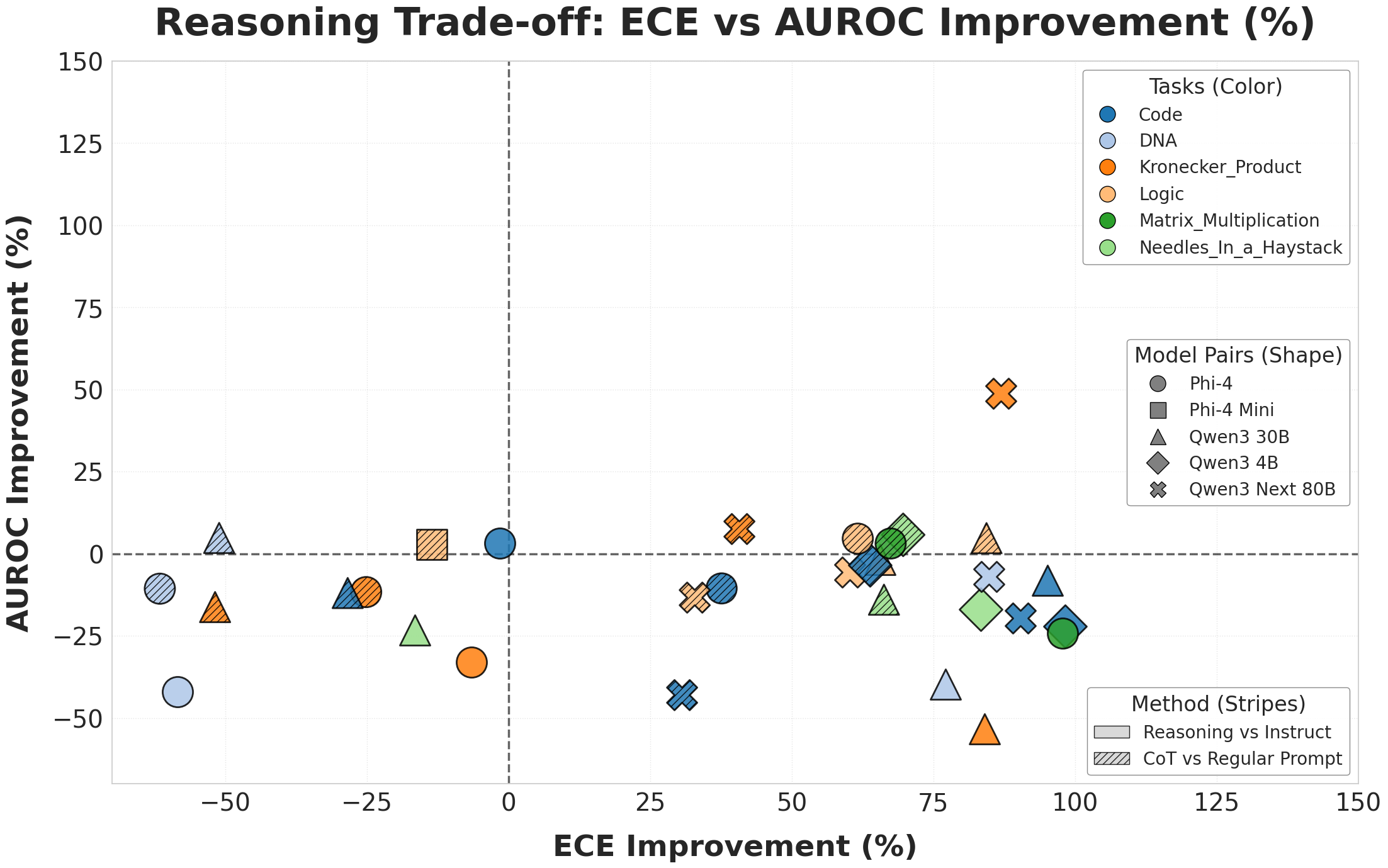}
\end{subfigure}
\caption{Impact of reasoning methods on model performance. Each marker represents the relative change between two identical models, differing only by CoT prompting or reasoning-specific training. Markers above the zero line represent an improvement in that metric relative to the base model's performance, while markers below represent a degradation.}
\label{fig: reasoning_tradeoff_combined}
\end{center}
\vskip -0.2in
\end{figure}

% Reasoning trade-off Precision vs AUROC
\begin{figure}[ht]
\vskip 0.2in
\begin{center}
\begin{subfigure}[b]{\columnwidth}
\centering
\includegraphics[width=\linewidth]{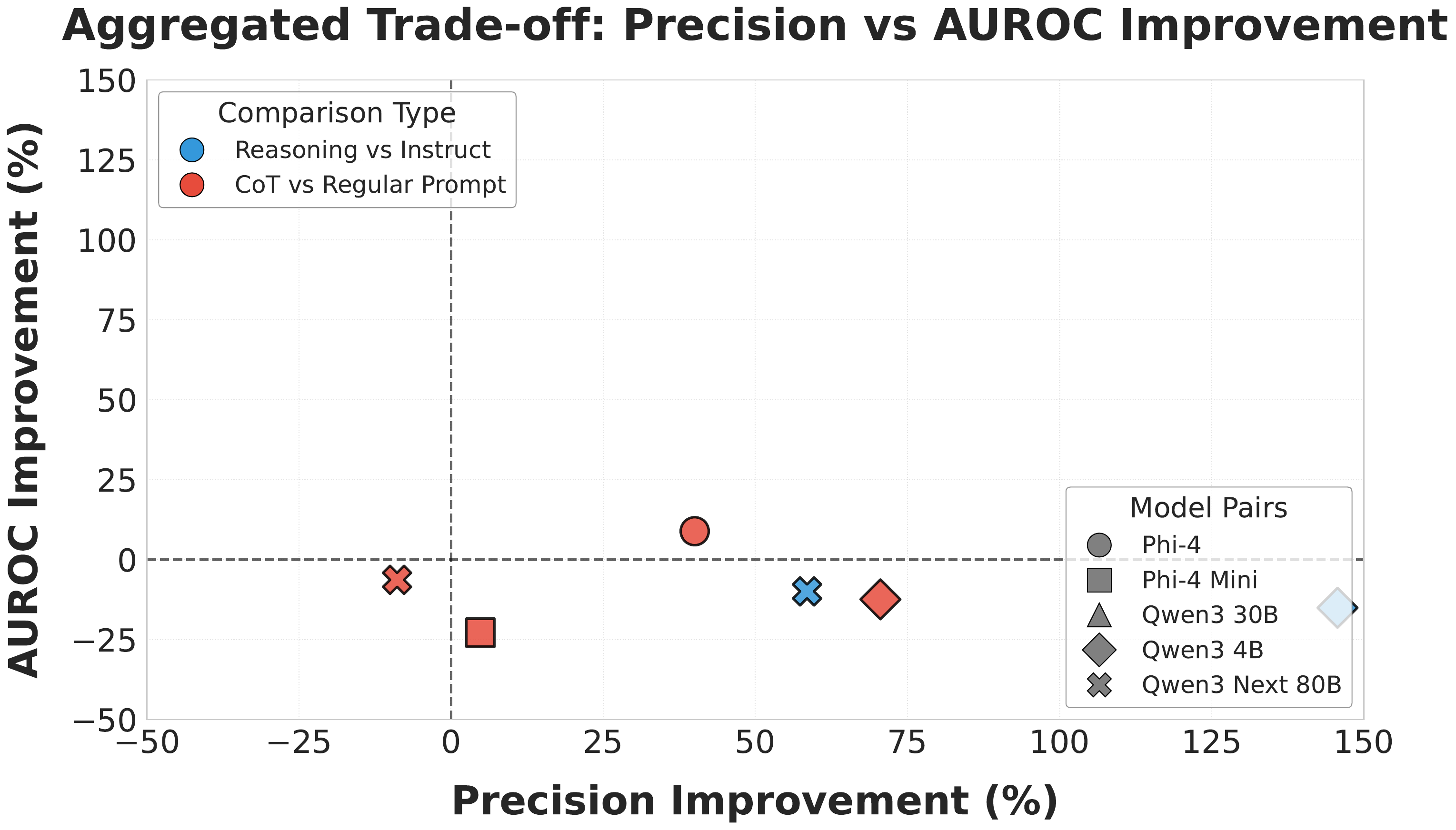}
\end{subfigure}
\par\bigskip
\begin{subfigure}[b]{\columnwidth}
\centering
\includegraphics[width=\linewidth]{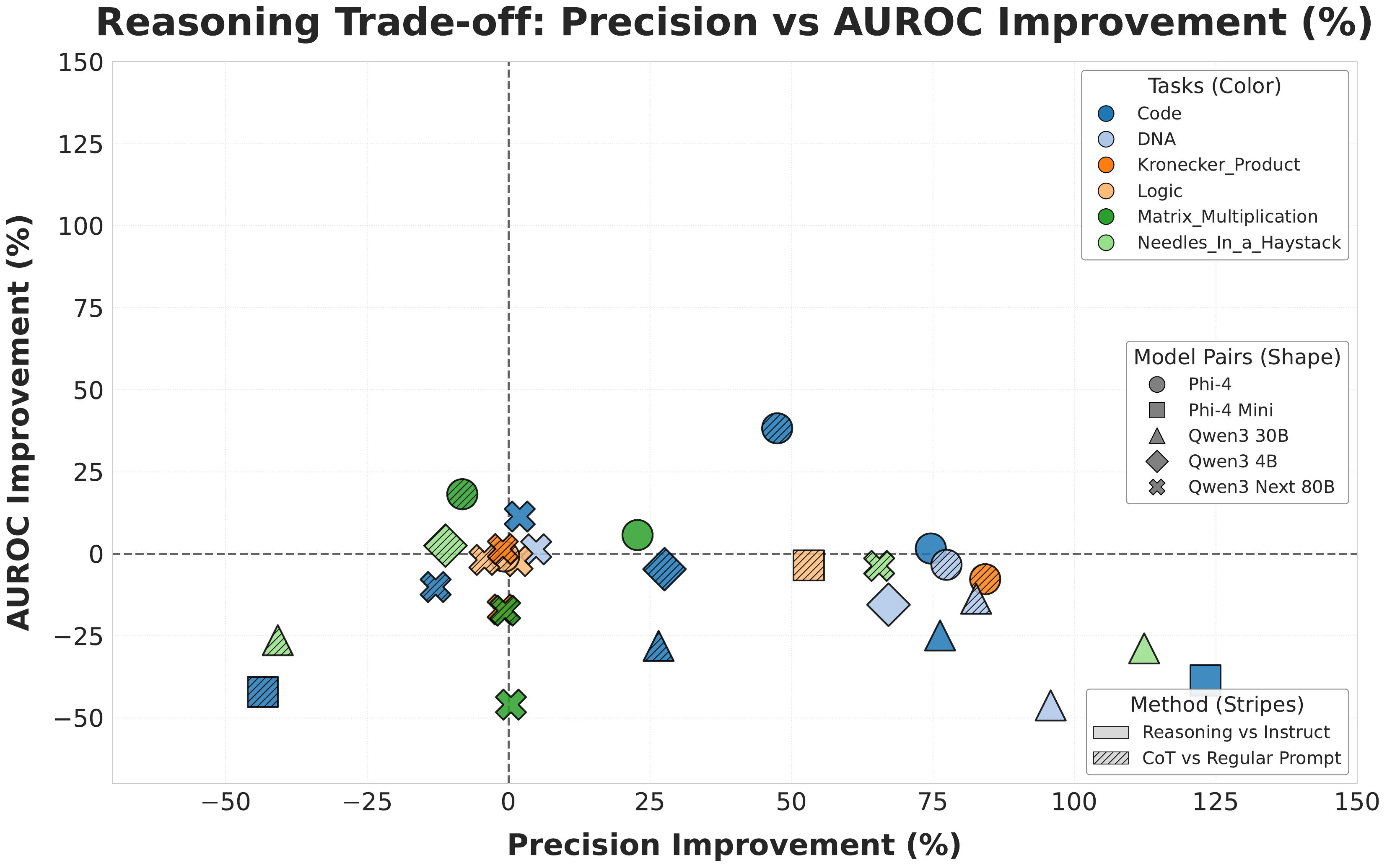}
\end{subfigure}
\caption{Impact of reasoning methods on model performance. Each marker represents the relative change between two identical models, differing only by CoT prompting or reasoning-specific training. Markers above the zero line represent an improvement in that metric relative to the base model's performance, while markers below represent a degradation.}
\label{fig: reasoning_tradeoff_precision vs AUROC}
\end{center}
\vskip -0.2in
\end{figure}

\section{Unit Alignment via Dynamic Programming}
\label{appendix: Atoms Alignment Experimental}

In this section, we explore an alternative evaluation setting that utilizes the Needleman–Wunsch algorithm \citep{NEEDLEMAN1970443} to align generated units $\hat{\mathbf{u}}$ with ground-truth units $\mathbf{u}$. As discussed in Section~\ref{subsec: response alignment}, long-form generations are susceptible to "index shifts" where a single omitted or hallucinated unit causes all subsequent units to be labeled as incorrect under strict indexing, even if they are factually accurate.

The Needleman–Wunsch algorithm, a global alignment technique based on dynamic programming, addresses this by maximizing a similarity score between two sequences. In our implementation, we assign a score of +1 for a match (identical strings) and 0 for mismatches or gaps. This formulation identifies the longest common subsequence while preserving relative ordering, effectively allowing for localized "skips" in the generation.

Figure~\ref{fig: precision improvement - alignment} illustrates the empirical impact of this alignment across all tasks. We observe that dynamic alignment consistently yields higher precision scores compared to strict indexing, with a particularly pronounced impact on tasks like First-Order Logic. This confirms that index shifts do occur and that for certain research applications—specifically those focused on factual retrieval where structural ordering is secondary—the units-alignment setting may be a preferred configuration for SALT.

Despite these improvements in measured performance, we chose to focus our primary analysis on the non-alignment (strict indexing) setting. Our reasoning is rooted in the fundamental definitions of our metrics in Section~\ref{subsec: metrics}. The alignment algorithm is designed to maximize the overlap between sequences, which inherently maximizes recall. However, in doing so, it can obscure the presence of redundant units or structural failures.
For example, consider a ground truth units sequence of $[1, 1, 1, 1, 3]$ and a generated units output of $[1, 3]$. Under strict indexing, the units are compared as pairs $(1, 1)$ and $(1, 3)$, yielding a recall of $20\%$ and precision of $50\%$. By aligning the sequences to match the final digit, recall increases to $40\%$, but precision drops to $40\%$, demonstrating how alignment can artificially shift precision as a byproduct of maximizing recall.
As precision is our primary measure of accuracy—specifically because it captures the model's tendency to hallucinate extraneous information—we believe the non-alignment setting provides a more realistic and rigorous evaluation. In real-world long-form generation, a model is expected to maintain logical and structural consistency on its own; a correctly calculated value placed in the wrong row of a matrix or a logical conclusion appearing out of sequence is often as problematic as a factual error. Therefore, we prioritize strict indexing to ensure that models are penalized for failing to maintain the deterministic structure required by the task.

\begin{figure}[ht]
\vskip 0.2in
\begin{center}
\centerline{\includegraphics[width=0.5\columnwidth]{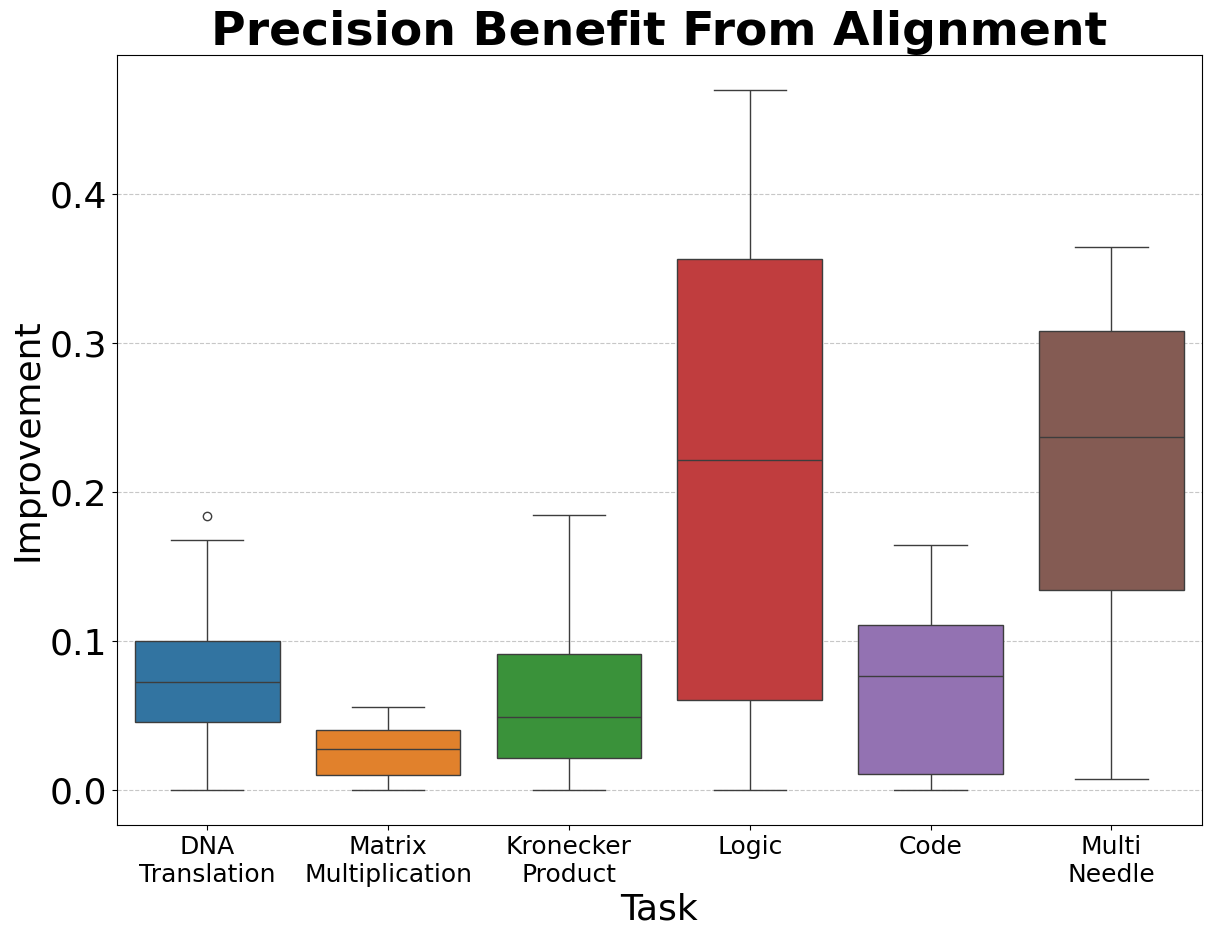}}
\caption{Improvement in precision induced by aligning the generated atomic-units with the ground-truth atomic-units using the Needleman–Wunsch algorithm, across tasks.}
\label{fig: precision improvement - alignment}
\end{center}
\vskip -0.2in
\end{figure}

\section{Generalized Analysis of AUROC Sensitivity to Label Noise}
\label{appendix: auroc_robustness}

A critical advantage of our procedural benchmark is the guarantee of deterministic ground truth ($e=0$). In contrast, non-deterministic benchmarks introduce label noise. Here, we analyze the impact of $e$ label errors on the AUROC metric without assuming a perfect initial ranking. We demonstrate that while the \emph{average} sensitivity scales with model quality, the \emph{worst-case} vulnerability remains critically high regardless of the model's initial performance.

\subsection{Problem Setup}
Let $D = \{(x_i, y_i)\}_{i=1}^N$ be a dataset with $n_1$ Positives and $n_0$ Negatives ($N = n_1 + n_0$). Let $A$ denote the initial true AUROC score of the model.
The AUROC is equal to the normalized Mann-Whitney U statistic:
\begin{equation}
A = \frac{1}{n_1 n_0} \sum_{i \in P} \sum_{j \in N} \mathds{1}(s_i > s_j)
\end{equation}
We analyze the absolute shift $\Delta$ caused by $e$ errors.

\subsection{Average-Case Bound (Generalized)}
In the realistic case of random annotation errors, the expected shift depends heavily on the model's initial quality ($A$).

\begin{theorem}
For a balanced dataset, the expected shift in AUROC caused by $e$ random label errors is proportional to the model's distance from random guessing:
\begin{equation}
\label{eq: label noise effect on auroc - average case}
\mathbb{E}[\Delta] \approx \frac{e}{N} |2A - 1|
\end{equation}
\end{theorem}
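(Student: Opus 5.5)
I will model $e$ random label errors as $e$ flips chosen uniformly at random among the $N$ instances. I will compute the expected change in the Mann--Whitney statistic to first order in $e/N$. Balance means $n_1=n_0=N/2$, and I treat $e\ll N$ so that flips interact negligibly. Write $A=\frac{1}{n_1n_0}\sum_{i\in P}\sum_{j\in N}\mathds{1}(s_i>s_j)$ as above, ignoring ties.

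The key step is to analyse a single flip. Suppose a positive $i$ with score $s_i$ is relabelled negative. Let $r_i=\frac{1}{n_0}\sum_{j\in N}\mathds{1}(s_i>s_j)$ be its fraction of correctly ordered pairs. Removing $i$ from the positives deletes $n_0 r_i$ concordant pairs. Adding it to the negatives creates, against the remaining $n_1-1$ positives, $\sum_{k\in P\setminus\{i\}}\mathds{1}(s_k>s_i)$ concordant pairs. I then recompute the normalised statistic with denominators $(n_1-1)(n_0+1)$ and subtract $A$. The symmetric computation handles a negative flipped to positive.

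Next I average over a uniformly random choice of the flipped instance. The average of $r_i$ over positives is exactly $A$. The average over positives of the fraction of other positives scoring above $s_i$ is about $1/2$. Substituting these gives an expected per-flip shift of order $\frac{1}{N}\bigl(c_1(1/2-A)+c_2(1/2-A)\bigr)$, with constants $c_1,c_2$ depending on the class proportions. With balance these should combine to $\frac{1}{N}(1-2A)$, and I expect the negative-to-positive case to give the same. The flip moves a point from a class in which it ranks like a typical member (AUROC-level concordance) to a class where its concordance is $1-A$ relative to that class. I will double-check this constant carefully: the factor $2$ in $|2A-1|$ must come out of the balanced normalisation, and a slip here changes the constant.

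Finally I sum over the $e$ flips by linearity of expectation. Because each flip perturbs the class sizes and concordance counts only by $O(1/N)$, the interaction terms between flips are $O(e^2/N^2)$ and are absorbed in the ``$\approx$''. The per-flip shifts all have the same sign, $\operatorname{sign}(1-2A)$: noise pulls the AUROC toward $1/2$. Hence $|\mathbb{E}[\Delta]|\approx\frac{e}{N}|2A-1|$. I read the statement's $\mathbb{E}[\Delta]$ as this magnitude; I will note that interpretation explicitly, since the absolute value of the expected shift, not the expectation of the absolute shift, is what the calculation yields. The main obstacle is the single-flip bookkeeping, namely the normalisation change and the within-class rank average of $1/2$. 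I will make that average rigorous by noting that, conditional on the scores, a uniformly chosen positive has uniform rank among the positives.
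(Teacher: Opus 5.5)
Your proposal is correct and is essentially the paper's argument: flipping a positive loses on average $n_0A$ concordant pairs and gains about $n_1/2$ (the point's rank among the remaining positives), so under balance each flip moves the AUROC by about $(1-2A)/N$, and the $e$ flips add up. Your additions only tighten this: the denominator $(n_1-1)(n_0+1)$ gives the exact expected shift $(1/2-A)/(n_0+1)$ for a positive-to-negative flip, hence $(1-2A)/(N+2)$ under balance; the uniform-rank remark replaces the paper's independence assumption; and you rightly note that the result is $|\mathbb{E}[\Delta]|$ rather than $\mathbb{E}|\Delta|$. Drop the heuristic that the flipped point's concordance in its new class is $1-A$: it is $1/2$ on average, as your own averaging step says, and $1-A$ would double the constant.
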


\begin{proof}
Let $A$ be the initial AUROC. By definition, the expected number of Negatives outscored by a random Positive is $n_0 A$.
Consider flipping a random sample $x_k$:
\begin{enumerate}
    \item \textbf{Scenario 1 ($x_k \in P \to N$):} We lose the wins $x_k$ contributed. On average, this is $\mathbb{E}[W_k] = n_0 A$. As a new Negative, $x_k$ is compared to remaining Positives. Assuming independence, a random sample is outscored by half the population, so it contributes $\approx n_1/2$ losses.
    \item \textbf{Scenario 2 ($x_k \in N \to P$):} Symmetric logic applies.
\end{enumerate}
The net change in the numerator for a single flip is the difference between the "wins lost" and "losses gained":
\begin{equation}
\mathbb{E}[\delta_{\text{num}}] \approx \left| \frac{n_1}{2} - n_0 A \right|
\end{equation}
For balanced data ($n_0 = n_1 = N/2$), this simplifies to $\frac{N}{4} |1 - 2A|$. Dividing by the total pairs ($N^2/4$) yields the shift per error: $\frac{|1-2A|}{N}$. Summing for $e$ errors completes the proof.
\end{proof}

\subsection{Worst-Case Bound (Global)}
Unlike the average case, the worst-case bound does \emph{not} diminish for lower-performing models.

\begin{theorem}
Regardless of the initial AUROC $A$, the maximum shift caused by $e$ adversarial errors in a balanced dataset is bounded by:
\begin{equation}
\Delta_{\text{max}} \le \frac{2e}{N}
\end{equation}
\end{theorem}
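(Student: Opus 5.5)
The plan is to bound the effect of a single label flip and then sum over the $e$ flips with the triangle inequality. This uses the pair-counting view of the Mann--Whitney statistic from the average-case theorem, but takes a supremum over configurations where that proof took an expectation. The key point is that a flip of sample $x_k$ only changes comparisons involving $x_k$. Every pair $(i,j)$ with $i,j\neq k$ keeps its labels and scores, so its indicator $\mathds{1}(s_i>s_j)$ is unchanged. This is also why the bound does not depend on $A$: nothing about the unaffected pairs enters the worst case.

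\textbf{Step 1 (one flip).} Take $x_k\in P\to N$; the case $N\to P$ is symmetric. Before the flip, $x_k$ takes part in $n_0$ positive--negative pairs, contributing a win count $W_k\in[0,n_0]$. After the flip, it takes part in $n_1-1$ pairs as a negative, contributing a loss count $L_k\in[0,n_1-1]$. The change in the numerator, $L_k - W_k$ up to orientation, is bounded in absolute value by $\max(n_0,n_1-1)\le N/2$ for balanced data. Moreover, only about $N/2$ of the roughly $N^2/4$ pairs are altered, each by at most one unit of the indicator. Normalizing by the number of pairs $n_1n_0=N^2/4$, as in the average-case proof, gives a per-flip shift of at most
\[
\frac{N/2}{N^2/4}=\frac{2}{N}.
\]
This bound is attained by an adversary: flip a top-scored positive, turning $n_0$ wins into $n_1-1$ losses. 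So no better constant is possible, whatever the initial value of $A$.

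\textbf{Step 2 (accumulation).} Apply Step 1 sequentially. After $t$ flips the dataset stays approximately balanced, since at most $e\ll N$ labels have moved. Hence each successive flip shifts AUROC by at most $2/N$ up to lower-order terms. The triangle inequality then gives $\Delta_{\max}\le \sum_{t=1}^{e}2/N = 2e/N$.

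\textbf{Main obstacle.} The main obstacle is that a flip also changes the denominator, from $n_1n_0$ to $(n_1-1)(n_0+1)$. Treated exactly, AUROC is a reweighted mixture of the common-pair score $A_S$ (over the pairs not involving $x_k$) and the score of $x_k$'s own pairs. Handling both weights crudely gives $1/n_1+1/(n_0+1)\approx 4/N$. To get the stated constant, I would follow the convention already used in the average-case theorem. That convention holds the normalization at $N^2/4$, which is exact to first order in $e/N$. Under it, Step 1 counts only the altered comparisons, each of which can change from a win to a loss. I would state this approximation explicitly in the proof, and note that the exact bound differs by at most a constant factor and by $O(e^2/N^2)$ terms.
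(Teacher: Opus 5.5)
Your argument is essentially the paper's: it too flips a positive ranked above every negative, counts the $n_0$ lost concordant pairs against the fixed normalization $n_1n_0$ to get $\Delta\approx 1/n_1=2/N$ per error for essentially any $A$, and multiplies by $e$. Your Step~1 supplies the upper-bound direction that the paper only asserts through this example, and your triangle-inequality accumulation sits at the same first-order level of rigor.

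Your closing hedge is unnecessary, though. Write $A=A_S+(w-A_S)/n_1$ and $A'=A_S+(w'-A_S)/(n_0+1)$, where $w,w'\in[0,1]$ are the concordant fractions of $x_k$'s pairs before and after the flip. The common score $A_S$ enters the two terms with opposite signs, which gives $|A-A'|\le\max\bigl(1/n_1,\,1/(n_0+1)\bigr)=2/N$ exactly. Doing the same block-wise count jointly for $e_1$ flips $P\to N$ and $e_0$ flips $N\to P$ yields exactly $|\Delta|\le(ne-e_1e_0)/n^2\le 2e/N$ with $n=N/2$. Counting sequentially instead is what lets the growing class imbalance produce your $O(e^2/N^2)$ terms.
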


\begin{proof}
The worst-case attack targets specific outliers rather than the average distribution. Even a model with mediocre global performance ($A \approx 0.5$) may assign a very high score to a single "lucky" Positive instance $x_{\text{max}}$.
If $x_{\text{max}}$ is ranked above all Negatives, flipping it causes the loss of $n_0$ concordant pairs.
The shift is:
\begin{equation}
\Delta \approx \frac{n_0}{n_1 n_0} = \frac{1}{n_1} = \frac{2}{N}
\end{equation}
Since such an outlier arrangement is compatible with almost any global AUROC score (by balancing it with poor rankings elsewhere), the bound $\frac{2e}{N}$ holds globally for any $A$.
\end{proof}

\subsection{Empirical Simulation}
\label{subsec: noise_simulation}
To validate these bounds, we simulated models with True AUROC scores $A \in [0.5, 1.0]$ against random label noise levels ($e/N \in \{0.05, 0.10, 0.20\}$). As shown in Figure~\ref{fig: label-noise effect on auroc}, the simulation (solid lines) aligns with our average-case derivation (dashed lines, Equation~\ref{eq: label noise effect on auroc - average case}), confirming that unbiased noise systematically pulls discriminative performance toward the random baseline of $0.5$.

\begin{figure}[ht]
\vskip 0.2in
\begin{center}
\centerline{\includegraphics[width=0.7\columnwidth]{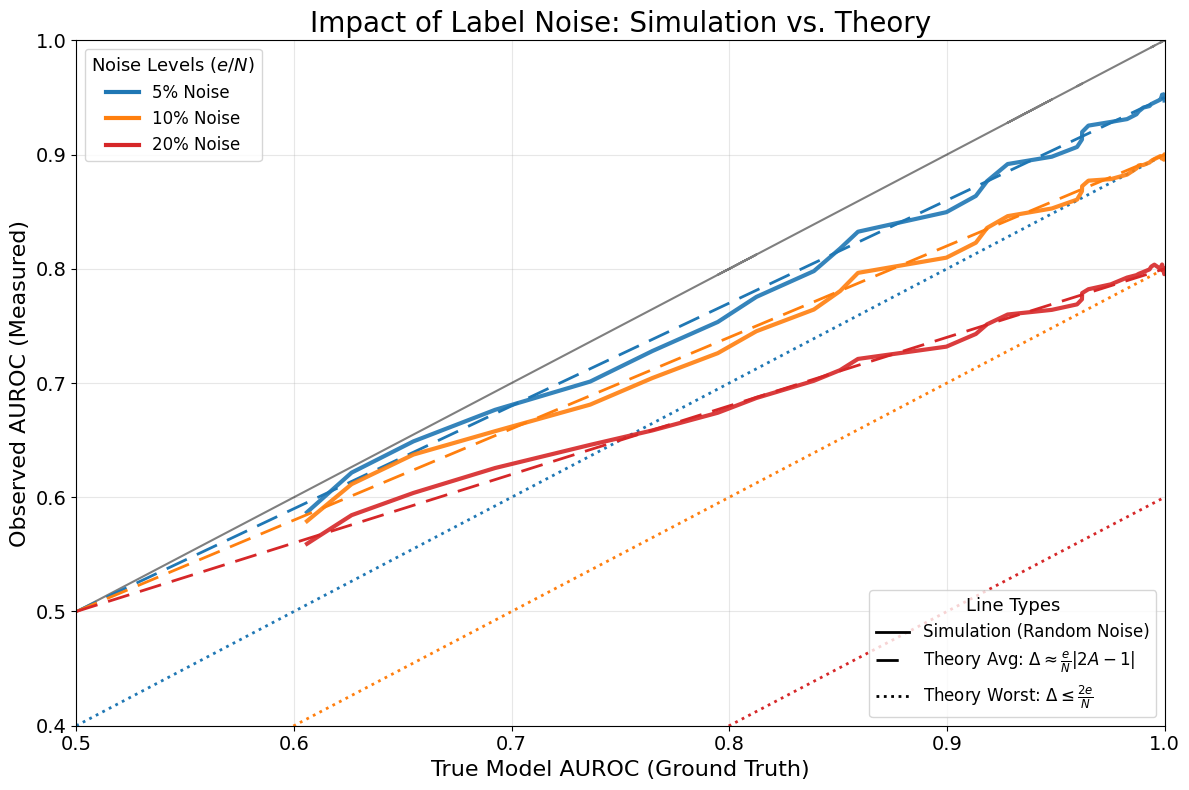}}
\caption{Simulation of AUROC degradation under varying label noise. The results demonstrate a "ceiling effect": as model quality ($A$) increases, the sensitivity to noise grows non-linearly. This noise acts as a compressor, artificially reducing the gap between top-tier models and random baselines, which masks true progress in uncertainty estimation.}
\label{fig: label-noise effect on auroc}
\end{center}
\vskip -0.2in
\end{figure}

\subsection{Implications for Benchmarking}
These findings demonstrate that label noise is not a constant bias but a \emph{quality-dependent compressor}. For models near the random baseline ($A \approx 0.5$), noise has negligible impact. However, for state-of-the-art models ($A \to 1.0$), sensitivity is maximized. In a benchmark with $10\%$ noise, a perfect model would be measured at $0.90$, potentially erasing the performance margins that distinguish leading architectures. By providing a zero-noise environment, SALT ensures that improvements in long-form uncertainty estimation are accurately captured rather than being suppressed by the statistical limitations of the evaluation set.

\section{More on Granularity}
\label{appendix: granularity}
For each input sample, SALT pre-defines a ground-truth \emph{sequence}. While we focus on decompositions to atoms and lines, based on semantic meaning, one could choose any desired decomposition criterion. In fact, SALT can even decompose sequences at the character and token levels.

In Section~\ref{subsec: uncertainty estimation}, we discuss the reversed trends between ECE and AUROC: the former is stable across granularity evaluation levels (see Figure~\ref{fig: ECE atom vs line}), whereas the latter emerges more clearly at the line level. Here, we provide additional empirical evidence for this discrepancy; further analysis of AUROC itself is given in Appendix~\ref{appendix: more on auroc}.

\begin{figure}[ht]
\vskip 0.2in
\begin{center}
\centerline{\includegraphics[width=0.7\columnwidth]{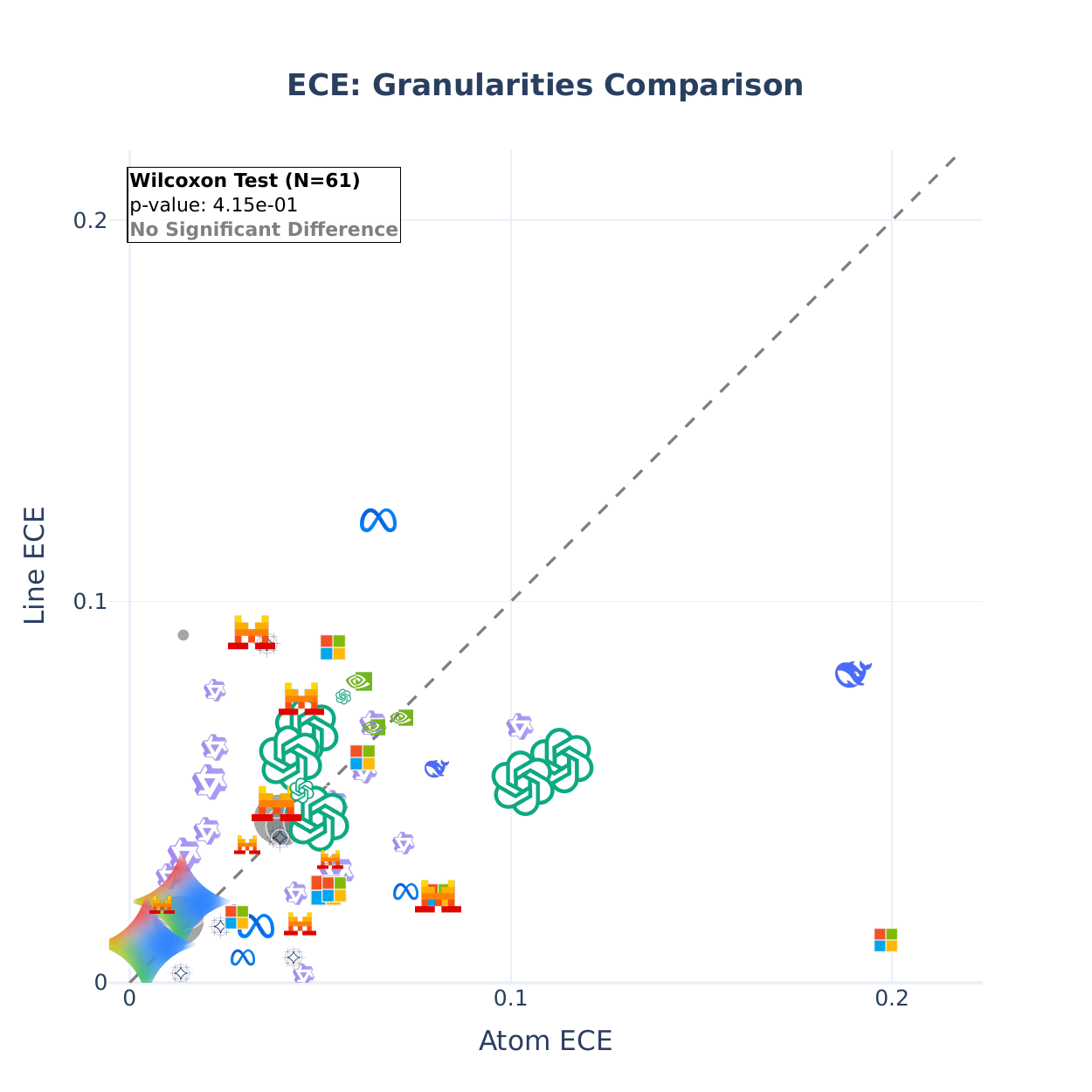}}
\caption{A comparison of the evaluated ECE at the atomic-unit and the line-unit resolutions.}
\label{fig: ECE atom vs line}
\end{center}
\vskip -0.2in
\end{figure}

To investigate the poor ranking performance at fine granularities, we analyze the probability density functions (PDFs) of confidence scores for correct and incorrect atoms. Using the Z-Sigmoid method—which, as a monotonic transformation, faithfully preserves the model's original ranking—\cref{fig: PDFs} reveals a persistent and substantial overlap between these distributions across all models and tasks. This poor separability demonstrates that raw token-level confidence signals are inherently insufficient for high-resolution error identification. These findings highlight a critical need for either stronger auxiliary signals or the development of dedicated confidence functions trained specifically for atomic evaluation units.

\begin{figure}[ht]
\vskip 0.2in
\centering
    \begin{subfigure}[b]{\columnwidth}
        \centering
        \includegraphics[width=0.55\columnwidth]{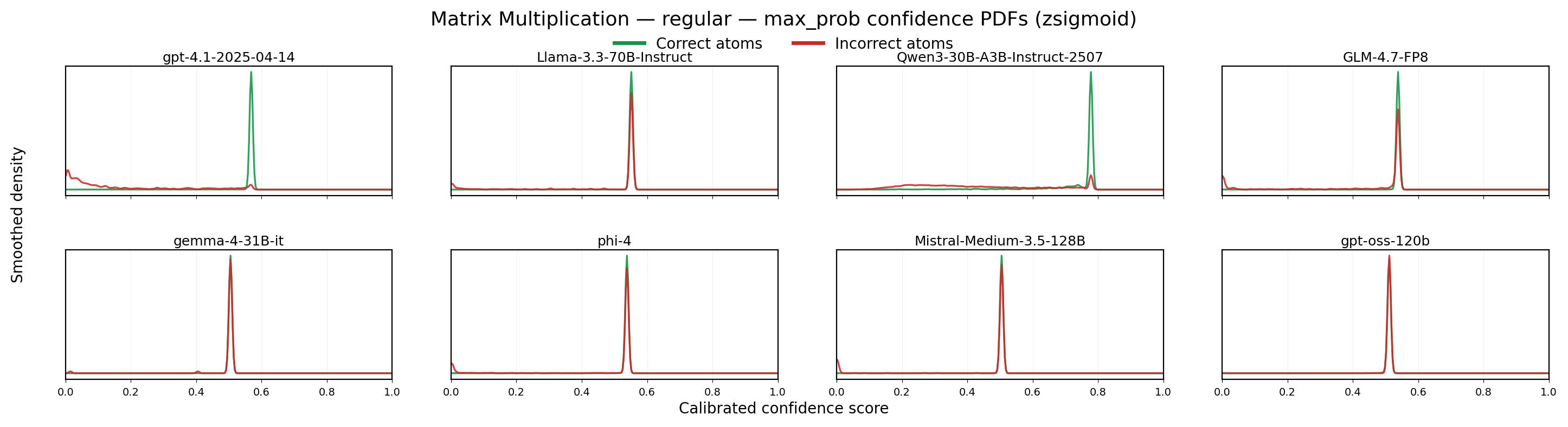}
        \caption{PDFs of correct and incorrect atoms' confidence within the Matrix Multiplication Task}
        \label{subfig: PDFs matmul}
    \end{subfigure}

    \par\bigskip

    \begin{subfigure}[b]{\columnwidth}
        \centering
        \includegraphics[width=0.55\columnwidth]{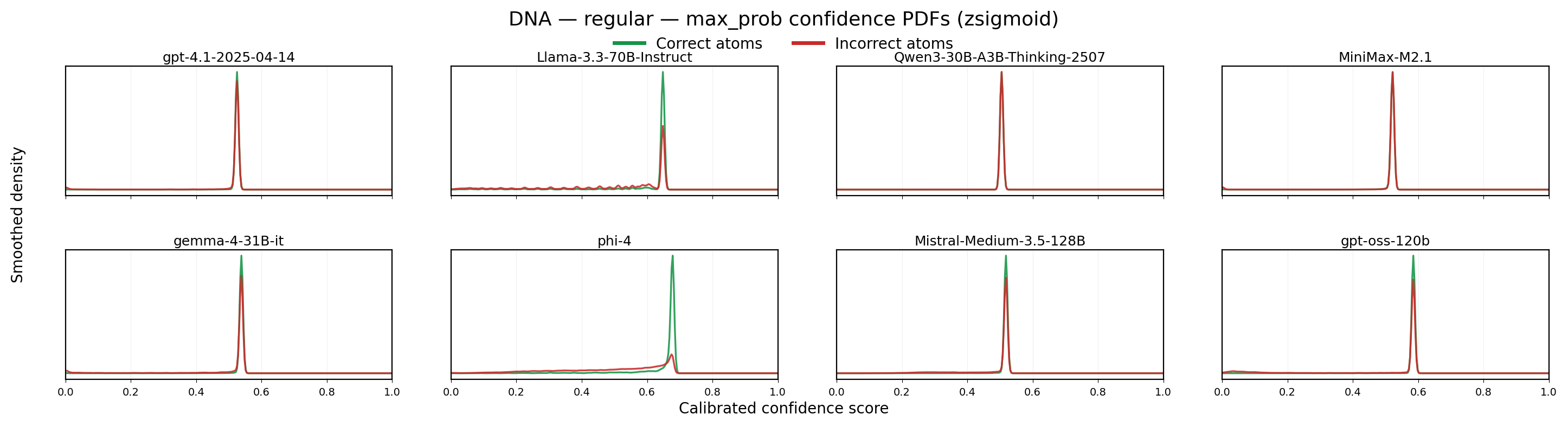}
        \caption{PDFs of correct and incorrect atoms' confidence within the DNA Task}
        \label{subfig: PDFs DNA}
    \end{subfigure}

\caption{
probability density function of z-sigmoid calibrated maximum-probability confidence scores for correct and incorrect atoms on the  Matrix Multiplication (\ref{subfig: PDFs matmul}) and DNA (\ref{subfig: PDFs DNA}) tasks. The substantial overlap between the two distributions across models illustrates weak atomic-level separability, consistent with the observed high-resolution ranking gap.}
\label{fig: PDFs}
\vskip -0.2in
\end{figure}

Figure~\ref{fig: ranking atom vs line tasks} provides a task-level view of the gap between atomic- and line-level ranking. The figure suggests that this gap is not uniform across tasks: for some tasks, such as DNA Translation, the difference between atomic- and line-level ranking appears relatively small, whereas for others, such as Multi-Needle, it is substantially larger.

At this stage, we do not claim a complete explanation for this phenomenon. One possible partial explanation is the semantic dependence structure of the task, which we formalize separately in Appendix~\ref{appendix: atom dependency}. Another distinct factor is positional sensitivity under strict indexing, which we analyze separately in Appendix~\ref{appendix: Atoms Alignment Experimental}. We therefore view the gap between atomic- and line-level ranking as a task-dependent phenomenon that may arise from multiple sources.

\begin{figure}[ht]
\vskip 0.2in
\centering
    \begin{subfigure}[b]{\columnwidth}
        \centering
        \includegraphics[width=0.55\columnwidth]{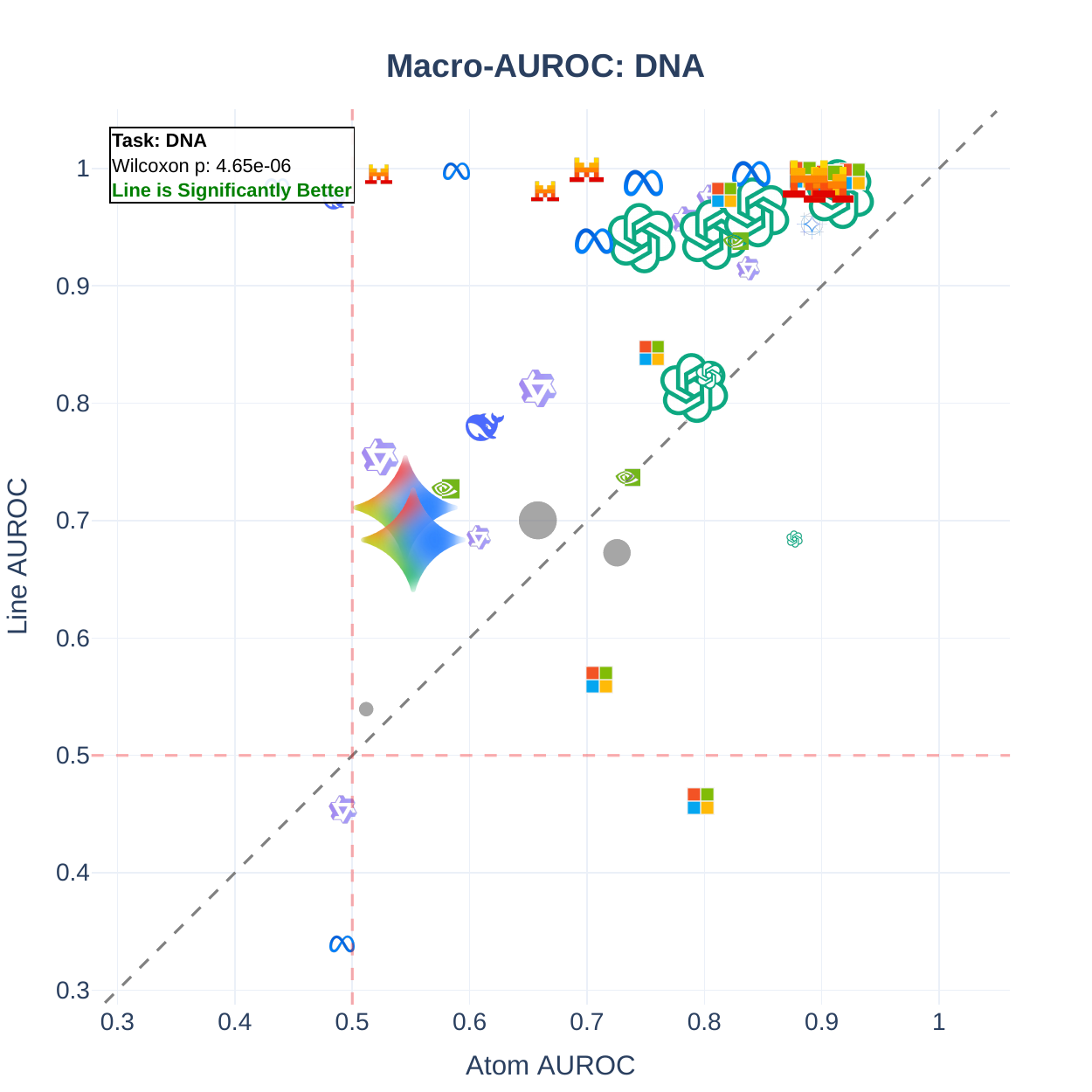}
        \caption{DNA Translation Task}
    \end{subfigure}

    \par\bigskip

    \begin{subfigure}[b]{\columnwidth}
        \centering
        \includegraphics[width=0.55\columnwidth]{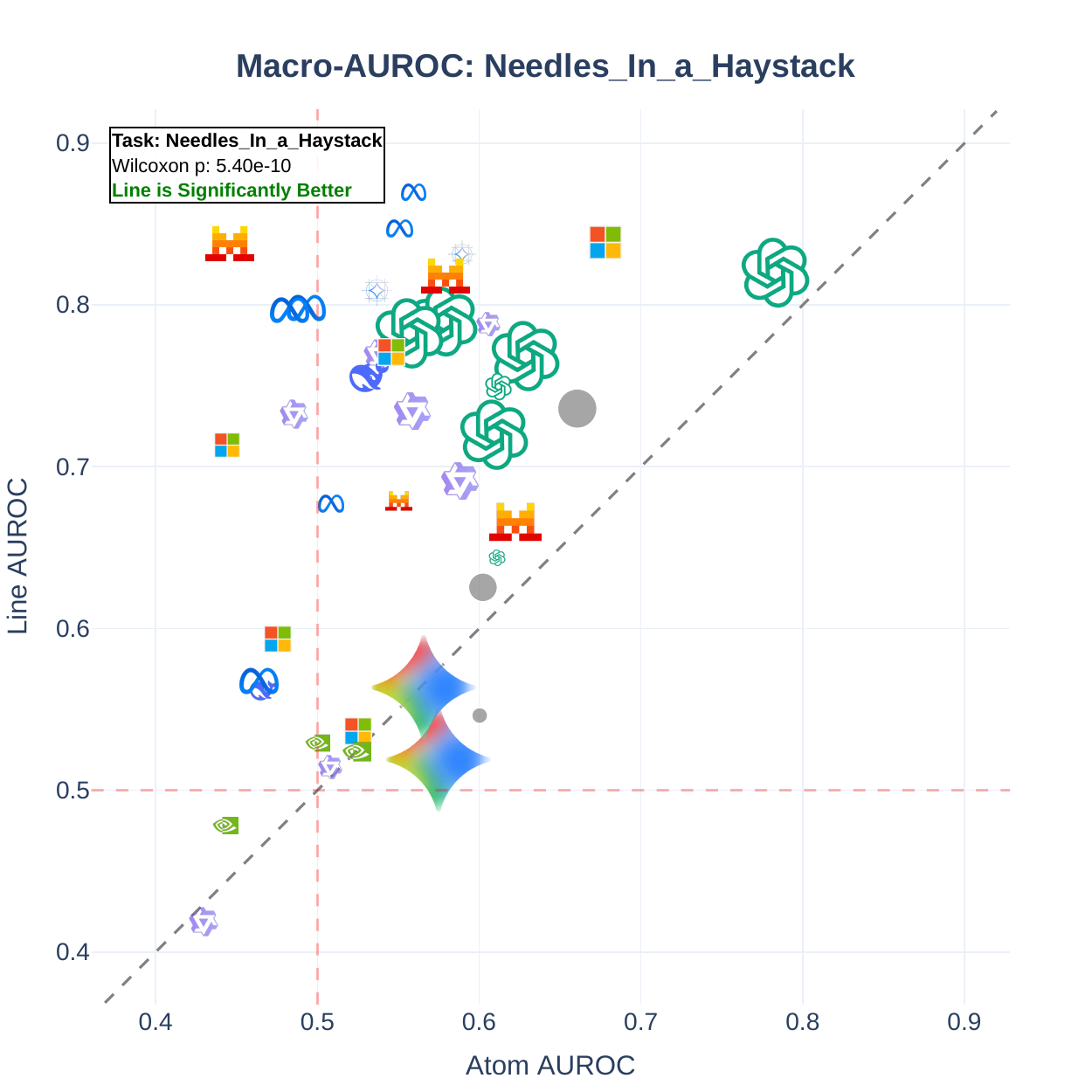}
        \caption{Multi-Needle Task}
    \end{subfigure}

\caption{Task-level view of the gap between atomic- and line-level ranking. The difference is smaller in DNA Translation than in Multi-Needle, suggesting that the atom-vs.-line ranking gap depends on task structure rather than arising uniformly across tasks.}
\label{fig: ranking atom vs line tasks}
\vskip -0.2in
\end{figure}

\section{Semantic Dependence Across Tasks}
\label{appendix: atom dependency}

Building on the task design described in Section~\ref{subsec: benchmark tasks}, this section provides a formal characterization of the semantic dependence within each SALT task.
Using the notation of Section~\ref{subsec: granularity}, let $\mathbf{x}$ be the input and let $\mathbf{u} = (u_1, \dots, u_{U_{\text{gt}}})$ be the ground-truth sequence of evaluation units. Let the input be partitioned into task-specific spans
$\mathbf{s}(\mathbf{x}) = (s_1, \dots, s_K)$,
where a span may correspond, for example, to a codon in DNA Translation, a row or column fragment in Matrix Multiplication, or a candidate item in Multi-Needle.

In this appendix, we focus only on the \emph{semantic} dependence structure of target evaluation units. Namely, we ask which input spans and which prior units are required in order to determine the correctness of a future unit. Positional effects caused by insertions or deletions under strict indexing are conceptually distinct, and are discussed separately in Appendix~\ref{appendix: Atoms Alignment Experimental}.

\paragraph{Semantic dependence of a target unit.}
Consider a target unit $u_i$. Let
\[
S_i = \{k_1, \dots, k_r\} \subseteq \{1, \dots, K\}
\]
be a set of input-span indices, and let
\[
G_i = \{q_1, \dots, q_m\} \subseteq \{1, \dots, i-1\}
\]
be a set of indices of prior ground-truth output units. We say that $(S_i, G_i)$ is \emph{sufficient} for $u_i$ if there exists a deterministic task-specific function $\phi_i$ such that
\[
u_i = \phi_i\!\big((s_k)_{k \in S_i}, (u_q)_{q \in G_i}\big).
\]
That is, the correct value of $u_i$ can be determined from the input spans indexed by $S_i$ together with the previously established target units indexed by $G_i$.

This yields a preliminary notion of semantic dependence. The set $S_i$ captures the \emph{input-span dependence} of $u_i$, namely how much of the input is needed in order to determine it. The set $G_i$ captures the \emph{logical output dependence} of $u_i$, namely whether the correct value of $u_i$ depends on previously established target units.

When $G_i = \emptyset$, the target unit is \emph{output-independent}: in principle, it is determined solely from the input spans in $S_i$. When $G_i \neq \emptyset$, the target unit is \emph{logically output-dependent}: its correct value depends on one or more prior target units.

\paragraph{Task-level interpretation.}
This distinction helps characterize the tasks in SALT.

In \textbf{DNA Translation}, each unit depends only on a small local input span, e.g., a specific codon or nucleotide span, and does not depend on prior output units. Thus, for every $u_i$, one may take $G_i = \emptyset$, while $S_i$ is small.

In \textbf{Kronecker Product}, each unit is determined by a small local subset of the input: specifically, by the relevant entry of the first matrix, the relevant entry of the second matrix, and the fixed Kronecker-product rule. Thus, the task exhibits input-span dependence, but it is relatively local compared to Matrix Multiplication. The output units are logically independent of one another, and therefore one may take $G_i$ to be empty for all $i$.

In \textbf{Matrix Multiplication}, each unit may depend on a broader subset of the input, such as an entire row and column, so $S_i$ can be substantially larger than in DNA. However, the output units are still logically independent of one another: each matrix element is determined directly from the input matrices, and therefore $G_i = \emptyset$ for all $i$.

In \textbf{Code}, each unit is determined by the program logic together with the specific input instance on which the program is evaluated. Depending on the program, this may require reasoning over a relatively broad subset of the input code, so the input-span dependence can be moderate or large. However, under the task formulation, the target units do not semantically depend on previously established output units, and therefore one may typically take $G_i$ to be empty.

In \textbf{Multi-Needle}, the dependence on the input is broader and more order-sensitive. The target unit $u_i$ corresponds to the $i$-th matching item induced by the input passage, so determining it may require identifying and ordering a larger set of relevant input spans. Thus, compared to DNA, Multi-Needle exhibits stronger input-span dependence. However, this does not necessarily imply logical dependence on prior target units: a model may miss an earlier item and thereby shift later generated items, but this is better viewed as a positional or generation-time effect rather than as semantic dependence of the ground-truth target units themselves.

In \textbf{First-Order Logic}, the dependence is stronger in a different sense. If the correct value of $u_i$ can only be derived after first establishing prior units $u_q, \dots, u_p$, then these units belong to $G_i$, and therefore $u_i$ is logically output-dependent. In addition, because the final answer must list variables in ascending order, the task is also sensitive to positional mistakes, but that effect is not part of the semantic dependence notion defined here.

Thus, different tasks expose different semantic dependence profiles: DNA Translation has low input-span dependence and no logical output dependence; Matrix Multiplication has broader input-span dependence but still no logical output dependence; Multi-Needle has broader, order-sensitive input-span dependence; and First-Order Logic may depend both on the input and on prior output units.

\paragraph{Interpretation.}
This notion may be relevant for interpreting the correctness-dynamics analysis in the main text. In tasks with non-empty $G_i$, an early semantic mistake may affect the correctness of later units because later units may rely on previously derived content. By contrast, in tasks such as DNA Translation and Matrix Multiplication, where $G_i = \emptyset$, the correctness of the current target unit is determined by the input alone rather than by earlier target units.

This semantic distinction may also be relevant when interpreting the atom vs. line ranking gap discussed in Appendix~\ref{appendix: granularity}. However, we do not view semantic dependence as a complete explanation of that phenomenon.

\end{document}